\journal{}
\DeclareMathOperator*{\mini}{min.}
\DeclareMathOperator*{\lexmin}{lexmin.}
\definecolor{light-gray}{gray}{0.95}
\definecolor{dark-gray}{gray}{0.5}
\definecolor{mygray}{gray}{0.75}
\newcommand{\BIN}{\begin{bmatrix}}
\newcommand{\BOUT}{\end{bmatrix}}
\newcommand{\hw}{\hat{w}}
\newcommand{\tK}{\tilde{K}}
\newcommand{\tL}{\tilde{\mathcal{L}}}
\newcommand{\hz}{\hat{z}}
\newcommand{\bb}{\breve{b}}
\newcommand{\tA}{\tilde{A}}
\newcommand{\mA}{{\mathcal{A}}}
\newcommand{\mI}{{\mathcal{I}}}
\newcommand{\bE}{{\mathbb{E}}}
\newcommand{\bI}{{\mathbb{I}}}
\definecolor{orange}{rgb}{0.99,0.69,0.07}
\definecolor{lightgray}{gray}{0.85}
\definecolor{light-gray}{gray}{0.95}
\definecolor{dark-gray}{gray}{0.5}
\tikzset{cross/.style={cross out, draw=black, minimum size=2*(#1-\pgflinewidth), inner sep=0pt, outer sep=0pt},
cross/.default={1pt}}
 \newcommand\fs@spaceruled{\def\@fs@cfont{\bfseries}\let\@fs@capt\floatc@ruled
   \def\@fs@pre{\vspace{5pt}\hrule height.8pt depth0pt \kern2pt}%
   \def\@fs@post{\kern2pt\hrule\relax}%
   \def\@fs@mid{\kern2pt\hrule\kern2pt}%
   \let\@fs@iftopcapt\iftrue}
\DeclareMathOperator*{\argmin}{arg\,min}
\begin{document}

\title{Efficient Lexicographic Optimization for Prioritized Robot Control and Planning}

\author[1]{Kai Pfeiffer}

\author[2,3]{Abderrahmane Kheddar}

\authormark{Pfeiffer \textsc{et al.}}
\titlemark{Efficient Lexicographic Optimization for Prioritized Robot  Control and Planning}

\address[1]{\orgdiv{Schaeffler Hub for Advanced Research}, \orgname{School of Mechanical and Aerospace Engineering}, \orgaddress{\state{Nanyang Technological University}, \country{Singapore}}}

\address[2]{\orgdiv{Joint Robotics Laboratory (JRL) UMI3218/RL}, \orgname{CNRS/AIST}, \orgaddress{\state{Tsukuba}, \country{Japan}}}

\address[3]{\orgdiv{Interactive Digital Human}, \orgname{University of Montpellier}, \orgaddress{\state{CNRS, LIRMM, UMR5506}, \city{Montpellier}, \country{France}}}

\corres{Kai Pfeiffer. \email{kaipfeifferrobotics@gmail.com}}



\abstract[Abstract]{In this work, we present several tools for efficient sequential hierarchical least-squares programming (S-HLSP) for lexicographical optimization tailored to robot control and planning. As its main step, S-HLSP relies on approximations of the original non-linear hierarchical least-squares programming (NL-HLSP) to a hierarchical least-squares programming (HLSP) by the hierarchical Newton's method or the hierarchical Gauss-Newton algorithm. We present a threshold adaptation strategy for appropriate switches between the two. This ensures optimality of infeasible constraints, promotes numerical stability when solving the HLSP's and enhances optimality of lower priority levels by avoiding regularized local minima. 
	We introduce the solver  {\hbox{$\mathcal{N}$\hspace{-2pt}ADM$_2$}}, an alternating direction method of multipliers for HLSP based on nullspace projections of active constraints. The required basis of nullspace of the active constraints is provided by a computationally efficient turnback algorithm for system dynamics discretized by the Euler method. It is based on an upper bound on the bandwidth of linearly independent column subsets within the linearized constraint matrices. Importantly, an expensive initial rank-revealing matrix factorization is unnecessary. We show how the high sparsity of the basis in the fully-actuated case can be preserved in the under-actuated case.
	\hbox{$\mathcal{N}$\hspace{-2pt}ADM$_2$} consistently shows faster computations times than competing off-the-shelf solvers on NL-HLSP composed of test-functions and whole-body trajectory optimization for fully-actuated and under-actuated robotic systems. 
	We demonstrate how the inherently lower accuracy solutions of the alternating direction method of multipliers can be used to warm-start the non-linear solver for efficient computation of high accuracy solutions to non-linear hierarchical least-squares programs.}

\keywords{optimisation,
	robots,
	nonlinear programming,
	hierarchical systems,
	discrete time systems,
	optimal control}

\jnlcitation{\cname{%
\author{Pfeiffer K.},
\author{Kheddar A.}}.
\ctitle{Efficient Lexicographic Optimization for Prioritized Robot Control and Planning} \cjournal{\it J Comput Phys.} \cvol{2024;00(00):1--18}.}

\maketitle

\renewcommand\thefootnote{}
\footnotetext{\textbf{Abbreviations:} HLSP, hierarchical least-squares programming; NL-HLSP, non-linear hierarchical least-squares programming; S-HLSP, sequential hierarchical least-squares programming.}

\renewcommand\thefootnote{\fnsymbol{footnote}}
\setcounter{footnote}{1}

\section{Introduction} 

\subsection{Context and contribution}
Lexicographic multi-objective optimization (LMOO) is the hierarchical stacking of $p$ optimization problems~\cite{Sherali1983} ($\lexmin$: lexicographically minimize)
	\begin{align}
	\lexmin_{x,v}\qquad & \Vert v_{\mathbb{C}_1}\Vert_g, \dots , \Vert v_{\mathbb{C}_p}\Vert_g
	\label{eq:lmoo}\tag{LMOO}\\
	\text{s.t}\qquad& f_{\mathbb{C}_{\cup p}}(x) \leqq v_{\mathbb{C}_{\cup p}}\nonumber
\end{align}
The symbol $\leqq$ summarily describes equality and inequality constraints in the constraint set $\mathbb{C}$. The symbol $\cup$ represents the union of constraint sets from levels 1 to $p$ as $\mathbb{C}_{\cup p} \coloneqq \mathbb{C}_1 \cup \dots \mathbb{C}_{p}$. The function $f_{\mathbb{C}}(x)\in\mathbb{R}^{\vert\mathbb{C}\vert}$ in dependence of the variable vector $x\in\mathbb{R}^n$ represents the constraint set $\mathbb{C}$.
Such problems are characterized by the optimal infeasibility (slacks $v$) $\Vert v_{\mathbb{C}_{\cup l-1}}^* \Vert_g > 0$ or optimality $ v_{\mathbb{C}_{\cup l-1}}^* = 0$ of higher priority levels 1 to $l-1$, which must be preserved by the lower priority levels $l$ to $p$. The infeasibility is thereby optimal / minimal with respect to some norm $g\geq 1$. Our above formulation of~\ref{eq:lmoo} is a modification of the classical one as described in~\cite{Lai2022} to include (feasible and infeasible) inequality constraints.
 A specific form of~\ref{eq:lmoo} is non-linear hierarchical least-squares programming (NL-HLSP) with $g=2$. NL-HLSP's have been commonly utilized in instantaneous robot feed-forward~\cite{escande2014} and feed-back~\cite{Djeha2023} control. This enables an intuitive control formulation as no weights between different constraints need to be tuned. Furthermore, robot safety and physical stability is enhanced as critical constraints of different importance are strictly separated from control objectives like reaching tasks. 
Instantaneous prioritized robot control can be solved by a real-time and anytime algorithm modification of sequential hierarchical least-squares programming (S-HLSP) for NL-HLSP~\cite{pfeiffer2023}. It utilizes the current hierarchical least-squares programming (HLSP) approximation of the original NL-HLSP to deduce a new robot control step. By utilizing trust-region constraints or regularization, the validity of the approximation is maintained at the current robot state. In the recent work~\cite{pfeiffer2024}, S-HLSP has been leveraged for the resolution of NL-HLSP representing optimal control or trajectory optimization problems. Here, not only one instance, but a longer horizon of the robot control and state is considered. This enables robots to achieve a wide range of motions by reasoning in anticipating fashion about its physical and mechanical limits~\cite{meduri2022}. 
In this work, we propose a fast hierarchical least-squares programming (HLSP) solver based on the alternating direction method of multipliers (ADMM). The proposed HLSP solver is computationally more efficient than other solver methods when the number of iterations is limited. Sparse nullspace projections  are leveraged to eliminate structured constraints like dynamics equations in optimal control scenarios. The nullspace basis is based on an efficient implementation of the turnback algorithm~\cite{kaneko1982} with an upper bound on the bandwidth in case of multiple-shooting transcription by Euler integration. At the same time, we are able to handle multi-stage constraints like regularization of momentum evolution for safe robot control. This is a distinguishing factor to recursive methods like differential dynamic programming (DDP, \cite{ddp1966}). The efficiency of the proposed methods is evaluated on non-linear test-functions and robot trajectory planning.

\subsection{Non-linear programming}
Non-linear programming (NLP) is a broad classification of optimization problems and includes non-linear and smooth convex and non-convex constraints and objectives~\cite{nocedal2006}. It is a special form of~\ref{eq:lmoo} with $p=2$, typically feasible constraints $v_1=0$ and $\vert \mathbb{C}_2\vert = 1$ such that the norm notation can be omitted. Solution approaches typically involve the repeated approximation of the original non-linear program to a simpler one at the current working point. Inequality constraints are typically recast by including penalty terms in the cost function. The primal-dual interior-point method is characterized by a barrier functions with primal penalization towards the boundary of the feasible region~\cite{Forsgren2002}. Exact penalty functions like non-smooth indicator functions have been investigated for example in the context of augmented Lagrangian methods. Here, infeasibilities are infinitely penalized outside of the primal feasible region, and not penalized otherwise~\cite{Hestenes1969}. The augmented Lagrangian can for example be used within the ADMM. It consists of alternating updates of the primal and the dual~\cite{boyd2011}. 
Another solution approach can be found in sequential quadratic programming (SQP)~\cite{boggs1995}. The original non-linear optimality conditions are approximated to second order by Newton's method. The resulting quadratic program (QP) sub-problem is then iteratively solved for a primal and dual sub-step. 
In all methods above, convergence relies on globalization methods to direct the approximate sub-steps in terms of infeasibility reduction and optimality. Filter methods with trust region constraints are popular in SQP~\cite{fletcher2002b}. The trust-region constraint maintains validity of the QP sub-problems by limiting the step-size. On the other hand, line search methods directly curtail the resulting step in order to fulfill for example Armijo's condition~\cite{Armijo1966}. Line search in combination with a filter method has been proposed for the interior-point method~\cite{ipopt}.

\subsection{Prioritized robot control and  planning}
NL-HLSP's can exhibit sparsity patterns, for example resulting from discrete optimal control problems. 
We consider discretization by direct transcription methods, namely numerical integration by the Euler method.
Due to stage-wise variable dependency of the constraints, the resulting optimality conditions exhibit block-diagonal structure. Exploiting this sparsity is critical in order to maintain linear complexity in the control horizon length~\cite{wangboyd2010}. 
Differential dynamic programming methods~\cite{ddp1966} are a popular tool to leverage such sparse problem formulations. They are very efficient due to low bandwidth only dependent on the number of the input controls. Originally developed for unconstrained systems, in recent years many developments have been proposed for constrained ones. These include interior-point method~\cite{Pavlov2021} and augmented Lagrangian~\cite{jallet2023} based approaches.
Prioritized trajectory optimization has been treated for example in~\cite{Geisert2017} which solves a hierarchy of quadratic programs each projected into the nullspace of the previous level. Sparsity of the constraints is exploited by leveraging DDP. The approach in~\cite{Wang2024} explicitly considers the active constraints in order to preserve the hierarchical ordering. Using principles from time-delay systems, linear and robust controls result from quadratic program solutions at each instance of a model-predictive controller. Both approaches can only handle input and state limit constraints. This is in contrast to LMOO as proposed in~\cite{Tazaki2014}, which is based on prioritized Pareto efficiency. However, sparsity is not exploited.
In this work, we propose a sparse solver for prioritized trajectory optimization cast as NL-HLSP under multi-stage equality and inequality constraints. Constraints can involve variables from several stages, for example regularization of the momentum evolution for safe robot control. Such problems can be solved by off-the-shelf sparse non-linear solvers~\cite{ipopt}. However, higher efficiency can be achieved with more dedicated solvers like the aforementioned prioritized solvers based on the reduced Hessian formulation~\cite{pfeiffer2024}. Here, the right choice of nullspace basis reduces the number of variables (dense programming) or non-zeros (sparse programming). This offsets the computational burden of computing a basis of the nullspace. Several sparsity preserving bases of nullspace have been proposed both in the communities of structural mechanics and control theory. The authors in~\cite{topcu1979} described a sparsity preserving basis based on identifying linearly independent sub-sets. These arise due to the limited bandwidth of the blocks. Several improvements have been proposed. The work in~\cite{Gilbert1987} proposes sparsity enhancing improvements. A more efficient computation leveraging columns updates of the linearly independent sub-matrices is described in~\cite{pfeiffer2024}. A control theoretic approach has been developed for example in~\cite{Yang2019}.

\subsection{Overview}
This article is structured as follows. First, we describe the current state-of-the-art and our contributions in non-linear hierarchical least-squares programming and prioritized non-linear optimal control (Sec.~\ref{sec:probdef}). We then introduce a heuristic for adjusting the threshold for second-order information (Sec.~\ref{sec:epsadapt}). This promotes numerical stability when solving the HLSP sub-problems and avoids local regularized minima of lower priority levels. 
We develop the HLSP solver {\hbox{$\mathcal{N}$\hspace{-2pt}ADM$_2$}} based on the ADMM, see Sec.~\ref{sec:nadmm}. We present how to tune parameters of the ADMM for algorithmic efficiency (Sec.~\ref{sec:stepsize}) and make some considerations towards warm-starting (Sec.~\ref{sec:warmstart}) and the computation of dual variables (Sec.~\ref{sec:lagact}). Next, we design an efficient implementation of the turnback algorithm for the computation of nullspace basis of banded matrices. Specifically, we consider dynamics discretized by Euler integration and derive an upper bound on the bandwidth of the resulting nullspace basis (Sec.~\ref{sec:turnback}). We show that the algorithm can be highly parallelized, which is a distinguishing factor in comparison to recursive methods like the DDP (Sec.~\ref{sec:threads}).


\section*{Nomenclature}
\begin{itemize}[align=parleft,labelwidth=2.5cm,itemindent=2.2cm]
	\item[$l$] Current priority level
	\item[{$p$}] Overall number of priority levels, excluding the trust region constraint on $l=0$
	\item [{$n$}] Number of variables
	\item [{$r$}] Rank of matrix 
	\item [{$n_r$}] Number of remaining variables after nullspace projections 
	\item [{$m$}] Number of constraints 
	\item [{$x\in\mathbb{R}^{n}$}] Primal of NL-HLSP 
	\item [{$\Delta x\in\mathbb{R}^{n}$}] Primal of HLSP 
	\item [{$\Delta z\in\mathbb{R}^{n_r}$}] Primal of projected HLSP 
	\item [{$\Delta \hat{z}\in\mathbb{R}^{n_r}$}] Auxiliary primal of projected HLSP 
	\item [{$f(x)\in\mathbb{R}^{m}$}] Non-linear constraint function of variable vector $x\in\mathbb{R}^n$ 
	\item [{$\mathbb{E}_l$}] Set of $m_{\mathbb{E}}$ equality constraints (eq.)   of level $l$
	\item [{$\mathbb{I}_l$}] Set of $m_{\mathbb{I}}$ inequality constraints (eq.)  of level $l$
	\item [{$\mathcal{I}_l$}] Set of $m_{\mathcal{I}}$ inactive inequality constraints (ineq.)  of level $l$ 
	\item [{$\mathcal{A}_l$}] Set of $m_{\mathcal{A}}$ active equality and inequality constraints of level $l$
	\item [{$\mathbb{E}_{\cup l}$ (or ${\mathcal{E}}_{\cup l}$)}] Set union $\mathbb{E}_{\cup l}\coloneqq\bigcup_{i=1}^l \mathbb{E}_i= \mathbb{E}_1 \cup \cdots \cup \mathbb{E}_l$ with $m_{{\mathbb{E}}_{\cup l}}$ constraints
	\item [{$A_{\mathbb{E}}\in\mathbb{R}^{m_{\mathbb{E}}\times n}$}] Matrix  representing a set $\mathbb{E}$  of $m_{\mathbb{E}}$ linear constraints 
	\item [{$b_{\mathbb{E}}\in\mathbb{R}^{m_{\mathbb{E}}}$}] Vector representing a set $\mathbb{E}$ of $m_{\mathbb{E}}$ linear constraints 
	\item [{$\mathcal{N}(A_{\mathcal{A}_{l}})$}] Operator to compute the nullspace basis $Z_{\mA_{l}}$ and the rank $r$ of a matrix $A_{\mathcal{A}_{l}}$
	\item [{$Z_{\mA_{l}}\in\mathbb{R}^{n\times n_r}$}] Nullspace basis of matrix $A_{\mathcal{A}_{l}}\in\mathbb{R}^{m_{\mathcal{A}_{l}}\times n}$ with rank $r$, $n_r = n-r$ and $A_{\mathcal{A}_{l}}Z_{\mA_{l}}=0$ 
	\item [{$N_{l-1}\in\mathbb{R}^{n\times n_r}$}] Accumulated nullspace basis $N_{\mathcal{A}_{\cup l}} = Z_{\mA_{1}} \dots Z_{\mA_{l}}$ 
	\item [{$\tilde{M}\in\mathbb{R}^{m\times n_r}$}] Matrix $\tilde{M} = MN$ projected into the nullspace basis $N\in\mathbb{R}^{n\times n_r}$ of a matrix $A\in\mathbb{R}^{m\times n}$ of rank $r$; $n_r=n-r$  (variable elimination)
	\item [{$v\in\mathbb{R}^{m}$}] Slack variable of NL-HLSP
	\item [{$\hat{v}\in\mathbb{R}^{m}$}] HLSP equivalent of slack variable 
	\item [{$v^*\in\mathbb{R}^{m}$}] Optimal slack variable 
	\item [{$\mathcal{L}$}] Lagrangian 
	\item [{$K$}] Gradient of Lagrangian $K \coloneqq \nabla \mathcal{L}$
	\item [{$H_l$, $\hat{H}_l$}] Hierarchical Lagrangian Hessian, positive definite equivalent
	\item [{$\rho$, $\sigma$}] ADMM step-size parameter
	\item [{$\lambda\in\mathbb{R}^{{m}}$}] Lagrange multiplier  
	\item [{$\upsilon$}] Scaled Lagrange multiplier 
	\item [{$k$}] Outer iteration of NL-HLSP solver 
	\item [{$\iota$}] Inner iteration of HLSP solver 
	\item [{$\rho$}] Trust region radius 
	\item [{$\nu$}] Activation threshold of inequality constraints  
	\item [{$\chi$}] Convergence threshold of S-HLSP
	\item [{$\epsilon_{adaptive,l}$}] Adaptive second-order information (SOI) threshold of level $l$
\end{itemize}

\section{Problem definition and contributions}
\label{sec:probdef}

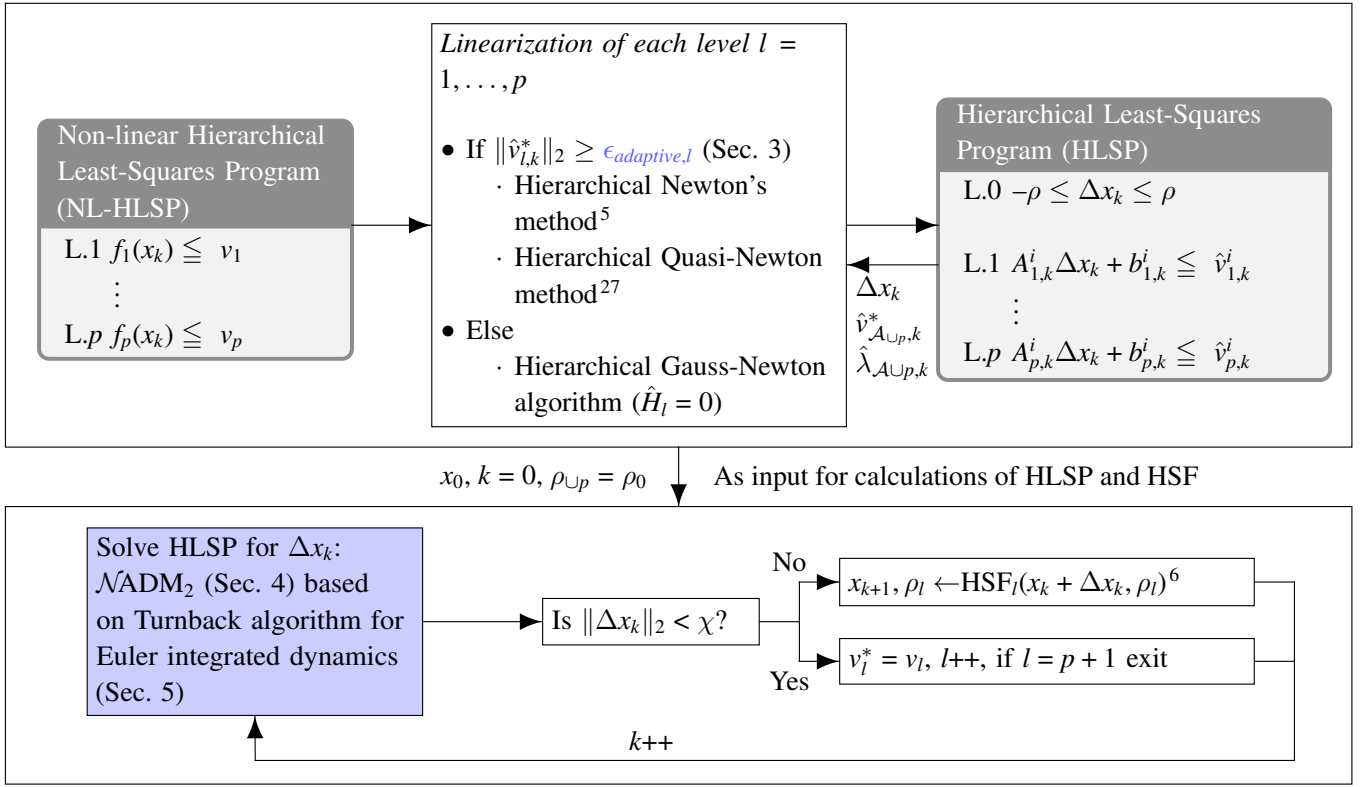
\begin{figure*}[htp!]
	\centering
	\resizebox{1\columnwidth}{!}{%
		\begin{tikzpicture}[line cap=rect]
			
			\node[align=left,text width=4cm] (NLHLSP) at (-0.75,-1) {
				\begin{tcolorbox}[colframe = gray!90,title=Non-linear Hierarchical \\Least-Squares Program\\ \eqref{eq:nlhlsp},boxsep=1pt,left=5pt,right=3pt,top=2pt,bottom=1pt] 
					\begin{enumerate}
						\item[L.$1$] $f_1(x_k) \leqq~v_1$\\
						{\centering$\vdots$}
						\item[L.$p$] $f_p(x_k) \leqq~v_p$
					\end{enumerate}	
				\end{tcolorbox}
			};
			
			\node[draw,align=left,text width=5cm] (Lin) at (4.85,-1) {
				\textit{Linearization of each level $l=1,\dots,p$}\\
				\begin{itemize}[leftmargin=*]
					\item If $\Vert \hat{v}_{l,k}^*\Vert_2 \geq {\color{blue!65}\epsilon_{adaptive,l}}$ (Sec.~\ref{sec:epsadapt})
					\begin{itemize}
						\item {{Hierarchical Newton's method~\cite{pfeiffer2023}}}
						\item Hierarchical Quasi-Newton method~\cite{pfeiffer2018}
					\end{itemize}
					\item Else 
					\begin{itemize}
						\item {Hierarchical Gauss-Newton algorithm ($\hat{H}_l = 0$)}
					\end{itemize}
				\end{itemize}
			};

			\node[text width=5cm] (HLSP) at (11.1,-1) {
				\begin{tcolorbox}[colframe = gray!90,title=Hierarchical Least-Squares\\ Program \eqref{eq:hlsp},boxsep=1pt,left=5pt,right=3pt,top=2pt,bottom=1pt] 
					\begin{enumerate}
						\item[L.$0$] $-\rho\leq\Delta x_k \leq \rho$\\
						\item[L.$1$] $A_{1,k}^i\Delta x_k+b_{1,k}^i\leqq~\hat{v}_{1,k}^i$\\
						{\centering$\vdots$}
						\item[L.$p$] $A_{p,k}^i\Delta x_k + b_{p,k}^i  \leqq~\hat{v}_{p,k}^i$
					\end{enumerate}	
				\end{tcolorbox}
			};
			
			\node[align=left] (feedback) at ($(HLSP.west) + (-0.425,-1.3)$) {$\Delta x_k$\\$\hat{v}_{\mA_{\cup p},k}^*$\\$\hat{\lambda}_{\mA{\cup p},k}$};
			
			\coordinate (b1)   at ($(Lin.north) + (-8,0.25)$);
			\coordinate (b2)   at ($(Lin.north) + (9,0.25)$);
			\coordinate (b3)   at ($(Lin.south) + (9,-0.25)$);
			\coordinate (b4)   at ($(Lin.south) + (-8,-0.25)$);	
			
			\draw[decoration={markings,mark=at position 1 with
				{\arrow[scale=2,>=latex]{>}}},postaction={decorate}] ($(NLHLSP.east) + (-0.15,0)$) -- (Lin.west);
			\draw[decoration={markings,mark=at position 1 with
				{\arrow[scale=2,>=latex]{>}}},postaction={decorate}] (Lin.east) -- ($(HLSP.west) + (0.15,0)$);
			\draw[decoration={markings,mark=at position 1 with
				{\arrow[scale=2,>=latex]{>}}},postaction={decorate}] ($(HLSP.west) - (-0.15,0.5)$) -- ($(Lin.east) - (0,0.5)$);	
			
			\node[] (input) at ($(Lin.south) + (-1.2,-0.65)$) { 
				$x_0$, $k=0$, $\rho_{\cup p} = \rho_0$
			};
			
			\node[] (input) at ($(Lin.south) + (4.,-0.65)$) { 
				As input for calculations of HLSP and HSF
			};
			
			\draw[] (b1) -- (b2) -- (b3) -- (b4) -- (b1);
			
			\node[draw,text width=4cm,fill=blue!20] (hlspsolve) at (-0.,-6) {
				{Solve HLSP for $\Delta x_k$:\\
					$\mathcal{N}$\hspace{-2pt}ADM$_2$ (Sec.~\ref{sec:nadmm}) based on
					Turnback algorithm 
					for Euler integrated dynamics (Sec.~\ref{sec:turnback})}
			};
			
			\node[draw,text width=2.5cm] (convtest) at ($(hlspsolve)+(5.,0)$) {
				{Is $\Vert\Delta x_k\Vert_2 < \chi$?}
			};		
			
			\node[draw,text width=5cm] (filter) at ($(convtest)+(5.,0.5)$) {
				{$x_{k+1},\rho_l\leftarrow$HSF$_l$($x_k+\Delta x_k,\rho_l$) \cite{pfeiffer2024}}
			};
			
			\node[draw,text width=5cm] (conv) at ($(convtest)+(5.,-0.5)$) {
				{$v_l^*=v_l$, $l$++, if $l = p+1$ exit}
			};

			\coordinate (b11)   at ($(Lin.south) + (-8,-1.)$);
			\coordinate (b12)   at ($(Lin.south) + (9,-1.)$);
			\coordinate (b13)   at ($(Lin.south) + (9,-4.5)$);
			\coordinate (b14)   at ($(Lin.south) + (-8,-4.5)$);	
			
			\coordinate (b15)   at ($(convtest.east) + (0.5,0)$);
			\coordinate (b16)   at ($(b15) + (0,0.5)$);
			\coordinate (b17)   at ($(b15) + (0,-0.5)$);
			
			\coordinate (b18)   at ($(filter.east) + (0.5,-0)$);
			\coordinate (b19)   at ($(conv.east) + (0.5,-0)$);	
			\coordinate (b20)   at ($(conv.east) + (0.5,-1.25)$);	
			\coordinate (b21)   at ($(b20) + (-13.12,0)$);

			\draw[] (b11) -- (b12) -- (b13) -- (b14) -- (b11);	
			
			\draw[decoration={markings,mark=at position 1 with
				{\arrow[scale=2,>=latex]{>}}},postaction={decorate}] ($(Lin.south) + (0.5,-0.25)$) -- ($(Lin.south) + (0.5,-1)$);
			
			\draw[decoration={markings,mark=at position 1 with
				{\arrow[scale=2,>=latex]{>}}},postaction={decorate}] ($(hlspsolve.east)$) -- (convtest.west);
			
			\draw[] ($(convtest.east)$) -- (b15);
			\draw[] (b15) -- (b16);
			\draw[] (b15) -- (b17);
			
			\draw[decoration={markings,mark=at position 1 with
				{\arrow[scale=2,>=latex]{>}}},postaction={decorate}] (b16) -- (filter.west);
			\draw[decoration={markings,mark=at position 1 with
				{\arrow[scale=2,>=latex]{>}}},postaction={decorate}] (b17) -- (conv.west);
			
			\draw[] ($(filter.east)$) -- (b18);
			\draw[] ($(conv.east)$) -- (b19);
			\draw[] (b18) -- (b20);
			\draw[] (b20) -- (b21);
			
			\draw[decoration={markings,mark=at position 1 with
				{\arrow[scale=2,>=latex]{>}}},postaction={decorate}] (b21) -- ($(hlspsolve.south)$);
			
			\node[] (inc) at ($(b21) + (5,0.25)$) { 
				$k$++
			};
			\node[] (inc) at ($(convtest) + (1.75,0.75)$) { No };
			\node[] (inc) at ($(convtest) + (1.75,-0.75)$) { Yes };
	\end{tikzpicture}}
	\caption{A symbolic overview of the sequential hierarchical least-squares programming (S-HLSP) with trust region and hierarchical step-filter (HSF) based on the SQP step-filter~\cite{fletcher2002b} to solve non-linear hierarchical least-squares programmings~\eqref{eq:nlhlsp} with $p$ levels. Our contributions, an adaptive threshold for second-order information and the HLSP sub-problem solver  {\hbox{$\mathcal{N}$\hspace{-2pt}ADM$_2$}} in combination with an efficient turnback algorithm for Euler integrated dynamics, are marked in blue.
	}
	\label{fig:scheme}
\end{figure*}

\subsection{Non-linear Hierarchical Least-Squares Programming}

In this article, we consider non-linear hierarchical least-squares problems (NL-HLSP) as preemptive transcription~\cite{Cococcioni2018} of~\ref{eq:lmoo} with $g=2$:
\begin{align}
	\mini_{{x},v_{\mathbb{E}_{l}},v_{\mathbb{I}_{l}}} \quad & \frac{1}{2} \left\|v_{\mathbb{E}_{l}}\right\|^2_2 + \frac{1}{2} \left\|v_{\mathbb{I}_{l}}\right\|^2_2 \qquad\quad\hspace{8pt}l = 1,\dots,p
	\nonumber\\
	\mbox{s.t.} \quad & f_{\mathbb{E}_{l}}(x) = v_{\mathbb{E}_{l}}\nonumber\\
	\qquad& f_{\mathbb{I}_{l}}(x) \leq v_{\mathbb{I}_l}\nonumber\\
	\qquad& f_{\mA_{\cup l-1}}(x) = v_{\mA_{\cup l-1}}^*\nonumber\\
	\qquad& f_{\mI_{\cup l-1}}(x) \leq 0\label{eq:nlhlsp}\tag{NL-HLSP}
\end{align}
Each problem corresponding to the levels $l=1,\dots,p$ is solved in order. At convergence of each level $l$ at the primal $x^*_l\in\mathbb{R}^{n}$, the feasible $v_l^*\in\mathbb{R}^{m_l} = 0$ or optimally infeasible points $v_l^* \neq 0$ of the sets of equality and inequality constraints $\vert\bE_{l}\vert = {m_{\bE_l}}$ and $\vert\bI_{l}\vert = {m_{\bI_l}}$ is identified. 
The slack variables $v_{\mA_{\cup l-1}}^*$ are  the optimal ones identified for the higher priority levels $1$ to $l-1$. 
The \textit{active set} $\mA_{\cup l-1}$ contains all constraints that are active at convergence of levels $1$ to $l-1$. The active set includes all equality constraints $\bE_{\cup_{l-1}}$, and furthermore all violated / infeasible ($v^* > 0$) or saturated ($v* = 0$) inequality constraints of $\bI_l$. In a similar vein, the \textit{inactive set} $\mI_{\cup l-1}$ contains all the remaining feasible inequality constraints ($v* = 0$) of the sets $\bI_{\cup l-1}$. 

\textbf{Contribution:}
we make some considerations towards resolving limitations of NL-HLSP's. Specifically, we demonstrate how we can identify local minima associated with negative function values, see Sec.~\ref{sec:eval:nonlinopt}.

\subsection{Sequential Hierarchical Least-Squares Programming}
\label{sec:introshlsp}

Sequential hierarchical least-squares programming (S-HLSP) is a method to resolve NL-HLSP. Here, the NL-HLSP is linearized around the current working point $x_k$ at every \textit{outer} iteration $k$ to a HLSP by virtue of the hierarchical Newton's method~\cite{pfeiffer2023}.
Hierarchical least-squares programs (HLSP) are problems of the form
\begin{align}
	\mini_{\Delta x,\hat{v}_{\mathbb{E}_l},\hat{v}_{\mathbb{I}_l}}& \qquad \frac{1}{2}\Vert \hat{v}_{\mathbb{E}_l} \Vert^2_2 + \frac{1}{2}\Vert \hat{v}_{\mathbb{I}_l} \Vert^2_2\qquad l=1,\dots,p \nonumber\\
	\text{s.t.}
	& \qquad A_{\mathbb{E}_l}\Delta x - b_{\mathbb{E}_l} = \hat{v}_{\mathbb{E}_l}\nonumber\\
	& \qquad A_{\mathbb{I}_l}\Delta x - b_{\mathbb{I}_l} \leq \hat{v}_{\mathbb{I}_l}\nonumber\\
	& \qquad A_{\mA_{\cup l-1}}\Delta x - b_{\mA_{\cup l-1}} = \hat{v}_{\mA_{\cup l-1}}^*\nonumber\\
	& \qquad A_{\mI_{\cup l-1}}\Delta x - b_{\mI_{\cup l-1}} \leq 0\label{eq:hlsp}\tag{HLSP}
\end{align}
Variables $\hat{\cdot}$ are the linear equivalents to the non-linear ones of the~\ref{eq:nlhlsp}.
Notably, the problem constraints are linear. The constraint matrices and vectors $A$ and $b$ represent this linearization (Jacobians and Hessians) of non-linear constraints~$f$. 

Utilizing Fletcher's filter method, the resulting primal steps from the HLSP sub-problems are accepted or rejected, depending on sufficient progress in terms of constraint infeasibility reduction and optimality.
The HLSP sub-problems are subject to a trust-region constraint in order to maintain the validity of the approximation. The trust-region radius is increased or decreased depending on step acceptance and rejection, respectively.
Linearization methods of the NL-HLSP to HLSP include the hierarchical Newton's method (using second order information (SOI) in form of the hierarchical Hessian) or the hierarchical Gauss-Newton algorithm (no SOI)~\cite{pfeiffer2023}. A switch between the two is decided upon the residual of the HLSP sub-problem. 

\textbf{Contribution:} An overview of S-HLSP is given in Fig.~\ref{fig:scheme}. In this work, we propose an adaptive thresholding strategy for SOI augmentation in the hierarchical Newton's method (Sec.~\ref{sec:epsadapt}). This promotes numerical stability when solving the HLSP sub-problems and enhances solution optimality of lower priority levels.
Furthermore, an efficient solver for HLSP based on the ADMM is presented (Sec.~\ref{sec:nadmm}). As a first-order method, the solver primarily relies on matrix-vector multiplications instead of matrix factorizations as for the IPM. This solver is efficient in approximating a solution of moderate accuracy with a limited number of iterations with respect to its IPM equivalent. The approximate primal guess then can be used to warm-start a high accuracy solver as we demonstrate in Sec.~\ref{sec:eval:nonlinopt}.

\subsection{Prioritized trajectory optimization}

A specific form of NL-HLSP's are prioritized non-linear trajectory optimization problems of the form
\begin{align}
	\mini_{x_T,v_{\cup l,t}} &\quad  \frac{1}{2} \left\| {v}_{\cup l,t} \right\|^2_2 \qquad l = 1,\dots,p  \label{eq:oc}\tag{PTO}\\
	\mbox{s.t.}  &\quad  f_l(x_{s_{t}:e_{t}}) \hspace{3pt} \leqq \hspace{3pt} {v}_{l,t} 	\qquad t = 0,\dots,T\nonumber\\
	&\quad  {f}_{\cup l-1}(x_{s_{t}:e_{t}}) \hspace{3pt} \leqq \hspace{3pt} {v}_{\cup l-1,t}^* \nonumber
\end{align}
Here, ${f}_{\cup l-1}$ represents equality and inequality constraints of lower priority levels $1$ to $l-1$, which is indicated by the symbol $\leqq$ (note that in the case of inactive inequality constraints $\mI_{\cup l-1}$, we have ${v}_{\cup l-1,t}^*=0$).
$T$ is the length of the control horizon. The individual time steps $t=0,\dots, T$ are also referred to as stages.
Constraints only depend on specific variable segments / intervals $x_{{s_t:e_t}}\coloneqq x\left[s_{t_0}:e_{t_1}\right]$. The indices $s_{t_0} \leq e_{t_1}$ with $t_0 \leq t_1$ are the start and end indices of the segments in $x$ corresponding to time steps $t_0$ and $t_1$.
The constraint Jacobians $J$ therefore exhibit a banded structure as follows
\begin{align}
	\setlength\arraycolsep{0pt}
	&J =
	\BIN \nabla_{x_{\left[0\right]}} f(x_{s_{0}:e_{1}}) & \nabla_{x_{\left[1\right]}} f(x_{s_{0}:e_{1}}) & \cdots & 0 \\
	0 & \nabla_{x_{\left[1\right]}} f(x_{s_1:e_2}) & \cdots & 0 \\
	\vdots & \vdots & \ddots & \vdots \\
	0 & 0 & \cdots & \nabla_{x_{\left[T\right]}} f(x_{s_{T-1}:e_{T}})
	\BOUT\label{eq:blockA}
\end{align}
$\left[t\right]$ indicates the interval $\left[s_t:e_t\right]$.
This optimal control problem structure handles multi-stage constraints (excluding the dynamics), unlike DDP.  Typically, we have some initial condition on parts of the variable vector $x_{:} = x_{:,0}$ with constant $x_{:,0}$, which can be seamlessly integrated as a high priority constraint.

\textbf{Contribution:} In this work, we present a sparse nullspace basis based on the turnback algorithm  for Euler integrated dynamics (Sec.~\ref{sec:turnback}). We provide an upper bound on its bandwidth. This enables us to design an efficient turnback algorithm which does not rely on a costly initial rank-revealing matrix factorization. We demonstrate how the high degree of sparsity in the case of full actuation can be transferred to the case of under-actuation

\section{Hierarchical step-filter with adaptive threshold for second order information}
\label{sec:epsadapt}

S-HLSP utilizes the hierarchical Newton's method~\cite{pfeiffer2023} (or the Quasi-Newton equivalent~\cite{pfeiffer2018}) or the hierarchical Gauss-Newton algorithm to linearize~\ref{eq:nlhlsp}.
Switching between the two can be based on the principle that at convergence, variables corresponding to infeasible constraints need to be `locked' in the~\ref{eq:hlsp} by a full rank Hessian in order to not disturb the optimal infeasibility of the non-linear constraints. Furthermore, in robotics constraint Jacobians in the~\ref{eq:hlsp} are typically rank deficient at infeasible points due to kinematic and algorithmic singularities~\cite{Chiaverini1997}. The Newton's method and its second-order information (SOI) then acts as a regularization and enables a global solution to the~\ref{eq:hlsp}. At the same time, deactivating SOI promotes solution optimality. SOI is typically full-rank on the variables that the corresponding constraints occupy. Therefore, these variables can not be used any more for the resolution of lower priority levels. If SOI is unnecessarily activated for feasible constraints, this results in less optimal local minima for constraints on lower priority levels.

The switching method proposed in~\cite{pfeiffer2018} adheres to the following strategy
\begin{align}
	H_l &= J^TJ + \text{SOI}_l \quad\text{ if }\Vert \hat{v}_l \Vert_2 \geq \epsilon \text{ (Newton's method)}\label{eq:adapteps}\\
	H_l &= J_l^TJ_l \phantom{+ SOIl} \quad\text{ otherwise (Gauss-Newton algorithm) }\nonumber
\end{align}
SOI is defined as
\begin{equation}
	\text{SOI}_l\coloneqq \hat{H}_l = R_l^TR_l\label{eq:soi}
\end{equation} 
$\hat{H}_l$ represents some positive definite regularization (for example Higham~\cite{Higham1986} or symmetric Schur regularization~\cite{Golub1996}, Broyden-Fletcher-Goldfarb-Shanno algorithm (BFGS)~\cite{Broyden1970}, weighted identity matrices~\cite{more1977}, \dots) of the hierarchical Lagrangian Hessian of a level $l$~\cite{pfeiffer2023}. It involves SOI and approximate Lagrange multipliers of the levels 1 to $l$.
$\epsilon$ is a constant threshold on the linear slacks $\hat{v}$ as an indicator for constraint infeasibility. The linear slacks $\hat{v}$ capture well-posed / compatible HLSP sub-problems and enable SOI deactivation even if an iterate $x^k \neq x^*$. In contrast, the non-linear slack $v_l$ of feasible constraints only vanishes at convergence $x^*$. 
Here, we propose an adaptive strategy for the SOI augmentation thresholds $\epsilon_{adaptive,l}$ of each level $l$ in~\eqref{eq:adapteps}, see Sec.~\ref{sec:soieps}. This avoids manual tuning of the SOI activation threshold which is oftentimes necessary for HLSP sub-problem solvers of different accuracy and in dependency of the problem configurations. The method is based on the HSF for S-HLSP globalization, which is recalled in Sec.~\ref{sec:hsf}.

\subsection{The hierarchical step-filter}
\label{sec:hsf}
The HSF~\cite{pfeiffer2024} based on the SQP step-filter~\cite{fletcher2002b} measures the progress in the approximated HLSP sub-problems with respect to the original NL-HLSP. 
Each filter $\mathcal{F}_l$ of the levels $l=1,\dots,p$ of the~\ref{eq:nlhlsp} consists of pairs $(h_{\cup l-1}$, $\Vert f_l^+\Vert_2^2)$ with
\begin{equation}
	h_{\cup l-1}(x_k+\Delta x_k) = \Vert f_{\bI_{\cup l-1}}^+ - v_{\bI_{\cup l-1}}^*\Vert_1 + \Vert f_{\bE_{\cup l-1}} - v_{\bE_{\cup l-1}}^*\Vert_1
\end{equation} 
Here
\begin{equation}
	f_{l}^+ \coloneqq \BIN f_{\bE_l} \\ \max(0,f_{\bI_l}) \BOUT
\end{equation} 
$h_{\cup l-1}$ reflects feasibility of the constraints while $\Vert f_l^+(x_k+\Delta x_k)\Vert_2^2$ indicates objective optimality. As can be seen, we use the non-linear slacks $v_l = f_l^+(x_k+\Delta x_k)$ instead of the linear ones $\hat{v}_l$ from the HLSP.

A new point $h_{\cup l-1}(x_k + \Delta x_k)$  and $\Vert f_l^+(x_k+\Delta x_k)\Vert_2^2$ resulting from a new primal step $\Delta x_k$ of the HLSP sub-problem  is acceptable to all filter points  $j\in\mathcal{F}_l$ if sufficient progress in feasibility or optimality has been achieved:
\begin{align}
	h_{\cup l-1} \leq \beta h_{\cup l-1}^j \qquad \text{or} \qquad \Vert f_l^+\Vert_2^2 + \gamma h_{\cup l-1} \leq \Vert f_l^{+j}\Vert_2^2
	\label{eq:acc}
\end{align}
$\beta$ is a value close to 1 and $\gamma$ is a value close to zero and adhere to the condition $0 < \gamma < \beta < 1$.
Since the model reliably represents the non-linear problem, the trust region radius is increased. Otherwise, the step is rejected and the trust region radius is reduced. The HSF of level $l$ converges once $\Vert \Delta x_k\Vert_2 <\chi$ falls below the threshold $\chi$. This process is repeated for each priority level $l=1,\dots,p$.

\subsection{Adaptive SOI thresholding}
\label{sec:soieps}
\begin{algorithm}[t!]
	\caption{\tt adaptEps}
	\begin{algorithmic}[1]
		\Statex \textbf{Input:} $\epsilon_{adaptive}$, $(h_{front},\Vert f^+\Vert^2_{2,front})$, $(h,\Vert f^+\Vert^2_{2})$, $c$, accepted, $\kappa$, $\underline{\epsilon}$, $\overline{\epsilon}$
		\Statex \textbf{Output:} $\epsilon_{adaptive}$, $(h_{front},\Vert f^+\Vert^2_{2,front})$, $c$
		\If{accepted}
		\If{$h \leq h_{front} \quad\&\quad \Vert f^+\Vert^2_{2} < \delta\Vert f^+\Vert^2_{2,front}$}
		\State
		$\epsilon_{adaptive} \leftarrow \min(\epsilon_{adaptive} \cdot \kappa,\overline{\epsilon}) $
		\State $h_{front} = h$
		\State $\Vert f^+\Vert^2_{2,front} = \Vert f^+\Vert^2_{2}$
		\State $c=0$
		\EndIf
		\ElsIf{$c > \zeta$}
		\State $\epsilon_{adaptive} \leftarrow  \max(\epsilon_{adaptive} / \kappa,\underline{\epsilon})$
		\EndIf
		\State $c\leftarrow c+1$
		\State \textbf{return} $\epsilon_{adaptive}$, $(h_{front},\Vert f^+\Vert^2_{2,front})$, $c$
	\end{algorithmic}
	\label{alg:epsadapt}
\end{algorithm}

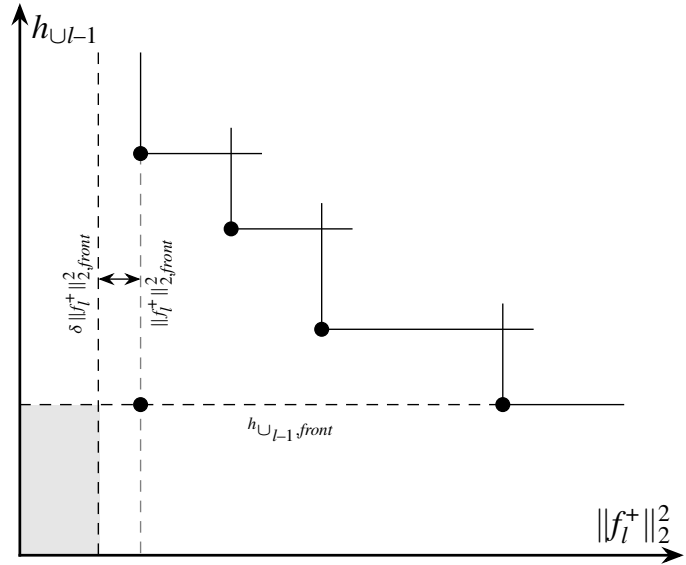
\begin{figure}[htp!]
	\centering
	\resizebox{0.5\columnwidth}{!}{%
		\begin{tikzpicture}[line cap=rect]
			\fill [gray!20] (0,0) rectangle (0.82,1.55);
			\node[rotate=90] at (0.6,2.8) {\tiny$\delta\Vert f^+_l\Vert_{2,front}^2$};
			\node[] at (2.8,1.3) {\tiny$h_{\cup_{l-1},front}$};
			\node[rotate=90] at (1.45,2.8) {\tiny$\Vert f^+_l\Vert_{2,front}^2$};
			
			\begin{axis}[
				axis lines=middle,
				axis line style={-Stealth, thick},
				xmin=0,xmax=5.5,ymin=0,ymax=5.5,
				xtick distance=100,
				ytick distance=100,
				xlabel=$\Vert f_l^+\Vert_2^2$,
				ylabel=$h_{\cup l-1}$,
				xticklabel=\empty,
				yticklabel=\empty,
				grid=major,
				grid style={thin,densely dotted,black!0}]
				\addplot[] coordinates {(1,5.) (1,4.)	(2,4.) };

				\addplot[no markers] coordinates {(1.75,4.25)	(1.75,3.25) };
				\addplot[no markers] coordinates {(1.75,3.25)	(2.75,3.25) };
				
				\addplot[no markers] coordinates {(2.5,3.5)	(2.5,2.25) };
				\addplot[no markers] coordinates {(2.5,2.25)	(4.25,2.25) };

				\addplot[no markers] coordinates {(5,1.5)	(4,1.5) };
				\addplot[no markers] coordinates {(4,1.5)	(4,2.5) };
				
				\addplot[no markers,dashed,gray] coordinates {(1,4)	(1,0) };
				\addplot[no markers,dashed] coordinates {(0.65,5)	(0.65,0) };
				\addplot[no markers,dashed] coordinates {(0,1.5)	(4,1.5) };

				\addplot[mark=*] coordinates {(1,4)  };
				\addplot[mark=*] coordinates {(1.75,3.25) };
				\addplot[mark=*] coordinates {(4,1.5) };
				\addplot[mark=*] coordinates {(2.5,2.25)};
				\addplot[mark=*] coordinates {(1,1.5)};
				
				\addplot[Stealth-Stealth] coordinates {(0.65,2.75) (1,2.75)};
				
			\end{axis}
			
	\end{tikzpicture}}
	\caption{Iterates $(h_{\cup l-1}(x_k),\Vert f_l^+(x_k)\Vert_2$ of level $l$. A new filter front needs to lie within the shaded area.
	}
	\label{fig:front}
\end{figure}

The threshold adaptation strategy for $\epsilon_{adaptive,l}$ is outlined in Alg.~\ref{alg:epsadapt}.
On each level $l=1,\dots,p$ and at every outer iteration, the filter \textit{front} $(h_{\cup{l-1},front}, \Vert f^+_l\Vert^2_{2,front})$ is updated by {\tt adaptEps}($\epsilon_{adaptive,l}$, $(h_{\cup l-1,front},\Vert f^+_l\Vert^2_{2,front})$, $(h_{\cup l-1},\Vert f^+_l\Vert^2_{2})$, $c_l$, $\text{accepted}_l$, $\kappa$, $\underline{\epsilon}$, $\overline{\epsilon}$). It represents the most optimal point by the margin $\delta$ that has been encountered so far on a level $l$.
\begin{definition}[Filter front]
	A point $(h_{\cup{l-1},front}, \Vert f^+_l\Vert^2_{2,front})$ is the front of a filter $\mathcal{F}_l$ to degree $\delta\leq1$, if it dominates all points of previous iterates $x_j$ with $j=1,\dots k$ according to
	\begin{align}
		&h_{\cup{l-1},front} \leq h^j_{\cup{l-1}} \quad\text{and}\quad
		\Vert f^+_l\Vert^2_{2,front} < \delta\Vert f^{+j}_l\Vert_2^2 \quad 
		\label{eq:filtfront}
	\end{align}
\end{definition}
An exemplary visualization is given in Fig.~\ref{fig:front}.
The choice $j=1,\dots k$ (and not considering the filter elements $j\in\mathcal{F}_l$) is motivated by the fact that a filter $\mathcal{F}_l$ does not include all iterates $x_j$ with $j=0,\dots,k$ due to a concept referred to as f-type iteration where the focus is put on optimality of the HSF level $l$, see~\cite{fletcher2002b}. Since this comes possibly at the cost of increase in constraint violation $h_{\cup l-1}$, the threshold adaptation takes place on all levels including the current HSF level $l$. 
We reinitialize the filter front of each level $i=1,\dots,p$ at the start of the step filter of a level $l$ by  $(h_{\cup{l-1}}(x_{l-1}^*),\Vert f^+_l(x_{l-1}^*)\Vert^2_{2})$. $x_{l-1}^*$ is the primal obtained at the KKT point of the previous level $l-1$. This handles cases where $x_0$ is a feasible ($f^+_l=0$), but $x_{l-1}^*$ is an infeasible point ($f^+_l\neq 0$) to constraints, since otherwise no other filter front can be identified (since the condition $\Vert f^+_l\Vert^2_{2} < \Vert f^{+}_l\Vert_{2,front}^2=0$ would need to be fulfilled).

The SOI augmentation threshold is relaxed / increased by a factor $\kappa > 1$ if a sub-step leads to a new filter front. For one, progress towards feasibility $\Vert f^{+}_l\Vert_2^2 = 0$ is required. This is ensured by the degree $\delta \leq 1$. At the same time, the condition $h_{\cup{l-1}} \leq h_{\cup{l-1},front}$ ensures that the current SOI sufficiently encapsulates SOI from constraints of previous levels $f^+_{\cup l-1}$ (since otherwise the ill-posed HLSP sub-step may increase constraint violation $h_{\cup{l-1}}$). 

On the other hand, the adaptation strategy tightens / decreases the threshold if a step is rejected according to~\eqref{eq:acc}. Furthermore, $\zeta$ steps must have been accepted with the current SOI threshold. This promotes trust-region reductions without SOI augmentation in order to escape local regularized minima. This is motivated by the analogies between trust-region methods and the Levenberg-Marquardt method~\cite{more1977} (smaller trust-region radius relates to higher regularization, whereby we prefer smaller trust-region radii without regularization / SOI).

The described procedure leads to a more moderate and slower adaptation strategy than for example directly coupling SOI activations to step acceptances and rejections. At the same time, the proposed heuristic for adapting the SOI threshold gives no guarantee that $0<\Vert\hat{v}_l\Vert_2^2 = \Vert{v}_l\Vert_2^2 < \epsilon_{adaptive,l}$  holds at an optimally infeasible KKT point $x^*$. This can be explained by the fact that by virtue of the  trust-region constraint, such a first-order point $x^*$ can always be obtained (the SQP filter convergence proof in~\cite{fletcher2002b} only requires the norm of the Hessian $\Vert \hat{H}_l\Vert_2$ to be bounded above; this is given for example for physically consistent systems like robots with bounded Jacobians; no rank requirements are made). Nonetheless, we show on test functions (Sec.~\ref{sec:eval:nonlinopt}) that SOI is reliably activated for infeasible constraints even if the initial threshold is chosen far away.

\section{Alternating direction method of multipliers for HLSP}
\label{sec:nadmm}

HLSP solvers both based on the active-set method~\cite{escande2014} and the interior-point method~\cite{pfeiffer2021} have been proposed. While the former is very efficient with little changes of the active-set due to warm-starting capabilities, the latter one exhibits numerical stability and constant computation times even in the case of ill-posed problem formulations. However, both methods rely on expensive matrix factorizations in every inner iteration. In recent years, the ADMM for solving constrained optimization problems has seen a sharp rise in popularity, for example in distribute optimization~\cite{Liu2023}. Here, we outline an ADMM for~\ref{eq:hlsp} which mostly relies on matrix-vector operations. Such first order methods typically approach a solution of moderate accuracy in few iterations~\cite{boyd2011}.
The proposed solver $\mathcal{N}$ADM$_2$ is based on nullspace projections of active constraints as described in Sec.~\ref{sec:nadmm}. We detail our choice of the step-size parameter (Sec.~\ref{sec:stepsize}) and our warm-starting strategy (Sec.~\ref{sec:warmstart}). Our derivations are finally concluded with some considerations regarding the computation of the Lagrange multipliers of the active constraints $\mA_{\cup l-1}$ (Sec.~\ref{sec:lagact}).

\subsection{Reduced Hessian based ADMM for HLSP}

Following the approach in~\cite{dang2017}, which first proposed an ADMM based on the reduced Hessian, we introduce the change of variables
\begin{align}
	\Delta x_l = \Delta x_{l-1}^* + N_{l-1}\Delta z_l
\end{align}
with the overall primal solution
\begin{align}
	\Delta x^*= \sum_{l=1}^p N_{l-1} \Delta z_l^*
	\label{eq:prim}
\end{align}
$N_{l-1}$ is a basis of the nullspace of the active constraints $\mA_{\cup l-1}$ such that 
\begin{equation}
	A_{\mA_{\cup l-1}}N_{l-1} = 0
\end{equation} 
and $N_0 = I_{n\times n}$. The particular solution $\Delta x_{l-1}^*$ (with $\Delta x_0=0$) fulfills the condition $A_{\mA_{\cup l-1}}\Delta x_{l-1}^* - b_{\mA_{\cup l-1}} - v_{\mA_{\cup l-1}}^* = 0$ (and which is obtained during the resolution of the higher priority levels). 
With appropriate choice of the nullspace basis $N$, this leads to either a decrease in variables (dense programming) or non-zeros (sparse programming). In this work, we rely on the turnback algorithm for the computation of sparse nullspace basis of banded matrices, see Sec.~\ref{sec:turnback}.

The change of variables leads to the following projected optimization problem, where $\tilde{A}$ is the projected variable $\tilde{A} = AN$
\begin{align}
	\mini_{ \substack{\Delta z_l,\Delta \hz_l,v_{\mathbb{E}_l},\\v_{\mathbb{I}_l},w_{\mathbb{I}_l},w_{\mI_{\cup l-1}} }}& \quad \frac{1}{2}\Vert  v_{\mathbb{E}_l}\Vert^2_2 + \frac{1}{2} v_{\mathbb{I}_l}^T \Vert^2_2 \nonumber\\
	\text{s.t.}
	& \quad \tA_{\mathbb{E}_l} \Delta z_l - \bb_{\mathbb{E}_l} = v_{\mathbb{E}_l} \qquad l=1,\dots,p \nonumber\\
	& \quad \tA_{\mathbb{I}_l} \Delta z_l - \bb_{\mathbb{I}_l} \leq v_{\mathbb{I}_l}  \nonumber\\
	& \quad \tA_{\mI_{\cup l-1}} \Delta z_l - \bb_{\mI_{\cup l-1}} \leq 0 \label{eq:hlspN}
\end{align}
The vector $\breve{b}_{\Xi}$ represents the expression
\begin{align}
	\breve{b}_{\Xi_l} \coloneqq b_{\Xi_l} - A_{\Xi_l}\Delta x_{l-1}^*
\end{align}
with the corresponding indices ${\Xi}_l = \{\bE_l,\bI_l,\mI_{\cup {l-1}}\}$. 

We introduce the slack variables $w_{\mI_{\cup l-1}}$ and $w_{\mathbb{I}_l}$, similarly to~\cite{pfeiffer2021}. Furthermore, the auxiliary variable $\Delta \hat{z}_l$ is added to the problem as in~\cite{osqp}. The HLSP then writes as
\begin{align}
	\mini_{ \substack{\Delta z_l,\Delta \hz_l,v_{\mathbb{E}_l},\\v_{\mathbb{I}_l},w_{\mathbb{I}_l},w_{\mI_{\cup l-1}} }}& \quad \frac{1}{2}\Vert \BIN v_{\mathbb{E}_l}^T & v_{\mathbb{I}_l}^T\BOUT^T \Vert^2_2 + \mathrm{I}_+(w_{\mathbb{I}_l}) + \mathrm{I}_+(w_{\mI_{\cup l-1}}) \nonumber\\
	\text{s.t.}
	& \quad \tA_{\mathbb{E}_l} \Delta z_l - \bb_{\mathbb{E}_l} = v_{\mathbb{E}_l} \qquad l=1,\dots,p \nonumber\\
	& \quad \tA_{\mathbb{I}_l} \Delta z_l - \bb_{\mathbb{I}_l} = v_{\mathbb{I}_l} + w_{\mathbb{I}_l} \nonumber\\
	& \quad \tA_{\mI_{\cup l-1}} \Delta z_l - \bb_{\mI_{\cup l-1}} =  w_{\mI_{\cup l-1}} \nonumber	\\
	& \quad \Delta \hz_l = \Delta z_l\label{eq:hlspadmm}
\end{align}
The slacks are penalized for negative values by the non-smooth indicator function $\mathrm{I}_+$
\begin{align}
	\mathrm{I}_{+}(w_{\Psi_l}) = 
	\BIN 
	0 \quad w_{\Psi_l} \geq 0 
	\\ +\infty \quad \text{otherwise}
	\BOUT
\end{align}
where 
\begin{equation}
	\Psi_l = \{\mI_{\cup l-1}, \mathbb{I}_{l}\}
\end{equation}
The augmented Lagrangian of level $l$ of~\eqref{eq:hlspN} writes as
\begin{align}
	&\tL_l(\Delta z_l,\Delta \hz_l,v_{\mathbb{E}_l},v_{\mathbb{I}_l},w_{\mathbb{I}_l},w_{\mI_{\cup l-1}})
	= \frac{1}{2}\Vert v_{\mathbb{E}_l} \Vert^2_2 + \frac{1}{2}\Vert v_{\mathbb{I}_l} \Vert^2_2 + \mathrm{I}_+(w_{\mathbb{I}_l}) + \mathrm{I}_+(w_{\mI_{\cup l-1}})
	+ \frac{\rho_{\mathbb{E}_l}}{2} \Vert \tA_{\mathbb{E}_l}\Delta z_l - \bb_{\mathbb{E}_l} - v_{\mathbb{E}_l} + \upsilon_{\mathbb{E}_l} \Vert_2^2\\
	+& \frac{\rho_{\bI_l}}{2} \Vert \tA_{\mathbb{I}_l}\Delta z_l - \bb_{\mathbb{I}_l} -v_{\mathbb{I}_l} - w_{\mathbb{I}_{l}} + \upsilon_{\mathbb{I}_l} \Vert_2^2\nonumber
	+ \frac{\rho_l}{2} \Vert \tA_{\mI_{\cup l-1}}\Delta z_l - \bb_{\mI_{\cup l-1}} - w_{\mI_{\cup l-1}} + \upsilon_{\mI_{\cup l-1}} \Vert_2^2
	+\frac{\sigma}{2} \Vert \Delta \hz_l - \Delta z_l + \sigma^{-1}\lambda_{\Delta z_l}\Vert^2_2\nonumber
\end{align}
where 
\begin{align}
	\upsilon \coloneqq \frac{1}{\rho} \lambda
\end{align}
The step-size parameters $\sigma>0$ and $\rho$, the distinctions and choices $\rho_{\bE_l}\rightarrow \infty$ and $\rho_{\mathbb{I}_l} = \rho_l$ are further explained in Sec.~\ref{sec:stepsize}.
$\lambda$ are the Lagrange multipliers associated with the corresponding problem constraints $\Xi$. 
Resulting from the Karush-Kuhn-Tucker (KKT) first order optimality conditions $\tK_{v_{\mathbb{E}_l}} = 0$ and $\tK_{v_{\mathbb{I}_l}} = 0$ (with $\tK\coloneqq \nabla \tL$), we obtain the primal substitutions
\begin{align}
	v_{\mathbb{E}_l} &= \tA_{\mathbb{E}_l}\Delta z_l - \bb_{\mathbb{E}_l} + \upsilon_{\mathbb{E}_l}\label{eq:subvel}\\
	v_{\mathbb{I}_l} &= \frac{\rho_l}{1+\rho_l}(\tA_{\mathbb{I}_l}\Delta z_l - \bb_{\mathbb{I}_l} -w_{\mathbb{I}_{l}} + \upsilon_{\mathbb{I}_l})\label{eq:subvil}
\end{align}
We then successively compute the alternating steps
\begin{align}
	\Delta\hz^{k+1}_l &\leftarrow \argmin_{\hz_l} \tL_l(\Delta z_l,\Delta \hz_l,v_{\mathbb{E}_l},v_{\mathbb{I}_l},w_{\mathbb{I}_l},w_{\mI_{\cup l-1}}) \\
	\Delta z^{k+1}_l &\leftarrow \alpha \Delta\hz^{k+1} + (1-\alpha)\Delta z^k\\
	v^{k+1}_{\mathbb{E}_l} &\leftarrow \eqref{eq:subvel}\\
	v^{k+1}_{\mathbb{I}_l} &\leftarrow \eqref{eq:subvil}\\
	\hw_{\bI_{l}}^{k+1} &\leftarrow \tA_{\bI_{l}}\Delta \hz^{k+1} - v^{k+1}_{\bI_l}\\
	\hw_{\mI_{\cup l-1}}^{k+1} &\leftarrow \tA_{\mI_{\cup l-1}}\Delta\hz^{k+1} \\
	w^{k+1}_{\Psi} &\leftarrow \max(\bb_{\Psi}, \alpha 	\hw_{\Psi}^{k+1} + (1-\alpha) w_{\Psi}^k + \upsilon_{\Psi}^k)\\
	\upsilon^{k+1}_{\Psi} &\leftarrow u^{k}_{\Psi} + \alpha\hw_{\Psi}^{k+1} + (1-\alpha)w_{\Psi}^k - w^{k+1}_{\Psi}\label{eq:duil}\\
	\upsilon^{k+1}_{\bE_l} &\leftarrow u^{k}_{\bE_l} + \alpha(\tA_{\mathbb{E}_l}\Delta\hz^{k+1} - v^{k+1}_{\bE_l}) + (1-\alpha)\bb_{\bE_l} - \bb_{\bE_l}\label{eq:duel}
\end{align}
The parameter $\alpha\in(0,2)$ is the over-relaxation parameter (typically $\alpha = 1.6$)~\cite{Eckstein1992}. For the computation of the primal $\hz^{k+1}$, we consider the optimality condition $\tK_{\hz_l} = 0$
which leads to the expression
\begin{equation}
	C_l\Delta\hz^{k+1}_l=r_l
\end{equation} 
The positive definite matrix $C_l$ is defined as
\begin{align}
	C_l &= \tA_{\bE_l}^T\tA_{\bE_l} + \frac{\rho_l}{1+\rho_l}\tA_{\bI_l}^T\tA_{\bI_l}
	+ \rho_l\tA_{\mI_{\cup l-1}}^T\tA_{\mI_{\cup l-1}} + \sigma I
	\label{eq:primalstep}
\end{align}
$I$ is an identity matrix.
The right hand side writes as
\begin{align}
	r_l=&
	\tA_{\bE_l}^T(\bb_{\bE_l} - \upsilon_{\bE_l}) + \frac{\rho_{l}}{1+\rho_{l}}\tA_{\bI_l}^T(\bb_{\bI_l} +w_{\bI_{l}} - \upsilon_{\mathbb{I}_l})
	+ {\rho_l}\tA_{\mI_{\cup l-1}}^T(\bb_{\mI_{\cup l-1}} + w_{\mI_{\cup l-1}} - \upsilon_{\mI_{\cup l-1}}) + \sigma z^k_l 	
\end{align}
Once the alternating steps of level $l$ have converged with $\Vert \tK_l\Vert_2<\eta$, the active constraint sets $\mA_{l^*}$ and $\mA_l$ corresponding to $\mI_{\cup l-1}$ and $\bI_{l}$ need to be composed. $\eta$ is a positive numerical threshold. The level $l^*$ is referred to as `virtual' priority level and maintains the prioritization between active sets of $\mI_{\cup l-1}$ and $\bI_{l}$~\cite{pfeiffer2021}. We use the following decision criteria to determine active constraints
\begin{align}
	w_{\mI_{\cup l-1}} < \nu &\qquad \text{and} \qquad	\lambda_{\mI_{\cup l-1}} > \nu  \\
	w_{\mathbb{I}_l} < \nu &\qquad \text{and} \qquad v_{\mathbb{I}_l} < -\nu
\end{align}
The resolution of the HLSP is then continued with the ADMM of the next level $l+1$ projected into the nullspace $N_l$ of the new active set  $\mA_{\cup l} = \mA_{\cup l-1}\cup\mA_l$. The remaining inactive constraints are contained in the updated inactive set $\mI_{\cup l}$.

\subsection{Choice of the step-size parameters $\rho$}
\label{sec:stepsize}

Since equality constraints $\bE_l$ are necessarily active at convergence, we choose $\rho_{\bE_l}\rightarrow\infty$~\cite{Ghadimi2024}. It can be seen that this leads to a more efficient algorithm since the dual update $\upsilon_{\bE_l}$~\eqref{eq:duel} is zero and therefore does not need to be computed. 

Similarly, the choice $\rho_{\bI_l}\rightarrow \infty$ for the inequality constraints $\bI_l$ would render its corresponding equation in~\eqref{eq:duil} obsolete. In this case, the inequality constraints are treated as equalities. Consequently, at ADMM convergence, feasible inequality constraints are saturated (with $\tA_{\bI_l} z_l - \bb_{\bI_l} = 0$) and infeasible constraints are active ($v_{\bI_l} < 0$). However, we noticed that this leads to increased and unnecessary constraint activations due to the limited convergence accuracy of the ADMM (see also Sec.~\ref{sec:warmstart}).
Instead, we set the step-size parameter $\rho_{\bI_l} = \rho_l$ according to~\cite{osqp}.

\subsection{Warm-starting HLSP's}
\label{sec:warmstart}

Oftentimes, a slowly evolving sequence of programs (parametric program) needs to be resolved, for example in the context of S-HLSP (see Sec.~\ref{sec:introshlsp}). In this case, and in contrast to interior-point methods, the ADMM can be easily warm-started, i.e., a good initial guess for the primal and dual variables reduces the number of alternating iterations until convergence.
We store the optimal primal and dual values $z_l^*$, $w_{\Psi_l}^*$ and $\upsilon_{\Psi_l}^*$ and the step-size parameter $\rho^*_l$ after convergence of each level $l=1,\dots,p$. In the next problem instance, the primal and dual variables are then warm-started with these values.
If exactly the same HLSP is solved, our algorithm therefore converges as expected with zero iterations with $\Delta x_{k+1}^* = \Delta x^*_k = \sum_{l=1}^p N_{l-1} \Delta z_{l,k}^*$~\eqref{eq:prim}.

Nonetheless, we observed that by warm-starting the primal and dual variables, constraints previously activated tend to be activated again in the next iteration. Potentially, this is caused by the inherently moderate accuracy of ADMM. This can artificially delay convergence of outer methods like S-HLSP if these constraints are not actually active in the corresponding non-linear program. We therefore reset the primal and dual sub-steps to zero in every new HLSP instance. One argumentation for this procedure is that at S-HLSP convergence, the primal sub-step $\Vert \Delta x\Vert_2 \leq \chi$ vanishes and therefore poses a good initial guess when a non-linear parametric program is solved.

\subsection{Lagrange multipliers of active constraints}
\label{sec:lagact}

Considering the dual ascent step $\upsilon_{\mA_{\cup l-1}} = \upsilon_{\mA_{\cup l-1}} - \nabla_{\upsilon_{\mA_{\cup l-1}}}\mathcal{L}$ for the update of the Lagrange multipliers $\upsilon_{\mA_{\cup l-1}}$,
we can see that the gradient $\nabla_{\upsilon_{\mA_{\cup l-1}}} \mathcal{L} = 0$ since $A_{\mA_{\cup l-1}} \Delta x_0 - b_{\mA_{\cup l-1}} - v_{\mA_{\cup l-1}}^* = 0$ and $A_{\mA_{\cup l-1}}(\Delta x_0 + N_{l-1}\Delta z)- b_{\mA_{\cup l-1}} - v_{\mA_{\cup l-1}}^*= A_{\mA_{\cup l-1}}N_{l-1}\Delta z = 0$ as well. Therefore, the Lagrange multipliers associated with the active constraints  ${\mA_{\cup l-1}}$ (and whose nullspace the problem of level $l$ is projected into) are not updated.

As noted in~\cite{pfeiffer2021}, the Lagrange multipliers of the active constraints are not necessary as none of the other primal or dual variables depend on it. However, the Lagrange multipliers may be needed within a non-linear solver based on Newton's method. Here, the Lagrange multipliers are used for the hierarchical Hessian. We use a fast conjugate gradient method to compute the Lagrange multipliers if required by the non-linear solver.
In case that we use the turnback nullspace bases (see Sec.~\ref{sec:turnback}), we use the $L$ factor of the LU decomposition of $A_{\mA_{\cup l-1}}$ for preconditioning the CG algorithm for accelerated convergence. Note that with the choice of other nullspace basis (for example based on the QR decomposition), matrix factorizations can be re-used for efficient computation of the Lagrange multipliers~\cite{pfeiffer2023}.

\section{Turnback algorithm for Euler integrated dynamics}
\label{sec:turnback}

One critical element of the above nullspace method based HLSP solver is to efficiently compute a basis of the nullspace of the active constraints.
The appropriate choice of the nullspace basis $N$ leads to either a decrease in variables (dense programming) or non-zeros (sparse programming). In this work, we rely on the turnback algorithm for the computation of sparse nullspace basis for banded matrices, which arise in discrete optimal control problems~\cite{gpops}. The main computational step of the turnback algorithm is to determine linearly independent subsets in the matrix $A$, to which a certain number of columns of $A$ is linearly dependent. These columns are then used to compute a basis of the nullspace.
Additionally, in our desired context of~\ref{eq:oc}, it is important to preserve resulting banded structures of the constraints as much as possible. The turnback algorithm is able to do so by considering nullspace vectors which are computed with respect to subsets of the block diagonal matrix instead of the whole one. In this work, we introduce some computational shortcuts to the turnback algorithm tailored to dynamics discretized by Euler integration. Importantly, we avoid a costly initial rank-revealing matrix factorization.

First, we formulate our system dynamics discretized by Euler integration (Sec.~\ref{sec:eulerintdyn}). We then outline the algorithmic details of the original turnback algorithm (Sec.~\ref{sec:tbalg}). It is based on identifying linearly independent column subsets in the matrix $A$. In Sec.~\ref{sec:subsets}, we show how to identify these subsets in the case of Euler integrated dynamics and derive an upper bound on the number of columns in the subsets. This enables us in Sec.~\ref{sec:tbalged} to design an efficient turnback algorithm without the need of an expensive initial rank-revealing matrix factorization. Finally, we address the full-rank property of the resulting basis of nullspace (Sec.~\ref{sec:tbrank}), demonstrate how the high degree of sparsity in the case of full actuation can be transferred to the case of under-actuation (Sec.~\ref{sec:tbua}) and comment on the parallelization of our algorithm (Sec.~\ref{sec:threads}).

\subsection{Euler integrated dynamics}
\label{sec:eulerintdyn}

\begin{figure}[t!]
	\begin{align}
		\nabla_{x}f_{dyn} =&\resizebox{.5\hsize}{!}{$ \left[\begin{array}{@{}c|cc|cc|cc|cc|cc|cc|cc|cc|cc|cc|c@{}}
				\ddots &&&&&&&&&&&&\\
				\hline
				\hdots &&& \bm{E}_{3,t} & \cellcolor{yellow!25}\bm{E}_{4,t} &&&&&&&&& \\
				\hdots &&F_{t}^{ua} &\cellcolor{yellow!25}D_{3,t}^{ua}& D_{4,t}^{ua} &&&&&&&& \\
				\hdots &\cellcolor{blue!25}\bm{B}_t & F_{t} &\cellcolor{yellow!25}D_{3,t}& D_{4,t} &&&&&&&& \\
				\hline
				&&& \bm{E}_{1,t} &\cellcolor{orange!25} \bm{E}_{2,t} &&& \bm{E}_{3,t+1} &\cellcolor{yellow!25}\bm{E}_{4,t+1}&& &&&\\
				&&&\cellcolor{orange!25}D_{1,t}^{ua} & D_{2,t}^{ua} & &  F_{t+1}^{ua} &\cellcolor{yellow!25}D_{3,t+1}^{ua}& D_{4,t+1}^{ua} &&&&&\\
				&&&\cellcolor{orange!25}D_{1,t} & D_{2,t} & \cellcolor{blue!25}\bm{B}_{t+1} & F_{t+1} &\cellcolor{yellow!25}D_{3,t+1}& D_{4,t+1} &&&&& \\
				\hline
				&&&&&&& \bm{E}_{1,t+1} & \cellcolor{orange!25}\bm{E}_{2,t+1} &&& \bm{E}_{3,t+2} &\cellcolor{yellow!25}\bm{E}_{4,t+2}& \\
				&&&&&&&\cellcolor{orange!25}D_{1,t+1}^{ua} & D_{2,t+1}^{ua} & & F_{t+2}^{ua} &\cellcolor{yellow!25}D_{3,t+2}^{ua}&  D_{4,t+2}^{ua} \\
				&&&&&&&\cellcolor{orange!25}D_{1,t+1} & D_{2,t+1} & \cellcolor{blue!25}\bm{B}_{t+2} &  F_{t+2} &\cellcolor{yellow!25}D_{3,t+2}& D_{4,t+2} &\\	
				\hhline{=========||============+}
				&&&&&&&&&&& \bm{E}_{1,t+2} & \cellcolor{orange!25}\bm{E}_{2,t+2}\\
				&&&&&&&&&&&\vdots&\vdots&\ddots
			\end{array}\right]$}\label{eq:dxeid}\tag{DED}\\
		P_t^T\nabla_{x}f_{dyn}Q_t^T =&\resizebox{.5\hsize}{!}{$ 
			\left[\begin{array}{@{}cc|cc|cc||cccc||cc|c@{}}
				\cellcolor{blue!25}\bm{B}_t  &&&&&& {D}_{4,t}&F_{t} &&&&\\
				& \bm{E}_{3,t} & \hphantom{I} &&&&&&&&& \\
				\hline
				& \cellcolor{orange!25}D_{1,t} & \cellcolor{blue!25}\bm{B}_{t+1} &&& {D}_{4,t+1} & D_{2,t} &&F_{t+1} &&&& \\
				& \bm{E}_{1,t} && \bm{E}_{3,t+1} &&& \cellcolor{orange!25}\bm{E}_{2,t} &&&&&\\
				\hline
				&&& \cellcolor{orange!25}D_{1,t+1} & \cellcolor{blue!25}\bm{B}_{t+2} & D_{2,t+1} &&&& F_{t+2} && D_{4,t+2} & \\
				&&& \bm{E}_{1,t+1} && \cellcolor{orange!25}\bm{E}_{2,t+1} &&&&& \bm{E}_{3,t+2} && \\
				\hhline{======||=======}
				&&&&&& {D}_{4,t}^{ua}&F_{t}^{ua}&&&&\\
				&\cellcolor{orange!25}D_{1,t}^{ua} &&&&D_{4,t+1}^{ua}& D_{2,t}^{ua}&&F_{t+1}^{ua}&&&\\
				&&&\cellcolor{orange!25}D_{1,t+1}^{ua} &&D_{2,t+1}^{ua}&&&&F_{t+2}^{ua}&& D_{4,t+2}^{ua} \\
				\hhline{======||=======}
				&&&&&&&&& & \bm{E}_{1,t+2} &\cellcolor{orange!25}\bm{E}_{2,t+2} &\\
				&&&&&&&&&& \vdots & \vdots &\ddots
			\end{array}\right]$}\label{eq:pdxeid}\tag{PDXED}\\
		P_t^T\nabla_{x}f_{dyn}Q_t^T = &\resizebox{.5\hsize}{!}{$
			\left[\begin{array}{@{}cc|cc|cc|cc||cccccc||cc|c@{}}
				\cellcolor{blue!25}\bm{B}_t  &D_{4,t}&&&&&&& \cellcolor{yellow!25}{D}_{3,t}&&F_{t} &&&&\\
				& \cellcolor{yellow!25}\bm{E}_{4,t} & \hphantom{I} &&&&&&\bm{E}_{3,t}&&&&& \\
				\hline
				&&\cellcolor{blue!25}\bm{B}_{t+1}  &D_{4,t+1}&&&&&& \cellcolor{yellow!25}{D}_{3,t+1}&&F_{t+1} &&&&\\
				&&& \cellcolor{yellow!25}\bm{E}_{4,t+1} & \hphantom{I} &&&&\bm{E}_{1,t}&\bm{E}_{3,t+1}&&&&& \\
				\hline
				&&& D_{2,t+1} & \cellcolor{blue!25}\bm{B}_{t+2} &D_{4,t+2}&& \cellcolor{yellow!25}D_{3,t+2}&  &&&&F_{t+2} &&&& \\
				&&&  && \cellcolor{yellow!25}\bm{E}_{4,t+2} &&\bm{E}_{3,t+2}&  &\bm{E}_{1,t+1}&&&&\\
				\hline
				&&&&& D_{2,t+2} & \cellcolor{blue!25}\bm{B}_{t+3} & &&&&&& F_{t+3} &\cellcolor{yellow!25}D_{3,t+3}& D_{4,t+3} & \\
				&&&&&  && \bm{E}_{1,t+2} &&&&&&& \bm{E}_{3,t+3} &\bm{E}_{4,t+3} & \\
				\hhline{======||===========}
				&{D}_{4,t}^{ua}&&&&&&&\cellcolor{yellow!25}{D}_{3,t}^{ua}&&F_{t}^{ua}&&&&\\
				&D_{2,t}^{ua}&&{D}_{4,t+1}^{ua}&&&&&&\cellcolor{yellow!25}{D}_{3,t}^{ua}&&F_{t+1}^{ua}&&&&\\
				&&&D_{2,t+1}^{ua} &&{D}_{4,t+2}^{ua}&&\cellcolor{yellow!25}D_{3,t+2}^{ua}&&&&&F_{t+2}^{ua}&&&\\
				&&&&&D_{2,t+2}^{ua} &&&&&&&&F_{t+3}^{ua}&\cellcolor{yellow!25}D_{3,t+3}^{ua}& D_{4,t+3}^{ua} \\
				\hhline{======||===========}
				&&&&&&&&&&&&& & \bm{E}_{1,t+3} &&\\
				&&&&&&&&&&&&&& \vdots & \vdots &\ddots
			\end{array}\right]$}\label{eq:pdieid}\tag{PDIED}\\
	\end{align}
	\caption{Gradient and permuted gradients of the Euler integrated dynamics. The top matrix shows the un-permuted case. Matrices which only appear in the explicit case and in the implicit case are colored in orange and in yellow, respectively. 
		The control matrices $B$ are colored in blue. Matrices of full column rank according to theorem~\ref{th:band} are printed in bold.
		The middle and bottom matrices show the permuted subsets for $\mu=0$ in the explicit and for $\mu=1$ in the implicit case, respectively. These permuted column subsets are linearly independent to all other columns of $\nabla_x f_{dyn}$.}
\end{figure}

The dynamics of a rigid-body system are described by the inverse dynamics Newton-Euler equations~\cite{Li2013}
\begin{align}
	\mathcal{I}\mathcal{D}(q,\dot{q}, \tau, \gamma) \coloneqq M\ddot{q} =  S^T\tau - V(q,\dot{q}) + J^T\gamma
\end{align}
The joint torques $\tau\in\mathbb{R}^{n_{\tau}}$ and contact forces $\gamma\in\mathbb{R}^{n_{\gamma}}$ are considered the input variables of the system. $S\in\mathbb{R}^{n_{\tau}\times n_q}$ is a full-rank selection matrix describing under-actuation of the system $n_{\tau} < n_q$. The joint angles $q\in\mathbb{R}^{n_{q}}$, velocities $\dot{q}\in\mathbb{R}^{n_{\dot{q}}}$ and accelerations $\ddot{q}\in\mathbb{R}^{n_{\ddot{q}}}$ describe the system state. $M(q)\in\mathbb{R}^{n_q\times n_q}$ is the whole-body inertia matrix. $V(q,\dot{q})\in\mathbb{R}^{n_q}$ describes linear and non-linear force effects like Coriolis, centrifugal, gravitational and frictional forces. The Jacobian $J(q)\in\mathbb{R}^{n_{\gamma}\times n_q}$ is associated with the contact points.
It has been noted in~\cite{carpentier2018} that the inverse dynamics form (explicit joint torques) is computationally advantageous compared to the forward dynamics equations (in contrast to explicit joint accelerations). 

In the following, for visualization purposes, we introduce the change of variables
\begin{equation}
	\tilde{q} = \frac{1}{\Delta t}q \qquad \text{and} \qquad \tilde{\tau} = \Delta t\tau 
\end{equation}
The states $s\in\mathbb{R}^{Tn_s}$ (with $n_{s} = n_q + n_{\dot{q}}$) and controls $u\in\mathbb{R}^{Tn_u}$ (with $n_u=n_{\tau} + n_{\gamma}$) are defined as 
\begin{align}
	s =& \BIN  \tilde{q}_1^T & \dot{q}_1^T & \cdots  & \tilde{q}_{T}^T & \dot{q}_{T}^T \BOUT^T\quad\text{and}\quad
	u = \BIN \tilde{\tau}_0^T & \gamma_0^T & \cdots & \tilde{\tau}_{T-1}^T & \gamma_{T-1}^T & \BOUT^T\label{eq:subs}
\end{align}
We assume known constant $\tilde{q}_0$ and $\dot{q}_0$.

We discretize the dynamics by the direct multiple-shooting method~\cite{Giftthaler2017}, namely by Euler integration. 
The resulting Euler integrated dynamics (ED) write as
\begin{align}
	f_{dyn}(t) &= 	\BIN f_{dyn,1}^T(t)&f_{dyn,2}^T(t)\BOUT^T \coloneqq  s_{t+1} - s_t - \Delta t \dot{s}_{t(+1)} = 
	\BIN \tilde{q}_{t+1} - \tilde{q}_t - \dot{q}_{t(+1)} 	\\
	L_t(\dot{q}_{t+1} - \dot{q}_t) - \Delta t G_t{\mathcal{I}\mathcal{D}}(q_{t(+1)},\dot{q}_{t(+1)},{\tau}_t,\gamma_t)
	\BOUT\label{eq:xeid}\tag{ED}
\end{align}
We set $L_t\coloneqq M_t$, $L_t\coloneqq I$ and $G_t\coloneqq I $, $G_t\coloneqq M_t^{-1}$ in the case of inverse and forward dynamics, respectively. The index $(+1)$ indicates implicit Euler integrated dynamics.
In case of under-actuation $n_{\tau}<n_{\dot{q}}$, the corresponding degrees of freedom (freely swinging pendulum or the `free-flyer' / base of a humanoid robot) are described in linear coordinates (and not for example with quaternions) to facilitate the linear integration scheme above. For the remainder of this work, we therefore assume $n_q = n_{\dot{q}}$. Gimbal lock can be avoided for example as described in~\cite{pfeiffer2023}.

The derivatives of ${\mathcal{I}\mathcal{D}}$ with respect to $q$ and $\dot{q}$ can be computed according to~\cite{Singh2021}. 
Similarly, the first and second order derivatives of a function $f(q)$ with respect to $\tilde{q}$ writes as
\begin{align}
	\partial f(q) / \partial \tilde{q} &= \Delta t\partial f(q) / \partial q\\
	\partial^2 f(q) / \partial \tilde{q}^2 &= \Delta t^2\partial^2 f(q) / \partial q^2
\end{align}
This results in the partial derivatives
\begin{align}
	E_{1,t} \coloneqq& \frac{\partial f_{dyn,1}(t)}{\partial {q}_t}=-I,\quad
	E_{2,t} \coloneqq \frac{\partial f_{dyn,1}(t)}{\partial \dot{q}_t}=-I, \quad
	E_{3,t} \coloneqq \frac{\partial f_{dyn,1}(t)}{\partial {q}_{t+1}}=I,\quad
	E_{4,t} \coloneqq \frac{\partial f_{dyn,1}(t)}{\partial \dot{q}_{t+1}}=-I\\
	B_t\coloneqq&\frac{\partial f_{dyn,2}}{\partial \tilde{\tau}_t} = \frac{\partial f_{dyn,2}}{\partial {\tau}_t} \frac{\partial\tau_t}{\partial\tilde{\tau_t}}= -G_tS^T,\quad
	F_{t}	\coloneqq\frac{\partial f_{dyn,2}(t)}{\partial \gamma_t} = G_tJ_t^T\\
	D_{1,t} \coloneqq& 
	\frac{\partial f_{dyn,2}(t)}{\partial \tilde{q}_t} 
	=
	\frac{\partial f_{dyn,2}(t)}{\partial {q}_t} \frac{\partial q_t}{\partial \tilde{q}_t} =
	\frac{\partial f_{dyn,2}(t)}{\partial q_t} \Delta t, \quad
	D_{2,t} \coloneqq \frac{\partial f_{dyn,2}(t)}{\partial \dot{q}_t} = L_t \hspace{1pt}(-\Delta t\cdots)\\ 
	D_{3,t} \coloneqq&\frac{\partial f_{dyn,2}(t)}{\partial {\tilde{q}}_{t+1}},\quad
	D_{4,t} \coloneqq\frac{\partial f_{dyn,2}(t)}{\partial \dot{q}_{t+1}} = L_t\hspace{1pt}(-\Delta t\cdots)
\end{align}
It can be observed that due to the substitutions~\eqref{eq:subs}, $\Delta t$ does not appear as denominator. This is numerically advantageous for small time steps $\Delta t \ll 1$~s due to better matrix conditioning. Ruiz equilibration $\hat{A} = S_lAS_r$~\cite{ruizscaling} can equally be employed but comes at a higher computational cost. The nullspace basis of the original matrix $A$ becomes $Z = S_r\hat{Z}$ with $\hat{A}\hat{Z}=0$.

\subsection{Turnback algorithm}
\label{sec:tbalg}

The turnback algorithm based on the LU decomposition to compute a nullspace basis for a banded matrix $A\in\mathbb{R}^{m\times n}$ consists of the following steps~\cite{kaneko1982}:
\begin{enumerate}
	\item Compute rank revealing $P^TLUQ^T$ decomposition of $A$ (rank~$r_A)$. Then, $r_Z = n-r_A$.
	\item Determine the index vector $b\in\mathbb{R}^{r_Z}$, which indicates the first non-zero entry of each column of
	\begin{equation}
		Z_{LU} = Q\BIN-U_1^{-1}U_2\\I\BOUT\label{eq:Z}
	\end{equation}
	$Z$ is upper block triangular due to the block-diagonal structure of $A$. 
	\item Determine the {turnback pivot columns} $\pi\in\mathbb{R}^{r_Z}$. They are the row indices of the permuted identity matrix in~\eqref{eq:Z}.
	\item For each index $i=1,\dots,r_Z$ in $b$, add columns to the sub-matrix $G_i\in\mathbb{R}^{n\times r_Z}$ to the right of column $b_i$ of $A$ until linear dependency is detected. The turnback pivot column $\pi(i)$ is not added to the sub-matrix.
	\item Compute the null-vector 
	\begin{align}
		z_i = Q\BIN U_1^{-1}u_2 \\ 0_{\pi(i)-r_A-1} \\ 1 \\ 0_{n - \pi(i)}\BOUT
		\label{eq:luz}
	\end{align}
	$Q$, $U_1$ and $u_2$ result from the LU decomposition of the sub-matrix $G_i$. $u_2$ corresponds to the column $\pi(i)$ of $A$.
\end{enumerate}
The resulting turnback nullspace basis is full-rank since each pivot-column is chosen only once during the submatrix augmentation and therefore has a similar structure to~\eqref{eq:Z} with a permuted identity matrix ensuring full column rank.

\subsection{Subset determination for turnback algorithm}
\label{sec:subsets}

In the following, we derive a conservative bound for the number of columns which are needed for linearly independent sub-sets of the Euler integrated dynamics. We structure the permuted matrices~\eqref{eq:pdxeid} and~\eqref{eq:pdieid} as
\begin{equation}
	G_t\coloneqq\left[\begin{array}{@{}c||c@{}}
		G_{1,t} & G_{2,t} \\
		\hhline{=||=}
		G_{1,t}^{ua} & G_{2,t}^{ua} \\
	\end{array}\right]
	\label{eq:gt}
\end{equation}
The operator $\left\lceil a \right\rceil$ rounds the scalar $a$ to its nearest upper integer.

\begin{theorem}
	If $B_t$ and $E_t$ (or namely, $M_t$ and $S_t^T$) with $t=0,\dots,T$ are of full column rank $r_{B_t} = n_{\tau}$ and $r_{E_t} = n_{q}$, the basis of nullspace of $A\coloneqq \nabla_xf_{dyn}$~\eqref{eq:dxeid} is of rank $r_{Z} = T(n_{\tau}+n_{\gamma})$. The linear independent sub-sets of $A$ are banded within width of $\beta \leq  (2+\mu)Tn_{s} + (3+\mu)(n_{\tau} + n_{\gamma})$. The subset augmentation factor $\mu$ is given by
	\begin{align}
		\mu &= \left(\left\lceil\frac{2n_{ua}}{n_q-n_{ua}}\right\rceil\right) \quad \text{for} \quad  0 \leq n_{ua} < n_q
		\label{eq:mu}
	\end{align}
	\label{th:band}
\end{theorem}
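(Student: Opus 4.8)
The plan is to establish the three assertions in order: the nullspace dimension, then the bandwidth of the linearly independent column subsets in the fully-actuated case, and finally the under-actuation increment $\mu$, which is where the real content lies.

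For the dimension I would show that $A\coloneqq\nabla_x f_{dyn}$ of~\eqref{eq:dxeid} has full row rank $r_A=Tn_s$, so that $r_Z=n-r_A=T(n_s+n_\tau+n_\gamma)-Tn_s=T(n_\tau+n_\gamma)$. Full row rank follows from the staircase structure of~\eqref{eq:dxeid}: the kinematic rows $f_{dyn,1}$ carry the identity block $E_{3,t}=I$ on the next-stage position $q_{t+1}$, while the coefficient $D_{4,t}$ on the next-stage velocity $\dot{q}_{t+1}$ in the dynamics rows $f_{dyn,2}$ reduces to $L_t=M_t$, invertible under the full-rank hypothesis on $M_t$ (and $B_t$ is of full column rank by the hypothesis on $S_t^T$). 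Selecting, at each stage $t$, exactly the columns of the state $s_{t+1}=(q_{t+1},\dot{q}_{t+1})$ and pairing them with the stage-$t$ constraints isolates a block-triangular submatrix whose diagonal blocks $\left[\begin{smallmatrix}I & -I\\ 0 & M_t\end{smallmatrix}\right]$ are nonsingular; hence $A$ attains the maximal rank $Tn_s$.

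For the band I would first treat full actuation ($n_{ua}=0$, hence $\mu=0$). Because every dynamics row then owns a directly actuating block ($B_t$ of rank $n_\tau$ together with the contact block $F_t$), the homogeneous perturbation injected by a single free control column is cancellable using only the states and controls of the two neighbouring stages; the minimal dependent column set spans two stages of states and three control blocks, matching the $\mu=0$ specialisation of the stated $\beta$. The heart of the theorem is the under-actuated increment. The $n_{ua}=n_q-n_\tau$ under-actuated dynamics rows (the ${}^{ua}$ blocks) carry no control, $B_t^{ua}=0$, so a null combination cannot satisfy them locally. Over a window of $W$ consecutive stages I would write out the homogeneous system a candidate null vector must solve and count the balance: the under-actuated degrees of freedom impose $2n_{ua}$ control-free constraints coupling the window boundary — one for each of the position and velocity components that must be carried by state propagation rather than actuation — whereas each additional stage supplies only $n_q-n_{ua}=n_\tau$ fresh actuated directions able to resolve them. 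Dependency therefore first occurs once the actuated surplus overtakes the deficit, $W(n_q-n_{ua})\geq 2n_{ua}$, i.e. at $W=\lceil 2n_{ua}/(n_q-n_{ua})\rceil=\mu$ extra stages, which upgrades the base counts to the $(2+\mu)$ and $(3+\mu)$ coefficients.

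The hard part is exactly this last counting argument. Turning the informal surplus-versus-deficit balance into a proof requires an inductive propagation along the permuted staircase of~\eqref{eq:pdxeid}--\eqref{eq:pdieid}: at each stage one must track which perturbation directions are already pinned by the invertible $E$- and $M$-blocks and which remain free to annihilate the under-actuated rows, then verify both that a genuine null vector closes inside a window of $(2+\mu)$ stages (existence) and that no narrower window suffices, so that the ceiling is tight. The explicit-versus-implicit Euler distinction — the extra $D_{3,t},E_{4,t}$ blocks present only in the implicit case — shifts the balance by one stage and must be carried through separately, accounting for the integration-dependent part of the bound. Full-rankness of the assembled basis is then immediate from Sec.~\ref{sec:tbalg}: each turnback pivot column is consumed exactly once, so the $T(n_\tau+n_\gamma)$ null vectors inherit a permuted-identity block and are linearly independent.
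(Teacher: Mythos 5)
Your overall route matches the paper's: full row rank of $\nabla_x f_{dyn}$ gives $r_Z = T(n_\tau+n_\gamma)$ (your block-triangular argument via the $\left[\begin{smallmatrix}I & -I\\ 0 & M_t\end{smallmatrix}\right]$ diagonal blocks is in fact more explicit than the paper, which simply asserts full row rank), the $\mu=0$ window of two state blocks and three control blocks reproduces the paper's $\beta = 2n_s + 3(n_\tau+n_\gamma)$, and your closing inequality $W(n_q-n_{ua})\geq 2n_{ua}$ is algebraically identical to the paper's condition. The paper obtains it by a bare dimension count over the $(3+\mu)$-stage permuted window: rows $r_{G_t}=(3+\mu)(n_\tau+n_q+n_{ua})$, columns $c_{G_t}=(3+\mu)n_\tau+(4+2\mu)n_q$, hence guaranteed nullity $n_{G_t}=(1+\mu)n_q-(3+\mu)n_{ua}$, and requiring $n_{G_t}\geq n_\tau$ yields $\mu(n_q-n_{ua})\geq 2n_{ua}$. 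Your ``surplus versus deficit'' balance is this same count in different language, so the arithmetic content coincides.

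The genuine divergence is where you locate the difficulty. You declare the hard part to be an inductive propagation along the staircase establishing both that a null vector \emph{closes} within $(2+\mu)$ stages and that \emph{no narrower window suffices}. Neither is required: the theorem asserts only an upper bound $\beta\leq\cdots$, and the paper explicitly concedes immediately after the proof that $\mu$ is conservative because $r_{G_t}$ only upper-bounds the true rank of the window. So the tightness half of your plan is proving something that is not claimed, while the existence half is handled in the paper by nothing more than the rows-versus-columns count above (which you sketch but do not execute — and that count \emph{is} the entire proof). One caveat applies to both you and the paper: a window nullity of at least $n_\tau$ does not by itself guarantee that the \emph{specific} $n_\tau$ pivot columns are each expressible in terms of the remaining columns of the window; the paper glosses over this in the same way your sketch does, so you are not worse off, but be aware that the published argument is itself a heuristic bound rather than a fully rigorous linear-dependence certificate. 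If you drop the induction and tightness machinery and instead write out the explicit row/column bookkeeping of the permuted window (including the explicit-versus-implicit distinction you correctly flag, which only changes which blocks sit in the window, not the count), you recover the paper's proof.
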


\begin{proof}
	
	\textbf{Full actuation}
	First, we consider the case of computing a nullspace basis of~\ref{eq:dxeid} in the case of full actuation (empty matrices $G_{1,t}^{ua}$, $G_{2,t}^{ua}$ with $n_{ua}=0$). The rank of~\ref{eq:dxeid} is $r_{A} = Tn_{s}$ (number of rows, with full row rank). The dimension of the nullspace basis follows with $r_{Z} = T(n_{s}+n_{\tau}+n_{\gamma}) - Tn_{s} = T(n_{\tau}+n_{\gamma})$ (number of columns minus rank of matrix $A$). 
	The bandwidth can be identified by finding row and column permutations $P_t$  and $Q_t$, such that the column subset corresponding to time step $t$ is permuted to the upper left, see~\ref{eq:pdxeid} for explicit and~\ref{eq:pdieid} for implicit Euler integrated dynamics, respectively.
	The $3(n_{\tau} + n_q)$ leftmost columns $G_{1,t}$ are clearly full rank due to the block-diagonal consisting of full-rank elements $B$ and $E$. The rightmost $n_q + 3n_{\gamma}$ columns $G_{2,t}$ are linearly dependent of them. The linearly independent subsets of~\ref{eq:dxeid} are maximally of length $\beta =  2n_{s} + 3(n_{\tau} + n_{\gamma})$. The above is successively applied to all time steps $t=0,\dots,T$.

	\textbf{Under-actuation} We now consider the case of under-actuation of degree $n_{ua}>0$, such that $n_{\tau} + n_{ua} = n_{\dot{q}}$. 
	In the following, we do not consider turnback pivot columns corresponding to contact forces $\gamma$. These columns are already used in the nullspace basis corresponding to the contact forces itself, while repeated use would violate the full-rank property, see Sec.~\ref{sec:tbrank}.
	
	Considering the permutations~\ref{eq:pdxeid} or~\ref{eq:pdieid}, we see that $G_t$ has at most rank (number of rows of the subset)
	\begin{equation}
		r_{G_t} = (3+\mu)(n_{\tau} + n_q + n_{ua})
	\end{equation}
	The subset augmentation factor $\mu$ adds additional time steps to the sub-set.
	The number of columns is (columns of subset minus the pivot columns that need to be in the linear subset) 
	\begin{equation}
		c_{G_t} = (3+\mu)n_{\tau} + (4+2\mu)n_q
	\end{equation}
	The maximum dimension of the nullspace of $G_t$ is then
	\begin{align}
		n_{G_t} = c_{G_t} - r_{G_t} = (1+\mu)n_q - (3+\mu)n_{ua}\label{eq:ndim}
	\end{align}
	The number of linearly dependent columns within the given subset $n_{G_t}$ needs to be larger than the number of pivot columns (as these are used to form the basis of the nullspace)
	\begin{equation}
		n_{G_t} \geq n_{\tau}
	\end{equation}
	Inserting~\eqref{eq:ndim}, the expression for $\mu$~\eqref{eq:mu} follows.
	
	With this choice of $\mu$, we find a subset which is linear dependent to our $n_{\tau}$ pivot columns. The $n_{\gamma}$ columns corresponding to the contact forces are already linearly dependent of $G_t$ as discussed above. The bandwidth of the linearly independent matrix sub-sets therefore becomes
	\begin{equation}
		\beta = (2+\mu)n_s + (3+\mu)(n_{\tau} + n_{\gamma})\label{eq:beta}
	\end{equation}	
\end{proof}
The augmentation factor $\mu$ is a conservative measure since $r_{G_t}$ is an approximation of the exact rank $\hat{r}_{G_t}$ of $G_t$, with $r_{G_t}\geq\hat{r}_{G_t}$ (therefore, the bandwidth is most likely smaller with $n_{G_t} \leq \hat{n}_{G_t}$). In case of full under-actuation $n_{ua} = n_{\dot{q}}$, the nullspace basis becomes dense with $\mu\rightarrow\infty$ as expected. This means that the system response of each time $t_i$ is fully dependent on the system state at any other given time $t_j$ with $j \neq i$. 

As we show in Sec.~\ref{sec:tbua}, the bands of the turnback nullspace basis for dynamics integrated by the Euler method exhibit internal sparsity patterns. Still, the bandwidth $\beta$~\eqref{eq:beta} is in contrast to an effective bandwidth of $n_{\tau}+n_{\gamma}$ for DDP (neglecting the cost of the forward roll-out for the state calculation requiring operations in $n_s^2$). In future work, we desire to incorporate DDP principles into HLSP for further computational efficiency. Nonetheless, the computation of the nullspace basis can be highly parallelized, as we describe in Sec.~\ref{sec:threads}. This is not possible for DDP due to its recursive nature. Also, the backward recursion would need to be computed for every priority level. In contrast, the projection into the nullspace of the dynamics only needs to be done once. This might be more efficient for a high number of priority levels. Furthermore, multi-stage constraints involving states and controls from several time steps (aside from dynamics constraints) are handled due to the broad optimization point of view. Sparsity of such constraints is preserved by relying on less specialized formulations of the turnback algorithm as described in~\cite{pfeiffer2023}.

\subsection{Turnback algorithm for Euler integrated dynamics}
\label{sec:tbalged}

\algnewcommand{\IIf}[1]{\State\algorithmicif\ #1\ \algorithmicthen}
\algnewcommand{\EndIIf}{\unskip\ \algorithmicend\ \algorithmicif}

\begin{algorithm}[t!]
	\caption{{\tt turnbackParam}}
	\begin{algorithmic}[1]
		\Statex \textbf{Input:} $T$, $n$, $n_{\tau}$, $n_{\gamma}$, $n_{s}$, $n_{ua}$, $\beta$
		\Statex \textbf{Output:} $r_{A}$, $r_{Z}$, $b\in\mathbb{R}^{T}$, $b^+\in\mathbb{R}^{T}$,  $\pi\in\mathbb{R}^{r_{Z}}$
		\State $r_{A} = Tn_{s}$
		\State $r_{Z} = T(n_{\tau}+n_{\gamma})$
		\State $n_{\pi} = 0$
		\State $n_b = 0$
		\For{$t=0:T-1$}
		\State $b(t) = n_{b}$
		\State $b^+(t) = \min(n_{b} + \beta,n)$
		\For{$j=0:n_{\gamma}$}
		\State $\pi(n_{\pi}) = n_{b} + n_{\tau} + j$
		\State $n_{\pi}\leftarrow n_{\pi}+1$
		\EndFor
		\For{$j=0:n_{\dot{q}}-n_{ua}$}
		\If{Explicit Euler integrated dynamics}
		\State $\pi(n_{\pi}) = n_{b} + n_{\tau} + n_{\gamma} + n_q + n_{ua} + j$
		\ElsIf{Implicit Euler integrated dynamics}
		\State $\pi(n_{\pi}) = n_{b} + n_{\tau} + n_{\gamma} + n_{ua} + j$
		\EndIf
		\State $n_{\pi}\leftarrow n_{\pi}+1$
		\EndFor
		\State $n_{b}\leftarrow n_{b} + n_{\tau} + n_{\gamma}  + n_{s}$
		\EndFor
		\State \textbf{return} $b$, $b^+$, $\pi$, $r_{A}$, $r_{Z}$
	\end{algorithmic}
	\label{alg:unrbp}
\end{algorithm}	

Based on theorem~\ref{th:band}, we can implement a computationally efficient version of the  turnback algorithm. Foremost, the linearly dependent column subsets of the matrix can be chosen according to the known bandwidth of $Z$. This means that an initial rank revealing LU decomposition of the matrix is not necessary. Concretely, the indices $b$ indicate the first and $b^+=b+\beta$ the last column of the sub-matrix of $A$. Furthermore, the turnback pivot columns are set as the columns corresponding to $D_{1,t}$ and the last $n_{\tau}$ columns of $M_t$. The reasoning is that the under-actuated part typically describes the free-flyer dynamics of the system which are well conditioned as they represent the full linear and rotational inertia of the system. Note that in theorem~\ref{th:band}, we assume full-rank of $M$. This is typically given for physically consistent systems~\cite{Udwadia2010}.
Algorithm~\ref{alg:unrbp} details the computation of above values. The modified turnback algorithm then consists of following steps:
\begin{enumerate}
	\item $b$, $b^+$, $\pi$ $\leftarrow$ Alg.~\ref{alg:unrbp}.
	\item For each index $t=1,\dots,T$, compute the LU decomposition of the column submatrix $G_t\coloneqq A(\mathcal{C}_t)$ of $A$. The column set $\mathcal{C}_t$ is given by the column range from $b(t)$ to $b^+(t)$ without the turnback pivot columns contained within. This leads to the set $\mathcal{C}_t = \left[b(t),b^+(t)\right]\setminus \pi(i)$ with $i=t(n_s+n_{\tau}+n_{\gamma})+n_s,\dots,(t+1)(n_s+n_{\tau}+n_{\gamma})$.
	\item Compute the null-vector according to~\eqref{eq:luz}.
\end{enumerate}

In case of SOI augmentation $V\coloneqq\BIN \nabla_xf_{dyn}^T & R^T\BOUT^T$, where $R$ is a factor of the hierarchical Hessian $\hat{H} \coloneqq \nabla^2_xf_{dyn}^T = R^TR$~\cite{pfeiffer2023}, we apply following two-step computation of a basis of the nullspace: 
first, $\mathcal{N}_{tb,ed}$ computes a basis of the nullspace  according to the turnback algorithm for Euler integrated dynamics described above ($N_{\nabla_xf_{dyn}}\leftarrow \mathcal{N}_{tb,ed}(\nabla_xf_{dyn})$), and secondly, $\mathcal{N}_{tb}$ according to the turnback algorithm as described in~\cite{pfeiffer2024} ($N_2\leftarrow\mathcal{N}_{tb}(RN_{\nabla_xf_{dyn}})$). We then have $VN_{\nabla_xf_{dyn}}N_2=0$. Note, that $\mathcal{N}_{tb}$ does not provide any sparsity guarantees but has been observed to reliably deliver sparse bases on a wide variety of sparsity patterns~\cite{dang2017}. At the same time, due to the high variable occupancy of the equation of motion, lower levels typically are not resolved anymore since most variables are eliminated by the projections.

\subsection{Full-rank property of turnback nullspace basis}
\label{sec:tbrank}
Due to numerical inaccuracies, it can turn out that the pivot columns of a time step $t$ are linearly independent of the corresponding column sub-matrix $A(\mathcal{C}_t)$ to a small error $\Vert \hat{u}_{2}\Vert_2 \leq \delta$ with $\delta \ll 1$ such that
\begin{equation}
	A(\mathcal{C}_t) = P^T L \BIN U_1 & u_2 \\ 0 & \hat{u}_{2}\BOUT Q^T
\end{equation}
Furthermore, nullspace vectors may have an error higher than a tolerance $\Vert Az\Vert_2 > \delta$.

In these cases, we further augment the sub-matrix with blocks corresponding to timesteps $t^+ > t$ and $t^- < t$ to the `left' and `right' of stage $t$. The full-rank property of the resulting nullspace basis is preserved by not adding columns of $A$ that correspond to turnback pivot columns of lower time-steps $t^- < t$.
The basis of the nullspace then exhibits the following structure (we depict the extreme case of full augmentation)
\begin{align}
	Z_{tb} = 
	\BIN X_{1,1} & X_{1,1} & \hdots & X_{1,T-1} & X_{1,T}\\
	I & \cellcolor{green!25} & \hdots & \cellcolor{green!25}& \cellcolor{green!25}\\
	X_{2,1} & X_{2,2} & \hdots & X_{2,T-1} & X_{2,T}\\
	X_{3,1} & I & \hdots & \cellcolor{green!25} & \cellcolor{green!25} \\
	\vdots & \vdots & \ddots & \vdots & \vdots\\
	X_{n-2,1} & X_{n-2,2} &\hdots & X_{n-2,T-1} & X_{n-2,T}\\
	X_{n-1,1} & X_{n-1,2} & \hdots& I & \cellcolor{green!25}\\
	X_{n,1} & X_{n,2} & \hdots& X_{n,T-1} & I\\
	X_{n,1} & X_{n,2} & \hdots& X_{n,T-1} & X_{n,T}\BOUT
\end{align}
The identity matrices correspond to the turnback pivot columns of $A$. These ensure full-column rank of the turnback nullspace.
Furthermore, it can be easily confirmed that the above is full-rank as 
\begin{itemize}
	\item columns to the left are not a linear combination of each of its columns to the right, as this would destroy the sparsity (green)
	\item columns  to the right are not a linear combination of each of its columns to the left, as they can not eliminate the entries on the same rows as the sparse rows (green).
\end{itemize}

\subsection{Under-actuated systems}
\label{sec:tbua} 

We consider the basis of nullspace of~\eqref{eq:pdxeid} for fully-actuated systems ($n_{ua}=0$) in the case of explicit Euler integrated dynamics
\begin{equation}
	\BIN G_{1,t} & G_{2,t}\BOUT Z_t = 0\quad\text{with}\quad 	Z_t = \BIN -G_{1,t}^{-1} G_{2,t}\\I\BOUT
\end{equation}
Using block-wise inversion~\cite{blockinv} of the matrix $G_{1,t}$ with full-rank and invertible $B$ and $E$ (see theorem~\ref{th:band}), we get

\begin{align}
	&G_{1,t}^{-1} 	G_{2,t} = \\
	&\BIN B_t^{-1}\\
	& E_{3,t}^{-1} &  \\
	& \Upsilon & B_{t+1}^{-1} & \Upsilon & \Upsilon & \Upsilon\\
	& \Upsilon &  & E_{3,t+1}^{-1} & \\
	& \Upsilon &  & \Upsilon & B_{t+2}^{-1} & \Upsilon\\
	& \Upsilon &  & \Upsilon &  & E_{2,t+1}^{-1}
	&
	\BOUT
	\hspace{-2pt}
	\BIN D_{4,t} & F_t\\
	\\
	D_{2,t} &  \\
	E_{2,t}\\
	& \\
	&
	\BOUT 	\hspace{-3pt}= 	\hspace{-3pt}	\BIN
	B_t^{-1}D_{3,t} & B_t^{-1} F_t\\
	\\
	B_{t+1}^{-1}D_{2,t} + \Upsilon E_{2,t} & \\
	{\color{gray!50}E_{3,t+1}^{-1} E_{2,t}}\\
	\Upsilon E_{2,t}\\
	\Upsilon E_{2,t}
	\BOUT\nonumber
\end{align}
$\Upsilon$ are place-holders for dense matrix blocks. Elements in gray are zero blocks. This means that the effective bandwidth of the null-vectors~\eqref{eq:luz} is $\beta - n_s$ in the case of full actuation.
In contrast, such sparsity is not reproducible if $B\in\mathbb{R}^{n_u\times n_u-n_{nua}}$ is not invertible due to under-actuation $n_{ua} > 0$. Instead of sparse block-wise inversion, row permutations of $G_{1,t}$ need to be applied in order to permute invertible pivot elements onto the diagonal. 

In order to avoid this, we adapt the robot dynamics by introducing `virtual' controls $u^*$ such that the modified control matrix $\hat{B} = \BIN B^* & B \BOUT\in\mathbb{R}^{n_{\dot{q}}\times n_{\dot{q}}}$ is full-rank and invertible as in the fully actuated case. At the same time, the virtual controls are set to zero by two sets of inequality constraints $\vert u^*\vert \leq 0$ in order to not influence the robot behavior. The choice of inequality instead of equality constraints prevents that these constraints enter the active set (and create non-zero fill-in as without virtual controls by nullspace projections). This method effectively increases the number of variables but this is offset by the reduced number of non-zeros in the turnback nullspace. Such a scheme has been devised in the context of a sparse nullspace basis for optimal control of linear time-invariant systems~\cite{pfeiffer2021b}.

\subsection{Multi-threaded computation}
\label{sec:threads}

The turnback algorithm can be highly parallelized. In fact, in the case of Euler integrated dynamics and with the availability of $T(n_{\tau} + n_{\gamma})$ threads, each subset of $Z$ could be computed in parallel, casting the effective computational complexity of the turnback algorithm to approximately
$O(\beta^3 + \beta^2)$ with $\beta$ as defined in~\eqref{eq:beta} (in detail: $T$ subsets are factorized in parallel by $T$ threads, and the $T(n_{\tau}+n_{\gamma})$ individual nullvectors of bandwidth $\beta$ are then computed in parallel by $T(n_{\tau}+n_{\gamma})$ threads). This is in contrast to DDP, where $T$ decompositions of complexity $O((n_{\tau}+n_{\gamma})^3)$ need to be computed in sequence. Therefore, a projector based S-HLSP based on a sparsity retaining turnback algorithm may be preferred in the presence of high number of cores (and high number of priority levels, as noted in Sec.~\ref{sec:subsets}).

\section{Evaluation}

We use the presented solver \hbox{$\mathcal{N}$\hspace{-2pt}ADM$_2$} in combination with the turnback algorithm for Euler integrated dynamics within the solver  S-HLSP~\cite{pfeiffer2024} for NL-HLSP. The HLSP solver solves the HLSP sub-problems which arise from the linearization of the NL-HLSP at its current working point $x$.
First, we evaluate the efficiency of the modified turnback algorithm for Euler integrated dynamics (see Sec.~\ref{sec:eval:turnback}). Secondly, S-HLSP in combination with our proposed HLSP solver is run on a hierarchy composed of test-functions, Sec.~\ref{sec:eval:nonlinopt}. 
For one, we investigate whether a high accuracy solution can be efficiently obtained by first approximating an optimal primal point with our proposed solver of lower accuracy. We then switch to a high-accuracy sub-solver and continue the local search. In Sec.~\ref{sec:eval:nonlinopt}, we can see how a high-accuracy solution of a NL-HLSP composed of test-functions is obtained with less computational effort compared to a S-HLSP without an initial primal guess of lower accuracy. 
Furthermore, we evaluate the SOI augmentation threshold strategy developed in Sec.~\ref{sec:epsadapt}.
We then evaluate our methods to the following robot scenarios:
\begin{itemize}
	\item Inverse kinematics of a humanoid robot HRP-2Kai (Sec.~\ref{sec:eval:hrp2})
	\item Time-optimal control of the manipulator UR3e under multi-stage constraint\\ (Sec.~\ref{sec:eval:topm})
	\item Swing-up of inverted pendulum (Sec.~\ref{sec:eval:invpend})
	\item Jump of robot dog Solo12 including multi-stage constraint (Sec.~\ref{sec:eval:solo12}) 
\end{itemize}
The last three examples (inverted pendulum, manipulator and robot dog) are~\ref{eq:oc} where we use the turnback algorithm for dynamics integrated by Euler integration (Sec~\ref{sec:turnback}). The latter two simulations are thereby concerned with under-actuated systems where we follow the developments outlined in Sec.~\ref{sec:tbua}.

The simulations are run on an 11th Gen Intel Core i7-11800H \@ 2.30GHz $\times$ 16 with 23 GB RAM.
The implementations of  \hbox{$\mathcal{N}$\hspace{-2pt}ADM$_2$} and the turnback algorithm are based on the Eigen library~\cite{eigenweb} and implemented in C++.
The matrix $C$~\eqref{eq:primalstep} is factorized ($O(n^3)$) only if the step-size parameter $\rho$ is updated. We use a LDLT decomposition with low non-zero fill-in (for example compared to the QR decomposition). Otherwise, \hbox{$\mathcal{N}$\hspace{-2pt}ADM$_2$} relies on matrix-vector operations ($O(n^2)$) as a first-order method. 
The HLSP solver $\mathcal{N}$\hspace{-1pt}IPM$_2$ based on the IPM (matrix factorizations in every iteration), which we proposed in our previous work~\cite{pfeiffer2021}, is modified by incorporating the proposed turnback nullspace basis for dynamics integrated by the Euler method. We can expect computational advantage for  \hbox{$\mathcal{N}$\hspace{-2pt}ADM$_2$} if 
\begin{equation}
	\iota_{\text{$\mathcal{N}$\hspace{-2pt}ADM$_2$}} +\iota_{\text{$\mathcal{N}$\hspace{-2pt}ADM$_2$},\rho}\cdot n < 	\iota_{\text{$\mathcal{N}$IPM}}\cdot (1+n)
\end{equation}
$\iota$ is the number of inner iterations of the respective solvers. $\iota_{\mathcal{N}\text{\hspace{-2pt}ADM$_2$},\rho}$ is the number of factorization updates of  \hbox{$\mathcal{N}$\hspace{-2pt}ADM$_2$}. $n$ is the number of problem variables.
In case of an increase of the KKT norm (which is not related to a change of $\rho$), we increase the regularization factor $\sigma$ ($\sigma_0=1\cdot 10^{-6}$) and reset $\rho$ ($\rho_0=0.1$). 
The number of inner iterations of  {\hbox{$\mathcal{N}$\hspace{-2pt}ADM$_2$}} is limited to 1500, or 2000 for the test in Sec.~\ref{sec:eval:invpend} . We use the analytical hierarchical Hessian~\cite{pfeiffer2023} as needed for the Newton's method in~Fig.~\ref{fig:scheme}. In the robotics examples, the NL-HLSP's and HLSP's are computed by the pinocchio library~\cite{carpentier2019pinocchio} and the automatic differentiation package CppAD~\cite{cppad}. The Lagrange multipliers are computed according to Sec.~\ref{sec:lagact} by the Conjugate Gradient solver LSQR~\cite{Paige1982}. The turnback algorithm is based on the rank-revealing LU decomposition provided by the library LUSOL~\cite{gill1987}. Note that we only depict the computation times of the HLSP sub-solvers. Due to memory limitations, all simulations except for the turnback algorithm evaluation are run on a single thread. 
\hbox{$\mathcal{N}$\hspace{-2pt}ADM$_2$} and \hbox{$\mathcal{N}$\hspace{-1pt}IPM$_2$} are compared to the hierarchical versions of the off-the-shelf solvers H-MOSEK~\cite{mosek}, H-GUROBI~\cite{gurobi} and H-OSQP~\cite{osqp}. All solvers rely on the same framework for active and inactive set composition.


\subsection{Turnback algorithm for Euler integrated dynamics}
\label{sec:eval:turnback}

\begin{figure*}[t!]
	
	\centering
	\includegraphics[width=.32\textwidth]{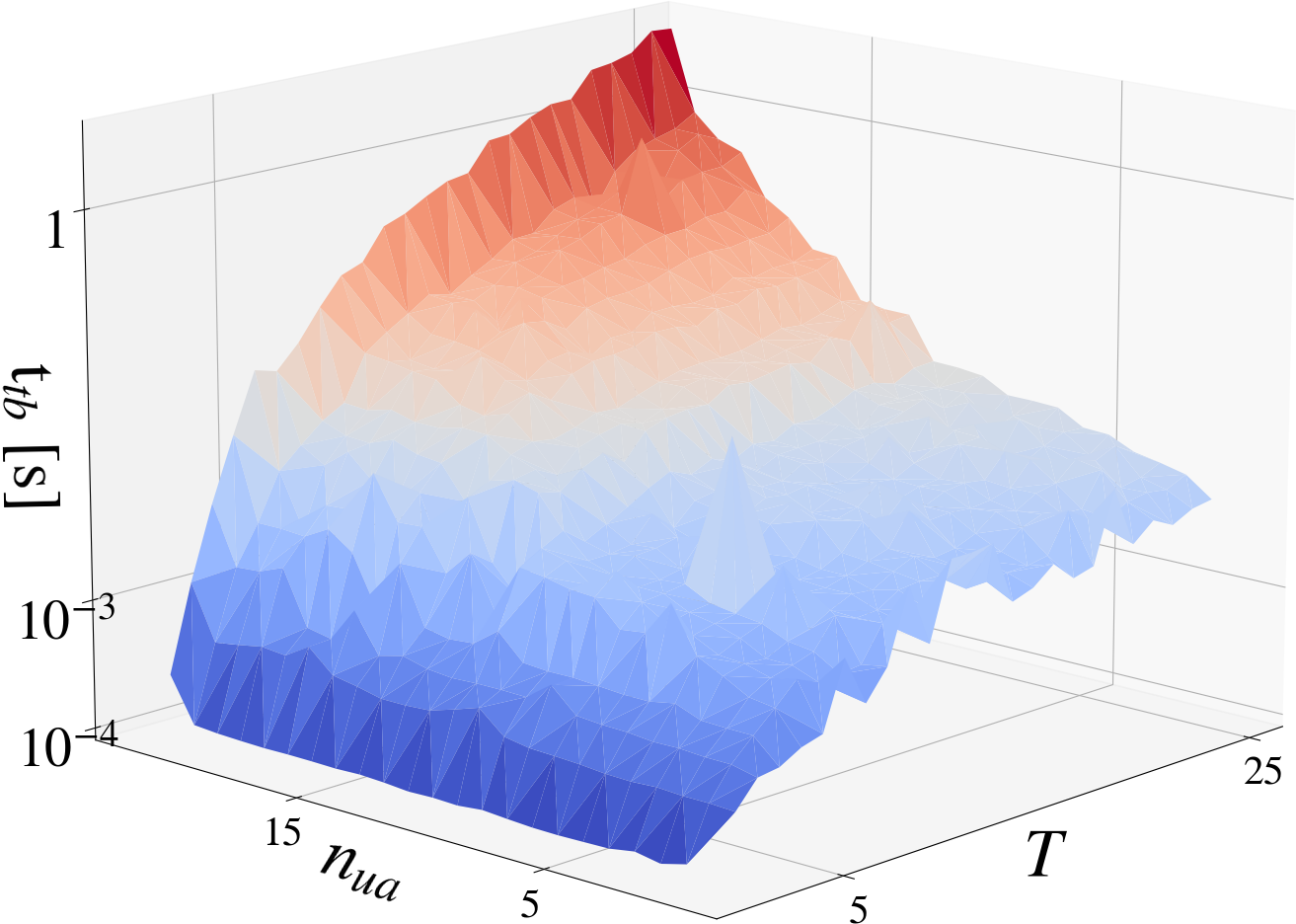}\hfill
	\includegraphics[width=.32\textwidth]{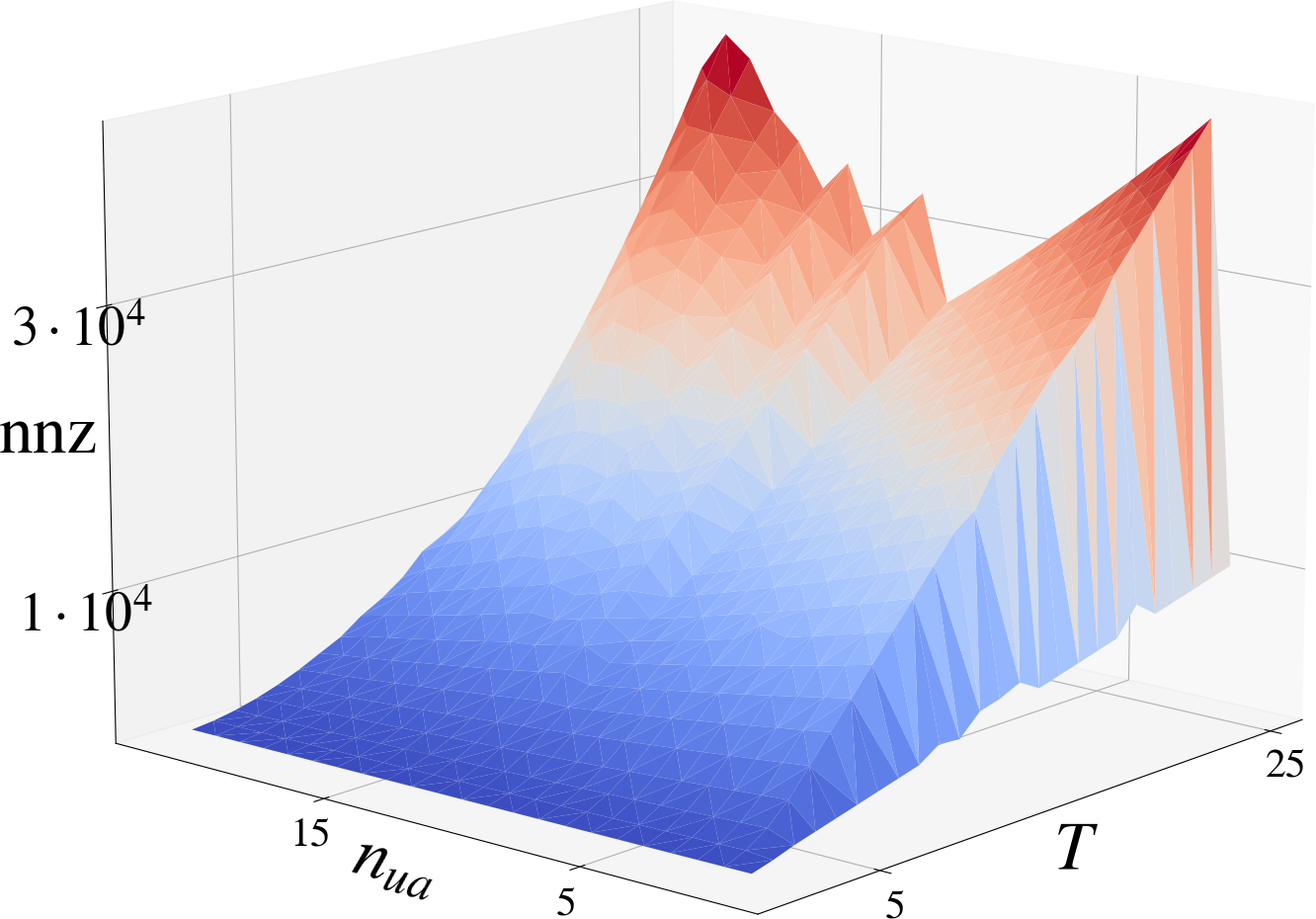}\hfill
	\includegraphics[width=.3\textwidth]{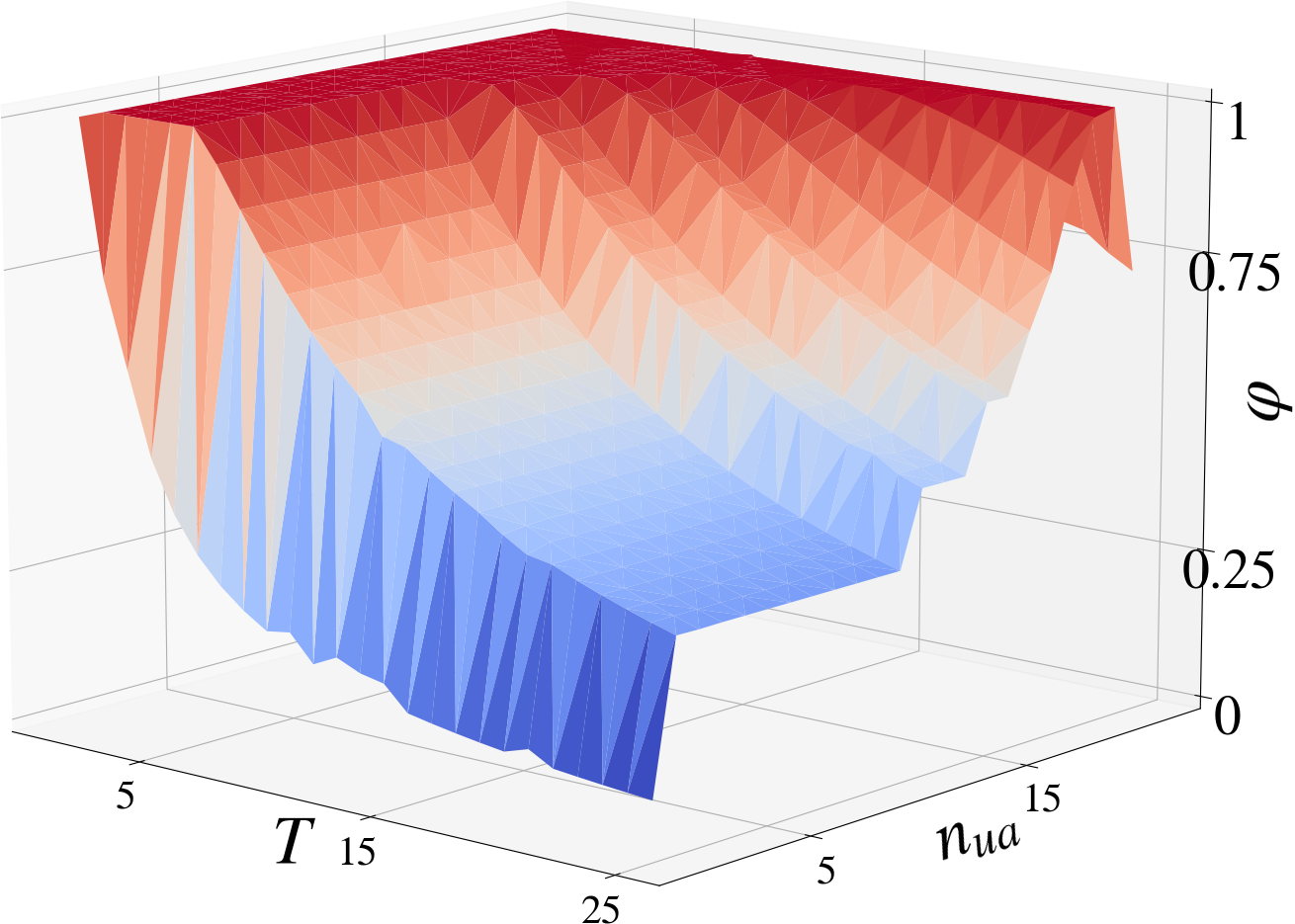}
	
	\caption{Computation time $t_{tb}$, number of non-zeros (nnz) and density ($\varphi$) of $Z^TZ$ of the turnback nullspace $Z$ for Euler integrated dynamics with $n_q = n_{\dot{q}} = 22$ and $n_{\gamma} = 24$ in dependence of control horizon $T$ and under-actuation $n_{ua}$.}
	\label{fig:turnback}
	
\end{figure*}

First, we evaluate the computational efficiency of the turnback algorithm adapted to discrete Euler integrated dynamics. We compose randomized matrices $M$, $D$ and $B$ in~\eqref{eq:dxeid}. We choose $n_q=n_{\dot{q}}=22$, $n_{\gamma}=24$ and $n_{\tau}=n_{\dot{q}} - n_{ua}$ with variable $n_{ua}$ (this corresponds to the dimensions of the robot dog Solo12 with four feet exerting forces and torques, such that $n_{\gamma} = 4\cdot 6$, and $n_{ua}=6$). We use 8 threads for the parallel computations of the turnback algorithm as described in Sec.~\ref{sec:threads}.

The results are given in Fig.~\ref{fig:turnback}. Depicted are the computation time and the number of non-zeros and density ($\varphi = nnz(A) / (n\cdot m)$ with $A\in\mathbb{R}^{m\times n}$) of the normal form $Z^TZ$ of the turnback nullspace depending of the time horizon $T=0,\dots,25$ and under-actuation $n_{ua} = 0,\dots,n_q$. We first consider the fully actuated case $n_{ua} = 0$. For $T=25$, the turnback computation time is $t_{tb}=4.9\cdot10^{-3}$~s. The resulting normal form $Z^TZ$ contains 103412 non-zeros with a density of 0.078. It can be observed that with under-actuation $n_{ua}>0$, there is a sharp incline in non-zeros and density of $Z^TZ$. For example for $n_{ua}=1$, the density increases to 0.328 with four times as many non-zeros (415125). This can be explained by the higher coupling within the diagonal blocks as demonstrated in Sec.~\ref{sec:tbua}. Nonetheless, the computation time does not increase as dramatically to $t_{tb}=6.9\cdot10^{-3}$~s. 
For full under-actuation $n_{ua} = n_q$, the resulting nullspace basis is dense ($\varphi=1$) as expected. At the same time, the increase in non-zeros is quadratic. In contrast, for low $n_{ua}$, the linear increase of non-zero entries in $T$ is clearly distinguishable.

\subsection{NL-HLSP test-functions}
\label{sec:eval:nonlinopt}

\begin{table*}[htp!]
	\centering
	\resizebox{\columnwidth}{!}	{%
		\begin{tabular}{@{} cccccccccccccccc @{}}  
			\toprule
			& & &  \multicolumn{2}{c}{$\mathcal{N}$\hspace{-2pt}ADM$_2$}  & \multicolumn{2}{c}{$\mathcal{N}$\hspace{-2pt}ADM$_2$ $\rightarrow$ H-MOSEK} & \multicolumn{2}{c}{H-MOSEK} & \multicolumn{2}{c}{H-OSQP}\\
			& & &  	
			\multicolumn{2}{c}{(0.01 s)}  & \multicolumn{2}{c}{(0.01 s$\rightarrow$0.11 s: 0.12 s)} & \multicolumn{2}{c}{(0.19 s)} & \multicolumn{2}{c}{(0.23 s)}\\		
			\cmidrule(lr){4-5}	\cmidrule(lr){6-7}	\cmidrule(lr){8-9}	\cmidrule(lr){10-11}
			$l$ & & $f_l(x)\leqq v_l$ & $\Vert v_l^* \Vert_2$ &  Iter.  & $\Vert v_l^* \Vert_2$ & Iter.  & $\Vert v_l^* \Vert_2$ & Iter. & $\Vert v_l^* \Vert_2$ & Iter.\\
			\midrule
			1 & Disk ineq. & $x_1^2 + x_2^2 - 1.9 \leq v_1$ & $1.0\cdot 10^{-5}$ & 6  & $9.8\cdot 10^{-6}$ & 1 & $9.8\cdot 10^{-6}$ & 12 & $1.0\cdot 10^{-5}$  & 7\\
			2 & Ros. eq. & $(1-x_1)^2 + 100(x_2 - x_1^2)^2 = v_2$ & $2.9\cdot 10^{-4}$ & 14  &  $2.9\cdot 10^{-4}$ & 14 & $2.9\cdot 10^{-4}$ & 17 & $2.9\cdot 10^{-4}$ & 13\\
			3 & Disk eq. & $x_1^2 + x_2^2 - 0.9 = v_3$& 1 & 2  & 1 & 1 & 1 & 1 & 1 & 2\\
			4 & Disk eq. & $x_2^2 + x_3^2 - 1 = v_4$& $1.6\cdot 10^{-6}$ & 2  & $1.2\cdot 10^{-16}$ & 1 & $1.9\cdot 10^{-16}$ & 1 & $1.1\cdot 10^{-11}$ & 2\\
			5 & Disk ineq. & $x_4^2 + x_5^2 + 1 \leq v_5$& 1 & 1 &  1 & 1 & 1 & 1 & 1 & 1\\
			6 & Disk eq. & $x_6^2 + x_7^2 + x_8^2 - 4 = v_6$& $7.1\cdot 10^{-7}$  & 4   & $1.7\cdot 10^{-8}$ & 1 & $1.6\cdot 10^{-10}$ & 8 & $1.9\cdot 10^{-8}$ & 1\\
			7 & Ros. eq. & $(1-x_6)^2 + 100(x_7 - x_6^2)^2 = v_7$& $4.2\cdot 10^{-4}$ & 1  &  $7.6\cdot 10^{-8}$ & 24 & $7.4\cdot 10^{-8}$ & 35 & $1.8\cdot 10^{-4}$ & 111\\
			8 & McC. eq. &  $\sin(x_{_9} + x_{10}) + (x_9 - x_{10})^2$ & 18.1 & 1 &  18.1 & 1 & 18.1 & 1 & 24.4 & 1\\
			&  & $- 1.5x_9 + 2.5x_{10}+1 + M = v_8$ \\
			9 & Reg. eq. & $x_{1:10} = v_9$& 2.9 & 0  & 2.9 & 0 & 2.9 & 0 & 7.8 & 0\\
			\midrule
			$\Sigma$ & & & & 31 &  &  (31$\rightarrow$44) 75&  & 76& & 138\\
			\bottomrule
		\end{tabular}
	}
	\caption{Non-linear test functions: optimal slacks $v^*$ and number of outer iterations (Iter.) per priority level for a~\ref{eq:nlhlsp} with $p=9$ and $n=10$. The hierarchy is composed of disk, Rosenbrock (Ros.), McCormick (McC.) and regularization (Reg.) equality (eq.) and inequality (ineq.) constraints.}
	\label{tab:p8nl}
\end{table*}

\begin{figure}[htp!]
	\includegraphics[width=0.7\columnwidth]{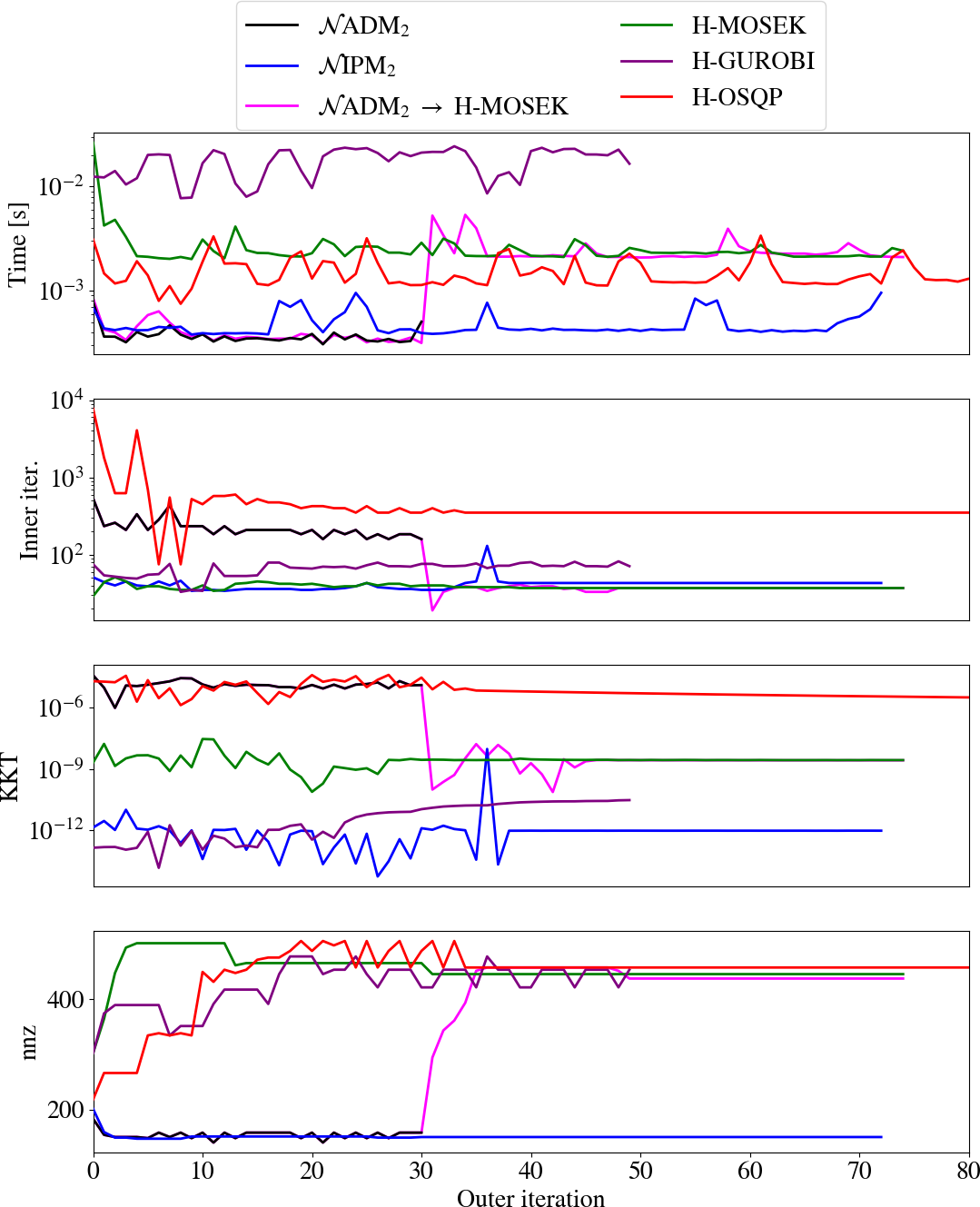}
	\centering
	\caption{Non-linear test functions, data for the different HLSP sub-solvers over S-HLSP outer iteration: computation times per HLSP solve, number of inner iterations, KKT residuals and overall number of non-zeros handled throughout the whole hierarchy.}
	\label{fig:nonlinsolverdata}
\end{figure}

\begin{figure}[htp!]
	\includegraphics[width=0.6\columnwidth]{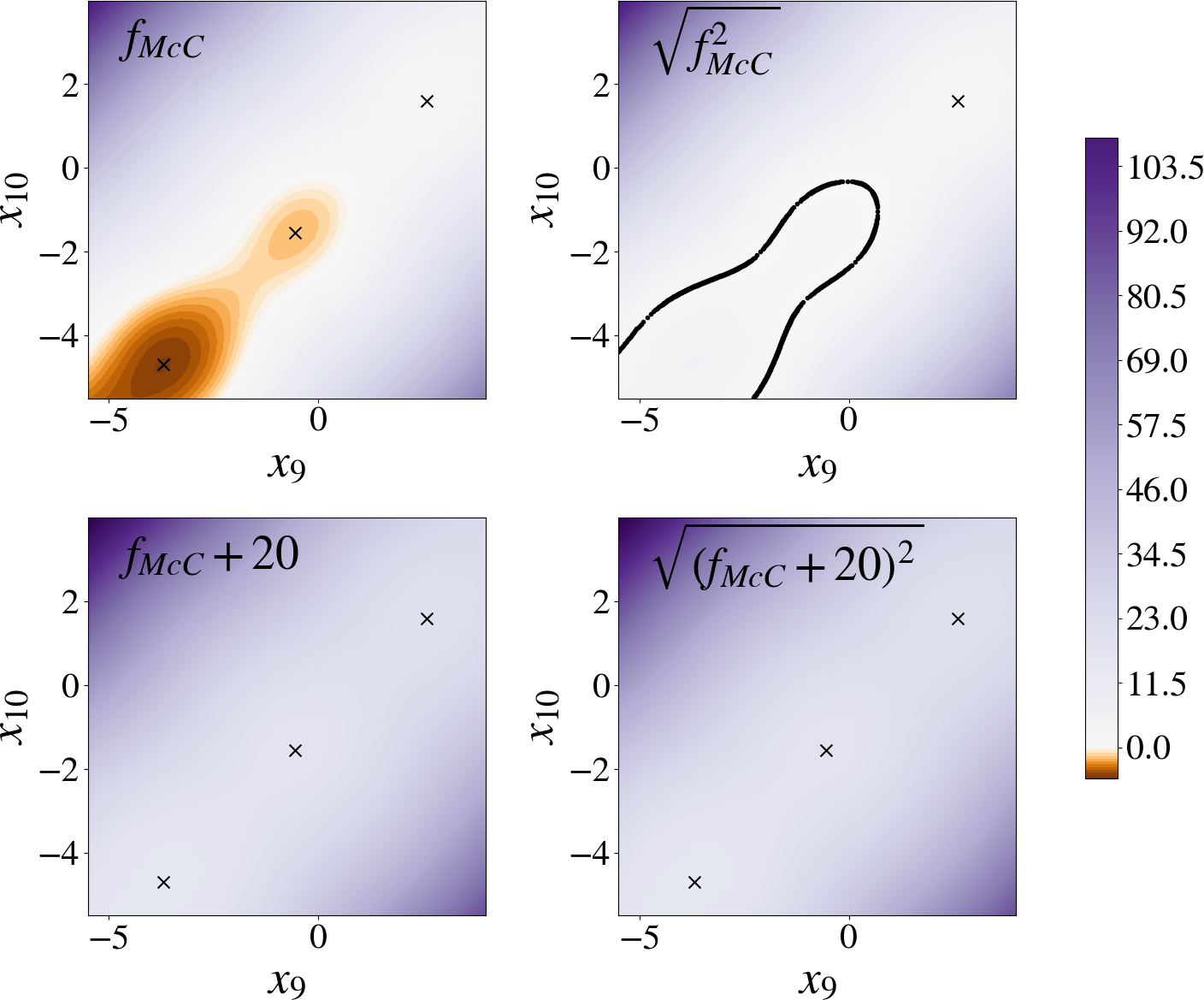}
	\centering
	\caption{Different formulations of McCormick function: original $f_{\text{McC}}$, $\ell_2$-norm $\sqrt{f_{\text{McC}}^2}$, with offset $f_{\text{McC}} + M$, $\ell_2$-norm with offset  $(f_{\text{McC}} + M)^2$ ($M=20$). Negative function values are colored in orange tones. Local minima are marked in black.}
	\label{fig:mccormick}
\end{figure}

\begin{figure*}[htp!]
	
	\centering
	\includegraphics[width=1\textwidth]{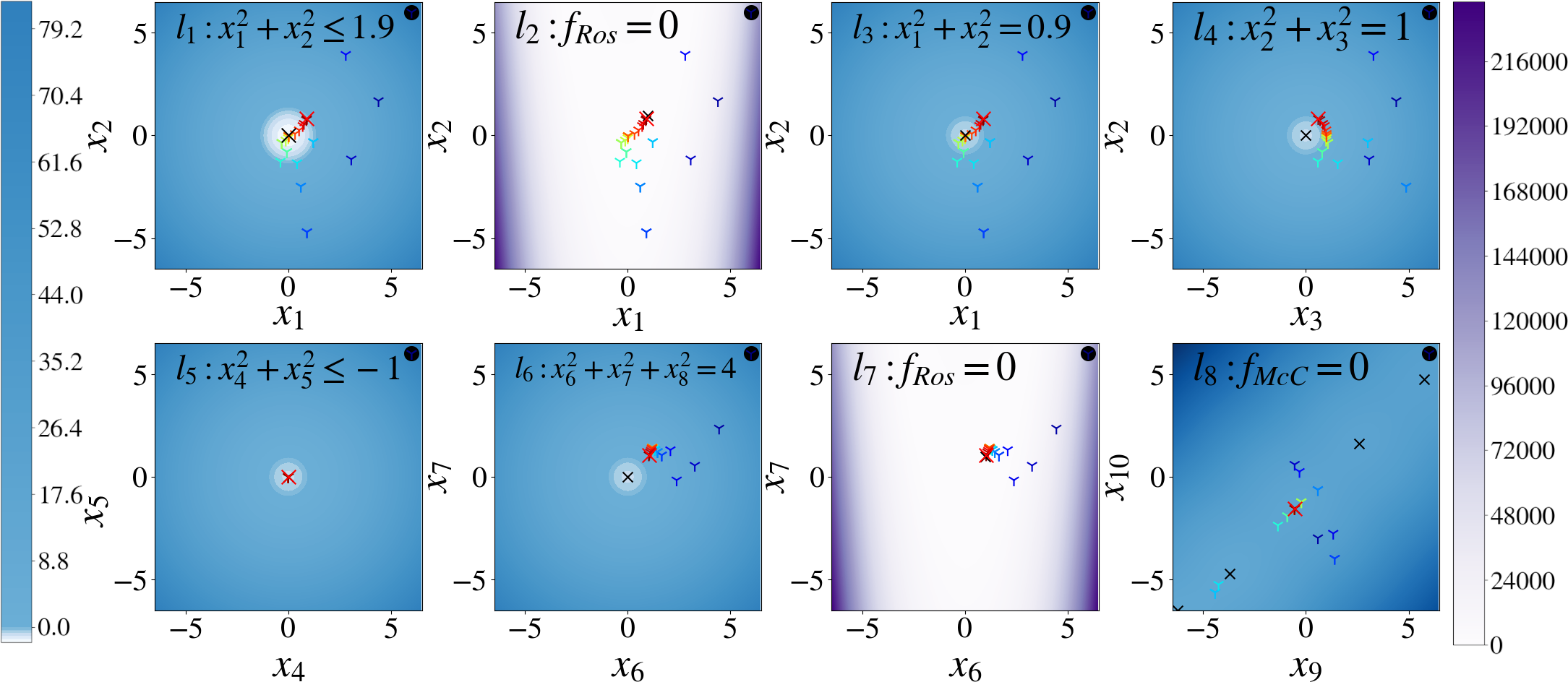}
	\caption{Non-linear test functions: primal $x$ over S-HLSP outer iteration (colored triangles) from start point (black dot) to converged point (large red cross). Local minima are marked with black crosses.}
	\label{fig:nonlinoptX}
	
\end{figure*}

\begin{figure}[htp!]
	\includegraphics[width=0.7\columnwidth]{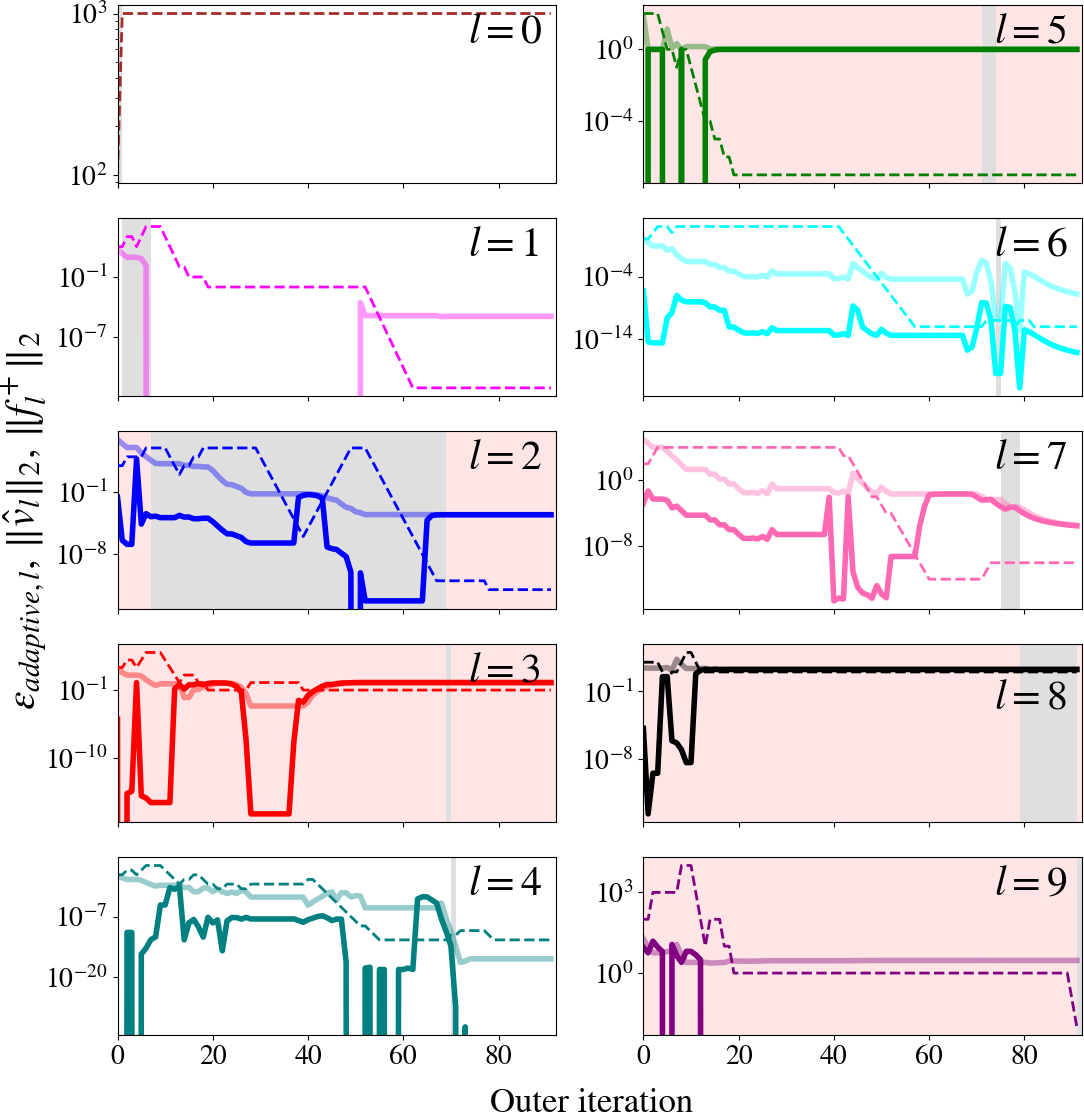}
	\centering
	\caption{Non-linear test functions,  {\hbox{$\mathcal{N}$\hspace{-2pt}ADM$_2$}}: linear slacks $\Vert \hat{v}_l\Vert_2$, non-linear error $\Vert f_l^+\Vert_2$ (light color) and $\epsilon_{adaptive,l}$ (dashed) for the levels $l=1,\dots,p$ over outer iteration. The initial value is chosen as $\epsilon_{adaptive,l} = 100$. Gray background color indicates the outer iterations where the respective level has been resolved by the HSF. Infeasible levels are indicated by light red background color.}
	\label{fig:epsadapt}
\end{figure}

We apply  {\hbox{$\mathcal{N}$\hspace{-2pt}ADM$_2$}} as HLSP sub-solver for S-HLSP to solve a NL-HLSP composed of test functions as listed in Tab.~\ref{tab:p8nl}. This problem constellation tests feasible and infeasible equality and inequality constraints. This also includes infeasibility arising from conflict with constraints from higher priority levels. Note that the McCormick function includes a large positive offset $M$. This enables S-HLSP to identify negative function minima (for example the minimum $f_{\text{McC}}(-0.547,-1.547) + M = -1.9133 + M$) despite its least-squares formulation.
This can be easily confirmed with the following theorem:
\begin{definition}
	A factor $M \geq 0$ is sufficiently large on the domain $\{x\in\mathbb{R}^n:x\in \mathcal{S}\}$ if a function $\hat{f}(x,M) = f(x) + M : \mathcal{S} \rightarrow \mathbb{R}_{> 0}$.
\end{definition}
\begin{theorem}
	If $M \geq 0$ is sufficiently large on the domain $x\in \mathcal{S}$, then first-order optimality of $\hat{f}(x,M)^2$ applies at the same points $x^*\in \mathcal{S}$ as the twice continuously differentiable function~$f(x)$.
\end{theorem}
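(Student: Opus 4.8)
The plan is to reduce the claim to a one-line chain-rule computation whose entire force comes from the strict positivity of $\hat f$ guaranteed by the definition of ``sufficiently large.'' First I would introduce $g(x)\coloneqq\hat f(x,M)^2=(f(x)+M)^2$ and observe that, since $f$ is twice continuously differentiable and $M$ is a constant, $g\in C^2(\mathcal{S})$ as well; hence the first-order optimality condition for $g$ at a point $x^*\in\mathcal{S}$ is exactly the stationarity condition $\nabla g(x^*)=0$, and the same holds for $f$.

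Next I would differentiate through the square. By the chain rule,
\begin{equation}
	\nabla g(x)=2\,\hat f(x,M)\,\nabla f(x)=2\,(f(x)+M)\,\nabla f(x).
\end{equation}
The definition of a sufficiently large $M$ asserts precisely that $\hat f(\cdot,M)\colon\mathcal{S}\to\mathbb{R}_{>0}$, i.e. $f(x)+M>0$ for every $x\in\mathcal{S}$. Therefore the scalar prefactor $2(f(x)+M)$ is strictly positive, and in particular nonzero, throughout $\mathcal{S}$. It follows immediately that $\nabla g(x^*)=0$ holds if and only if $\nabla f(x^*)=0$, so the first-order stationary points of $\hat f(x,M)^2$ on $\mathcal{S}$ coincide exactly with those of $f$, which is the assertion.

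The step that deserves the most attention is not a technical difficulty but the explanation of why the offset $M$ is indispensable, since this is what the definition is really encoding. Without it, squaring a sign-indefinite $f$ manufactures spurious stationary points: at any zero crossing $f(x)=0$ the gradient $2f(x)\nabla f(x)$ vanishes regardless of $\nabla f(x)$, so the least-squares surrogate $f^2$ would report a minimum there even though $f$ itself has none. Requiring $M$ large enough that $f+M$ remains strictly positive removes exactly this $\hat f(x,M)=0$ branch of the factorization, leaving the stationarity of the surrogate governed solely by $\nabla f$. I would close by noting that the argument localizes: $M$ need only dominate $-f$ on the region of interest $\mathcal{S}$ rather than globally, which is what makes the construction usable for recovering negative-valued minima such as the McCormick example while retaining the least-squares form required by~\ref{eq:nlhlsp}.
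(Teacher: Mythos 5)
Your proof is correct and follows essentially the same route as the paper's: apply the chain rule to obtain $\nabla \hat f^2 = 2\hat f\,\nabla f$ and note that the strictly positive factor $\hat f(x,M) > 0$ on $\mathcal{S}$ forces the stationary points to coincide. Your additional remarks on why the offset is indispensable mirror the paper's surrounding discussion of the McCormick example and are not part of the formal argument.
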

\begin{proof}
	We consider the first-order derivative of $\hat{f}(x,M)^2$ which writes as
	\begin{equation}
		\frac{\partial \hat{f}^2}{\partial x} = 2\hat{f}\frac{\partial \hat{f}}{\partial x} = 2\hat{f}\frac{\partial f}{\partial x}
	\end{equation}
	Clearly, ${\partial \hat{f}^2}/{\partial x}$ has the same zeros as ${\partial f}/{\partial x}$ for $\hat{f}(x,M)>0$ on the domain $x\in\mathcal{S}$.
\end{proof}
Figure~\ref{fig:mccormick} shows how the squared McCormick function with offset $(f_{\text{McC}}+M)^2$ has the same minima $(x_9^*,x_{10}^*)$ as the original function in the range $x_9, x_{10}\in\left[-5.5,4\right]$. Here, we choose $M=20$. For even polynomials, the global value $M^*$, such that $\hat{f}(x^*,M^*)=0$, can be found by polynomial optimization~\cite{Chen2021}. In contrast, the squared McCormick function without offset has additional minima at the zeros of $f_{\text{McC}}^2$, while missing any minima that are associated with negative function values $f_{\text{McC}} < 0$. 

S-HLSP with {\hbox{$\mathcal{N}$\hspace{-2pt}ADM$_2$}} identifies the primal solution $x = $ 0.983,
0.966, 0.257, 0,0, 1.02, 1.04, 1.37, -0.547, -1.547. The evolution of the primal over the S-HLSP outer iterations is depicted in Fig.~\ref{fig:nonlinoptX}.
{\hbox{$\mathcal{N}$\hspace{-2pt}ADM$_2$}} is able to solve the NL-HLSP to moderate accuracy. For example, according to Tab.~\ref{tab:p8nl}, the Rosenbrock equality on level 7 is solved to a residual error of $\Vert v^*_7 \Vert^2_2 = 4.2\cdot 10^{-4}$ while H-MOSEK solves the same level to $\Vert v^*_7 \Vert^2_2 = 7.4\cdot 10^{-8}$. At the same time, the previous levels are solved to comparable accuracy (note that error comparisons in hierarchies need to consider that a higher error norm on a higher priority level can lead to lower error norm on a lower priority level).
While being less accurate,  {\hbox{$\mathcal{N}$\hspace{-2pt}ADM$_2$}} (0.011 s) solves the problem the fastest out of all the solvers, see Fig.~\ref{fig:nonlinsolverdata}. This is partly due to the low  number of outer S-HLSP iterations (31, about half as many as for H-MOSEK with 76). Still, from Fig.~\ref{fig:nonlinsolverdata}, it can be observed that the HLSP sub-problems are solved in about $4\cdot10^{-4}$~s. $\mathcal{N}$\hspace{-1pt}IPM$_2$ solves the sub-problems slightly slower in about $5\cdot10^{-4}$~s. This is in accordance with the number of inner iterations of $\sim\hspace{-2pt}300 <\hspace{2pt} \sim\hspace{-2pt}50\cdot n = \hspace{2pt}\sim\hspace{-2pt}500$ of  \hbox{$\mathcal{N}$\hspace{-2pt}ADM$_2$} and $\mathcal{N}$\hspace{-1pt}IPM$_2$, respectively. In contrast, the next fastest solver H-OSQP solves the inner iterations in about $1\cdot10^{-3}$~s. This clearly demonstrates the advantage of solving the KKT system projected into the nullspace of active constraints.

We furthermore consider the combination of both the low and high accuracy solvers  {\hbox{$\mathcal{N}$\hspace{-2pt}ADM$_2$}} and H-MOSEK. It can be observed that a high accuracy solution is obtained when compared to the low accuracy solver  {\hbox{$\mathcal{N}$\hspace{-2pt}ADM$_2$}} alone (level 7 at $7.6\cdot 10^{-8}$ compared to $4.2\cdot 10^{-4}$). At the same time, the computation time is lower (0.12 s) compared to the high accuracy solver H-MOSEK alone (0.19 s). Consequently, a sub-problem solver with moderate accuracy like  {\hbox{$\mathcal{N}$\hspace{-2pt}ADM$_2$}} can be used to warm-start the S-HLSP with a lower accuracy primal guess. The reduced overall computation time follows due to the reduced number of high accuracy sub-problem solutions (44 compared to 76 for H-MOSEK alone).

Finally, we evaluate the adaptive SOI thresholding strategy developed in Sec.~\ref{sec:epsadapt}. For this, we set the initial value to $\epsilon_{adaptive,l} = 100$ (note that in all other examples, we initially set $\epsilon_{adaptive,l} = 1\cdot10^{-12}$) and the lower limit to $1\cdot 10^{-12}$. The corresponding parameters are chosen as $\zeta = 1$ and $\delta=0.95$ (see Alg.~\ref{alg:epsadapt}). As can be seen from Fig.~\ref{fig:epsadapt}, a similar error reduction as in Tab.~\ref{tab:p8nl} is achieved. However, more iterations are necessary (92 instead of 31) as $\epsilon_{adaptive,l}$ is adjusted to the infeasibility of the constraints. Importantly, at convergence of the HSF of the infeasible levels $l=2,3,5,8$, we have $\Vert f_l^+\Vert_2^2 > \epsilon_{adaptive,l}$ and the SOI is activated. At the same time, the heuristic relaxes the threshold for example for the infeasible Rosenbrock constraint on level 2 in instances of sufficient progress in terms of optimality. Nonetheless, the SOI is erroneously activated for the feasible level 7. This motivates further investigation with respect to constraint optimality by avoiding regularized minima, for example based on machine learning methods for feasibility detection.

\subsection{Inverse kinematics of humanoid robot HRP-2}

\label{sec:eval:hrp2}

\begin{table*}[htp!]
	\centering
	\setlength\tabcolsep{5pt} 
	\resizebox{1\columnwidth}{!}	{%
		\begin{tabular}{@{} cccccccccccccc @{}} 
			\toprule
			& &\multicolumn{2}{c}{$\mathcal{N}$\hspace{-2pt}ADM$_2$} & \multicolumn{2}{c}{$\mathcal{N}$\hspace{-1pt}IPM$_2$} & \multicolumn{2}{c}{$\mathcal{N}$\hspace{-2pt}A. $\rightarrow$ H-M.} & \multicolumn{2}{c}{H-MOSEK}& \multicolumn{2}{c}{H-GUROBI}& \multicolumn{2}{c}{H-OSQP} \\ 	
			&&	\multicolumn{2}{c}{ (0.06 s)}  & \multicolumn{2}{c}{(0.40 s)} & \multicolumn{2}{c}{(0.06 s$\rightarrow$0.39 s: 0.46 s)} &  \multicolumn{2}{c}{(0.59 s)}  &  \multicolumn{2}{c}{(1.06 s)} &  \multicolumn{2}{c}{(0.089 s)}\\				
			\cmidrule(lr){3-4}	\cmidrule(lr){5-6}	\cmidrule(lr){7-8}\cmidrule(lr){9-10}	\cmidrule(lr){11-12} \cmidrule(lr){13-14}
			$l$ & $f_l(x) \leqq v_l$ & $\Vert v_l^* \Vert_2$ & Iter. & $\Vert v_l^* \Vert_2$ & Iter. & $\Vert v_l^* \Vert_2$& Iter.& $\Vert v_l^* \Vert_2$& Iter.& $\Vert v_l^* \Vert_2$& Iter.& $\Vert v_l^* \Vert_2$& Iter.\\
			\midrule
			1 & J. lim. ineq. & $5.1\cdot 10^{-5}$ & 2 & $3.4\cdot 10^{-8}$ &  1 &$3.4\cdot 10^{-5}$& 0 & 0 & 0 & 0 & 0 & 0 & 0\\
			2 & LF, RF, LH eq. & $3.1\cdot 10^{-6}$ & 14  & $2.6\cdot 10^{-10}$ &  18 & $7.8\cdot 10^{-9}$& 10& $9.9\cdot 10^{-9}$ & 9 & $7.4\cdot 10^{-8}$ & 27 & $4.8\cdot 10^{-8}$ & 6\\
			3 & CoM ineq. & $4.1\cdot 10^{-6}$ & 1  & $1.0\cdot 10^{-8}$ &  1 & $1.0\cdot 10^{-5}$& 65&  $1.0\cdot10^{-5}$ & 30 & 0 & 28 & $2.6\cdot 10^{-6}$ & 1\\
			4 & Right hand eq. & 1.04 & 9 & 0.98&  128  &1.02&2& 1.02  & 56 & 1.13 & 19 & 0.98 & 12\\
			5 & Reg. eq. & 5.5 & 10 & 4.40 &  7 &4.1&0&  4.98 & 0 & 4.70 & 0 & 4.39 & 0\\
			\midrule
			$\Sigma$ & & & 37 & & 156  && (48$\rightarrow$77) 125 & & 95 & & 74 & & 19\\
			\bottomrule
	\end{tabular}}
	\caption{HRP-2 inverse kinematics: optimal slacks $v^*$ and number of outer iterations (Iter.) per priority level for a~\ref{eq:nlhlsp} with $p=5$ and $n=38$. J. lim.: Joint limits, LF: left foot, RF: right foot, LH: left hand. $\mathcal{N}$\hspace{-2pt}A.:  {\hbox{$\mathcal{N}$\hspace{-2pt}ADM$_2$}}; H-M: H-MOSEK.}
	\label{tab:hrp2}
\end{table*}

\begin{figure}[htp!]
	\includegraphics[width=0.7\columnwidth]{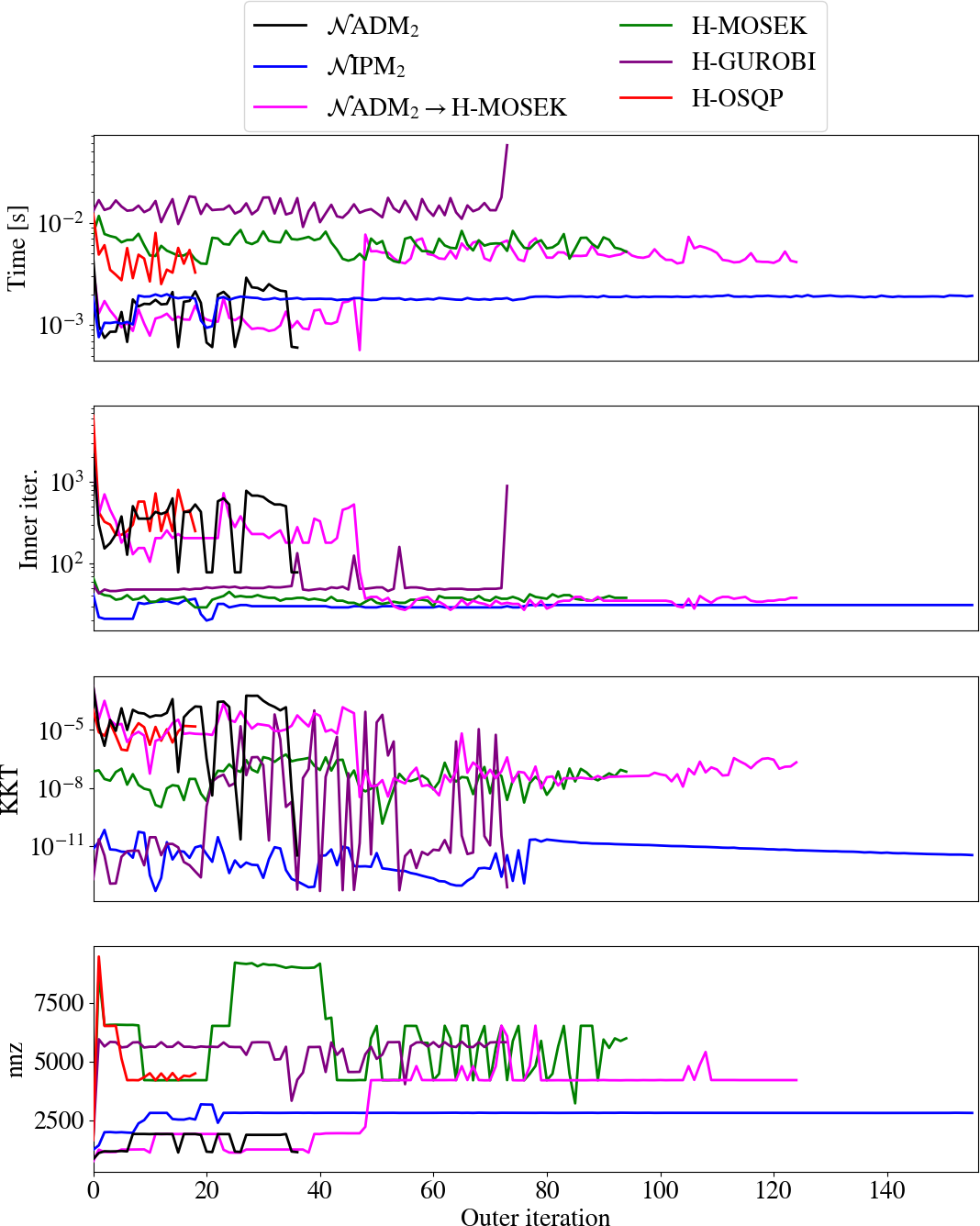}
	\centering
	\caption{HRP-2 inverse kinematics, data for the different HLSP sub-solvers over S-HLSP outer iteration: computation times per HLSP solve, number of inner iterations, KKT residuals and overall number of non-zeros handled throughout the whole hierarchy.}
	\label{fig:hrp2data}
\end{figure}

\begin{figure}[htp!]
	\includegraphics[width=0.4\columnwidth]{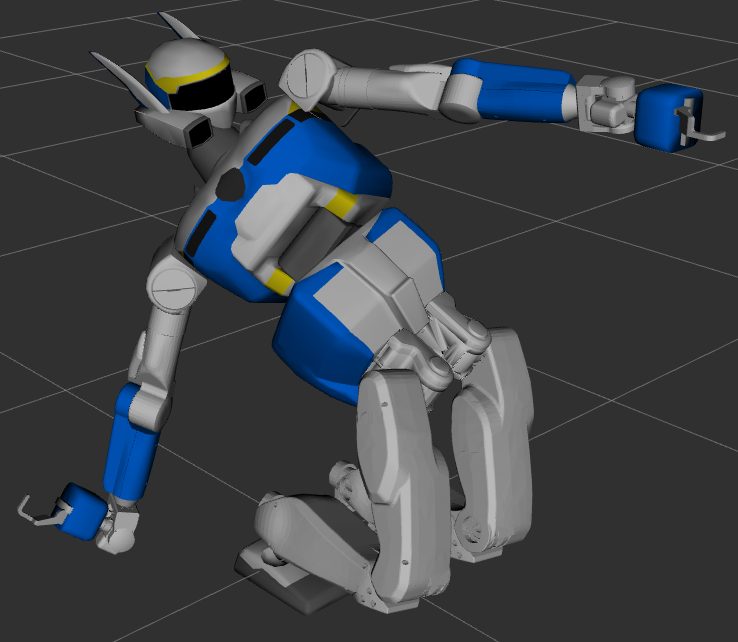}
	\centering
	\caption{HRP-2 inverse kinematics,  {\hbox{$\mathcal{N}$\hspace{-2pt}ADM$_2$}}: converged robot posture.}
	\label{fig:hrp2}
\end{figure}

This simulation is concerned with solving an inverse kinematics problem for the humanoid robot HRP-2 with $n=38$ degrees of freedom.  The corresponding hierarchy is given in Tab.~\ref{tab:hrp2} ($p=5$).
The first level limits the joint angles. The second level positions the left and right feet and the left hand. The third level limits the CoM position to a bounding box. The fifth level positions the right hand towards an out-of-reach target $\BIN -0.5 & -0.5 & -1\BOUT$~m below its feet. The right foot is positioned at $\BIN 0.015 &  -0.1 & 0.1\BOUT$~m. The $z$ component is approximately at ground level $0.1$~m. Lastly, all variables are regularized to zero.

The results are given in Tab~\ref{tab:hrp2}. The converged robot posture for  {\hbox{$\mathcal{N}$\hspace{-2pt}ADM$_2$}} is depicted in Fig.~\ref{fig:hrp2}. Our proposed solver  {\hbox{$\mathcal{N}$\hspace{-2pt}ADM$_2$}} solves the HLSP sub-problems the fastest at about $6\cdot 10^{-2}$~s. Fluctuations in computation time and non-zeros (Fig.~\ref{fig:hrp2data}) is due to activation and deactivations of SOI on the second level.
At the same time, moderate accuracy is achieved. For example, the end-effector positioning of the left and right foot and the left hand on level 2 is resolved to an error of $3.1\cdot 10^{-6}$~m while the error is reduced to less than $4.8\cdot 10^{-8}$~m (H-OSQP) for the other solvers. The right hand task on level 3 is resolved to an error of 1.04~m. While this is worse than for example $\mathcal{N}$\hspace{-1pt}IPM$_2$ (0.98~m), the adaptive SOI threshold strategy enables SOI deactivation on the higher priority level 2. Without it, only manual tuning for each individual solver prevented the SOI activation which causes worse error convergence on lower priority levels due to high variable occupation.
H-OSQP achieves a solution in the lowest number of outer iterations (19 compared to 48 for  {\hbox{$\mathcal{N}$\hspace{-2pt}ADM$_2$}}) but is slower than  {\hbox{$\mathcal{N}$\hspace{-2pt}ADM$_2$}} due to the slow resolution of the HLSP sub-problems at about $4\cdot 10^{-2}$~s. 

\subsection{Time-optimal control of manipulator}

\label{sec:eval:topm}

\begin{table}[htp!]
	\centering
	\resizebox{0.8\columnwidth}{!}	{%
		\begin{tabular}{@{} cccccccccc @{}}  
			\toprule
			& &\multicolumn{2}{c}{$\mathcal{N}$\hspace{-2pt}ADM$_2$} (4.8 s) &\multicolumn{2}{c}{$\mathcal{N}$\hspace{-1pt}IPM$_2$} (4.8 s) &  \multicolumn{2}{c}{H-MOSEK} (72.1 s) & \multicolumn{2}{c}{H-GUROBI} (3.2 s)\\ 			
			\cmidrule(lr){3-4}	\cmidrule(lr){5-6}	 \cmidrule(lr){7-8}\cmidrule(lr){9-10}
			$l$ & $f_l(x) \leqq v_l$ & $\Vert v_l^* \Vert_2$ & Iter. & $\Vert v_l^* \Vert_2$ & Iter. & $\Vert v_l^* \Vert_2$ & Iter.& $\Vert v_l^* \Vert_2$ & Iter.\\
			\midrule
			1 & $q$, $\tau$ lim. ineq. & $1.7\cdot 10^{-5}$ & 1 & $5.4\cdot 10^{-8}$ & 1 & 0 & 1 & 0 &  1 \\
			2 &$f_{\text{dyn}}(x) = v_2$ & $4.9\cdot 10^{-8}$ & 17 & $1.6\cdot 10^{-4}$ & 29  & $2.3\cdot 10^{-9}$ & 48& $8.5\cdot 10^{-8}$ &  10\\
			3 & $f_{\text{ef,adtoc}}(x) = v_3$ & $0.11$ & 87 & $8.1\cdot 10^{-2}$ & 320  &  $7.8\cdot 10^{-2}$ & 317& $1.8$ &  53\\
			4 & $\dot{h}(x) = v_4$ & 1509 & 1 & 1319 & 1 & 1363 & 1& 211 &  1  \\
			5 & $\BIN q & \dot{q}\BOUT^T=v_5$ & 231 & 0 & 245 & 1 &  274 & 1& 91 &  7 \\
			6 & $\tau=v_6$ & 306& 1 & 274 & 1 &  276 & 1 & 18 &  13 \\
			\midrule
			$\Sigma$ & & & 110 & & 353 & & 370 & & 86\\
			\bottomrule
	\end{tabular}}
	\caption{UR3e time-optimal control: optimal slacks $v^*$ and number of outer iterations (Iter.) per priority level for a~\ref{eq:nlhlsp} with $p=6$ and $n=361$.}
	\label{tab:adtoc}
\end{table}

\begin{figure}[htp!]
	\includegraphics[width=0.7\columnwidth]{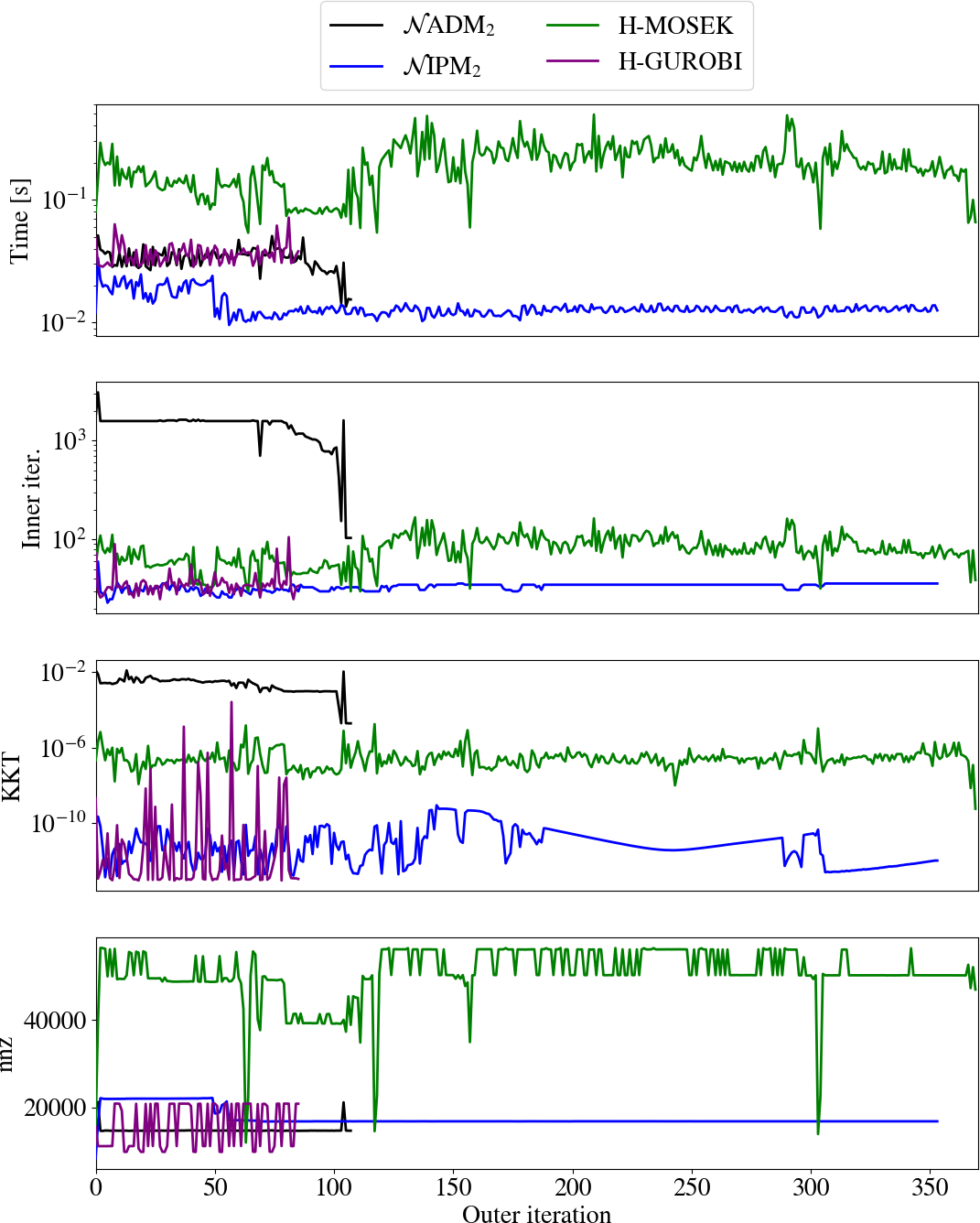}
	\centering
	\caption{UR3e time-optimal control, data for the different HLSP sub-solvers over S-HLSP outer iteration: computation times per HLSP solve, number of inner iterations, KKT residuals and overall number of non-zeros handled throughout the whole hierarchy.}
	\label{fig:adtocdata}
\end{figure}

\begin{figure}[htp!]
	\includegraphics[width=0.6\columnwidth]{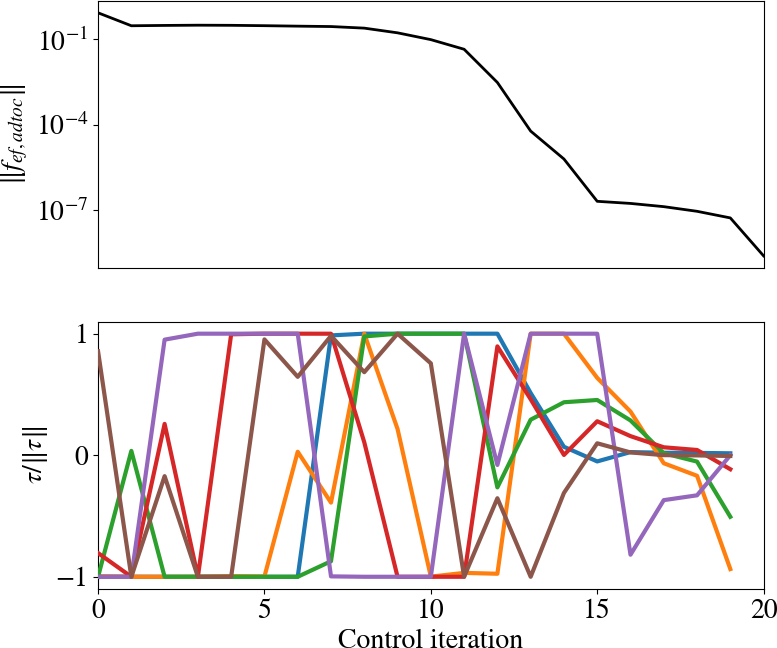}
	\centering
	\caption{UR3e time-optimal control,  {\hbox{$\mathcal{N}$\hspace{-2pt}ADM$_2$}}. Error reduction of the time-optimal reaching task (top) and joint torques normalized by their limits (bottom).}
	\label{fig:ur3eUE}
\end{figure}


In this simulation setting, we aim to identify a time-optimal control under actuation limits for a kinematic reaching task of the fully actuated manipulator UR3e. Furthermore, we impose a regularization task of the momentum evolution for a safe robot movement. Such a constraint includes variables from several stages, which is a form of constraint typically not handled by recursive methods like DDP.
Time-optimal control in least-squares programming can be achieved by a continuous approximation of the discrete optimal time $t^*$~\cite{pfeiffer2023}. Note that this requires a feasible `resting' goal point, i.e. the robot can physically remain at this point until the end of the control horizon~$T$.

The~\ref{eq:oc} hierarchy with $p=6$ is given in Tab.~\ref{tab:adtoc}. It is composed of state and control limits, explicit inverse Euler integrated dynamics, time-optimal control $f_{\text{dyn,adtoc}}$ and finally momentum time evolution $\dot{h}$, joint velocity, angle and torque regularization. The control horizon is chosen as $T=20$ such that the number of variables is $n=361$. The control time step is $\Delta t=0.01$~s.

The results in Tab.~\ref{tab:adtoc} show S-HLSP convergence in 4.8~s and within 110 and 353 iterations for {\hbox{$\mathcal{N}$\hspace{-2pt}ADM$_2$}} and {\hbox{$\mathcal{N}$\hspace{-1pt}IPM$_2$}}, respectively. This is in contrast to H-MOSEK which delivers a time-optimal solution in only 72.1~s. H-GUROBI fails to resolve the time-optimal control $f_{\text{dyn,adtoc}}$ and converges quickly due to SOI activation of the dynamics constraints (which occupies most of the variables such that lower levels show little variable activity).

Fig.~\ref{fig:adtocdata} shows how the projector based solvers  {\hbox{$\mathcal{N}$\hspace{-2pt}ADM$_2$}} and $\mathcal{N}$\hspace{-1pt}IPM$_2$ both solve the single S-HLSP iterations the fastest with run-times of around $0.05$~s and $0.02$~s (note that H-GUROBI only resolves the hierarchy up to the dynamics constraints due to SOI activation). In comparison, H-MOSEK solves the HLSP sub-problems in around $0.2$~s. This is due to the significantly lower number of non-zeros throughout the hierarchy for  {\hbox{$\mathcal{N}$\hspace{-2pt}ADM$_2$}} and $\mathcal{N}$\hspace{-1pt}IPM$_2$ (20000, about half as many as for H-MOSEK). 

Figure~\ref{fig:ur3eUE} shows the resulting  joint torques normalized by their limits (lower graph) for  {\hbox{$\mathcal{N}$\hspace{-2pt}ADM$_2$}}. A time-optimal bang-bang control profile (controls at their limits) can clearly be distinguished. This leads to a sharp drop-off of the task error $f_{\text{ef,adtoc}}$ at around control iteration 12 (upper graph).

\subsection{Swing-up of inverted pendulum}
\label{sec:eval:invpend}

\begin{table*}[htp!]
	\centering
	\resizebox{\columnwidth}{!}	{%
		\begin{tabular}{@{} cccccccccc @{}}  
			\toprule
			& &\multicolumn{2}{c}{$\mathcal{N}$\hspace{-2pt}ADM$_2$} (19.7 s) &\multicolumn{2}{c}{$\mathcal{N}$\hspace{-1pt}IPM$_2$} (4.1 s) &  \multicolumn{2}{c}{H-MOSEK} (11.8 s) & \multicolumn{2}{c}{H-GUROBI} (20.3 s)\\ 			
			\cmidrule(lr){3-4}	\cmidrule(lr){5-6}	 \cmidrule(lr){7-8}\cmidrule(lr){9-10}
			$l$ & $f_l(x) \leqq v_l$ & $\Vert v_l^* \Vert_2$ & Iter. & $\Vert v_l^* \Vert_2$ & Iter. & $\Vert v_l^* \Vert_2$ & Iter.& $\Vert v_l^* \Vert_2$ & Iter.\\
			\midrule
			1 & $\vert q\vert \leq \overline{q}$, $\vert F_{cart}\vert\leq \overline{F}$, $\vert\tau^*\vert\leq0$ (only $\mathcal{N}$\hspace{-2pt}ADM$_2$ and $\mathcal{N}$\hspace{-1pt}IPM$_2$) & $6.6\cdot 10^{-2}$ & 1 & 0 & 1 & $1.7\cdot 10^{-6}$  & 33 &  $5.0\cdot 10^{-5}$ & 1\\
			2 &$f_{\text{dyn}}(x) = v_2$ & $2.9\cdot 10^{-5}$ & 2 & $4.8\cdot 10^{-7}$ & 81 & $2.8\cdot 10^{-8}$ & 129 &  $2.0\cdot 10^{-8}$ & 152\\
			3 & $f_{\text{ef}}(q) = v_3$ & 3.90 & 371 & 3.19 & 78 & 6.0 & 2 & 3.2 & 1\\
			4 & $\BIN q & \dot{q}\BOUT^T=v_5$ & 268.6 & 1 & 263.0 & 22 & 175.2 & 2 & 274.2 & 18\\
			5 & $F_{cart}=v_6$ &  683.5 & 1 & 600.1 & 1 & 416.7 & 1 & 577.9 & 74\\
			\midrule
			$\Sigma$ & & & 378 & & 184 & & 168 & & 247\\
			\bottomrule
	\end{tabular}}
	\caption{Inverted pendulum swing-up: optimal slacks $v^*$ and number of outer iterations (Iter.) per priority level for a~\ref{eq:nlhlsp} with $p=5$ and $n=375$ ($n^*=450$ for  {\hbox{$\mathcal{N}$\hspace{-2pt}ADM$_2$}} and $\mathcal{N}$\hspace{-1pt}IPM$_2$).}
	\label{tab:invpend}
\end{table*}

\begin{figure}[htp!]
	\includegraphics[width=0.7\columnwidth]{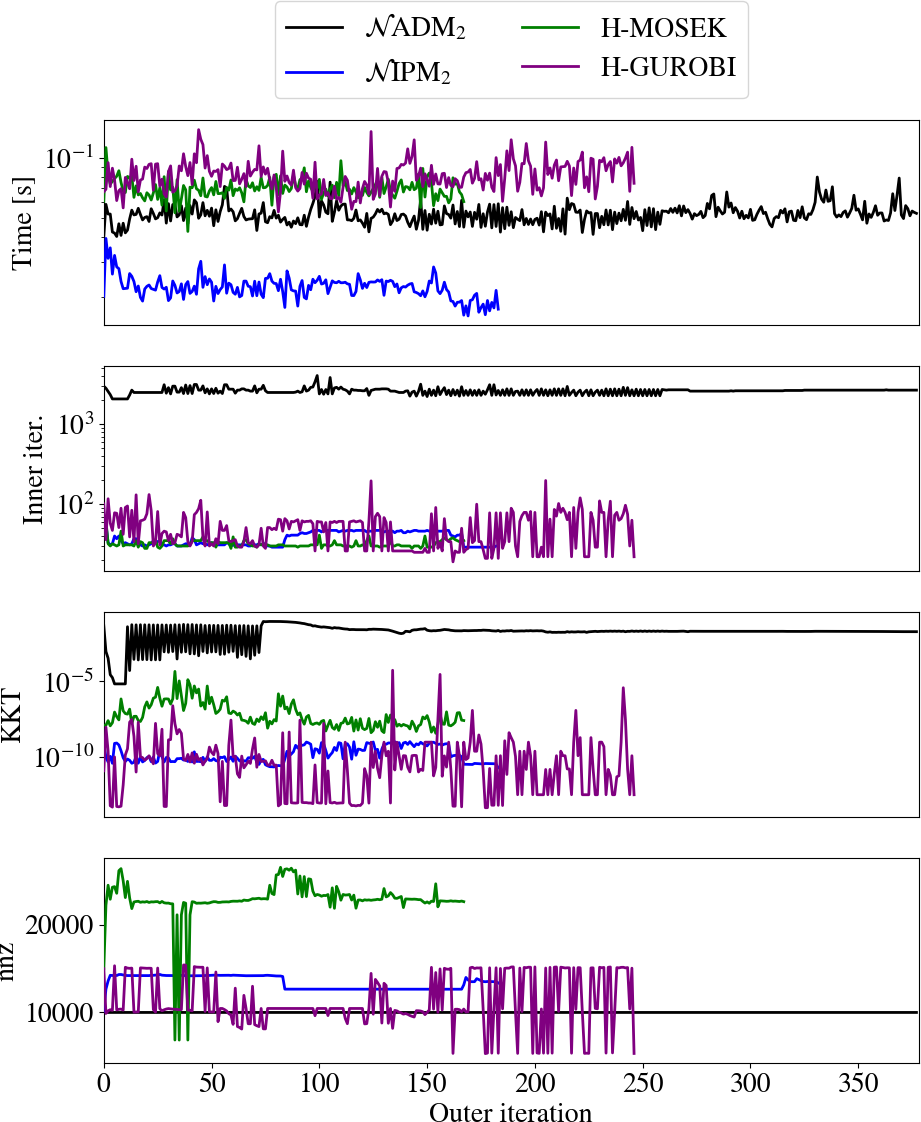}
	\centering
	\caption{Inverted pendulum swing-up, data for the different HLSP sub-solvers over S-HLSP outer iteration: computation times per HLSP solve, number of inner iterations, KKT residuals and overall number of non-zeros handled throughout the whole hierarchy.}
	\label{fig:invpend}
\end{figure}

\begin{figure}[htp!]
	\includegraphics[width=0.6\columnwidth]{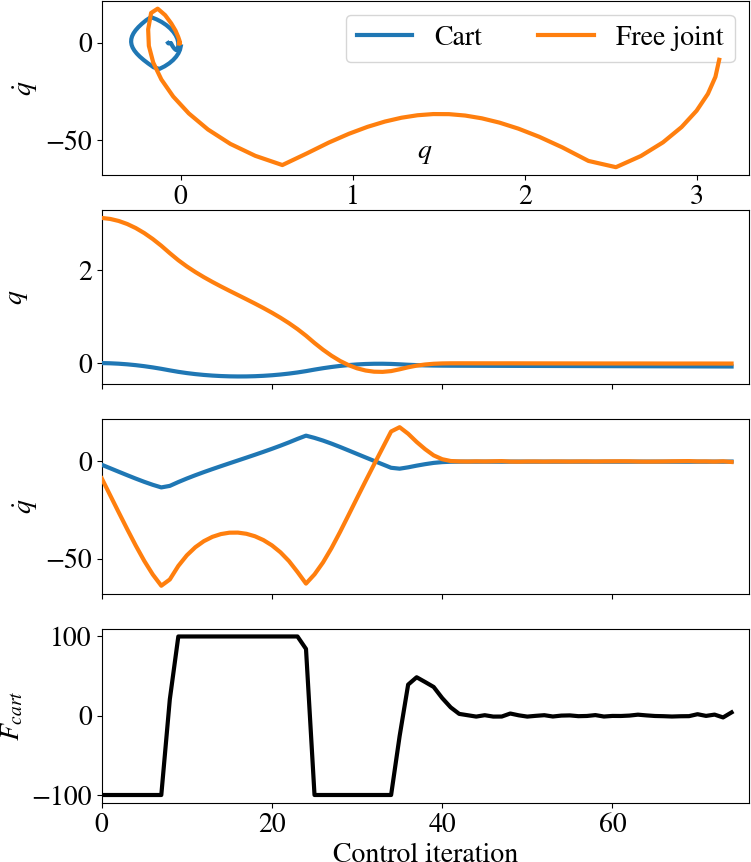}
	\centering
	\caption{Inverted pendulum swing-up, $\mathcal{N}$\hspace{-1pt}IPM$_2$: position and velocity $q$ and $\dot{q}$ of the cart and the freely swinging pendulum, force $F_{cart}$ applied to the cart.}
	\label{fig:invpendqdqf2}
\end{figure}

In this two-dimensional simulation setting, we aim to compute a swing up motion of a freely rotating pendulum mounted to a horizontally moving cart~\cite{Cavdaroglu2008}. The control input is given by the force $F_{cart}$ (limit $\overline{F} = 100~N$) applied horizontally to the cart of weight 0.1~kg. The pendulum is of length 0.25~m and of mass 0.1~kg. The coordinates $q = \BIN q_1 & q_2\BOUT^T = \BIN 0 & \pi\BOUT^T$ describe the horizontal cart position $q_1$ and the pendulum angle $q_2$. $q_2 = 0$ corresponds to the upright pendulum 	position. 
The planning horizon is $T=75$ ($\Delta t = 0.0025$~s) with $n=375$ ($n^*=450$ according to Sec.~\ref{sec:tbua} with under-actuation $n_{ua}=1$, for $\mathcal{N}$ADM$_2$ and $\mathcal{N}$\hspace{-1pt}IPM$_2$).

The hierarchy of this trajectory optimization problem is given in Tab~\ref{tab:invpend}. On the first two levels, joint angle and torque limits and the dynamics equations integrated by the explicit Euler method are defined. The third level contains a positioning task where the desired Cartesian position of the pendulum tip is set to 0~m and 0.5~m for the horizontal and vertical axis, respectively. 

The solver data is given in Tab.~\ref{tab:invpend} and Fig.~\ref{fig:invpend}. $\mathcal{N}$\hspace{-1pt}IPM$_2$ is able to compute a swing-up motion in 4.1~s. The corresponding robot values are depicted in Fig.~\ref{fig:invpendqdqf2}. It can be seen from the bottom graph that $F_{cart}$ exhibits slight jittering at the upright position from control iteration 45 onwards. This is due to SOI approximations (linear Lagrange multipliers, regularization, numerical errors, ...).
A slightly worse solution is delivered by H-GUROBI (position task error of $\Vert v_3\Vert_2^2 = 3.2$ instead of $\Vert v_3\Vert_2^2 = 3.19$ for $\mathcal{N}$\hspace{-1pt}IPM$_2$). The computation time is significantly longer at 20.3~s and 247 outer iterations. This is due to the higher number of non-zeros handled throughout the hierarchy and confirms the efficiency of the turnback algorithm in the under-actuated case. Both solvers $\mathcal{N}$ADM$_2$ and H-MOSEK struggle to resolve level 3 of the HLSP sub-problem (see graph of KKT norm in Fig.~\ref{fig:invpend}), giving rise to worse convergence in the corresponding NL-HLSP ($\Vert v_3^*\Vert_2^2 = 3.9$ for $\mathcal{N}$ADM$_2$ and $\Vert v_3^*\Vert_2^2 = 6.0$ for H-MOSEK; note that we increased the maximum number of inner iterations to 2000 for $\mathcal{N}$ADM$_2$).

\subsection{Jump of robot dog Solo12}
\label{sec:eval:solo12}

\begin{table*}[htp!]
	\centering
	\resizebox{\columnwidth}{!}	{%
		\begin{tabular}{@{} cccccccccc @{}}  
			\toprule
			& &\multicolumn{2}{c}{$\mathcal{N}$\hspace{-2pt}ADM$_2$ (7.0 s)} & \multicolumn{2}{c}{$\mathcal{N}$\hspace{-1pt}IPM$_2$(12.2 s)}  & \multicolumn{2}{c}{H-MOSEK} (80.8 s) & \multicolumn{2}{c}{H-GUROBI} (7.4 s)\\ 			
			\cmidrule(lr){3-4}	\cmidrule(lr){5-6}	 \cmidrule(lr){7-8} \cmidrule(lr){9-10}
			$l$ & $f_l(x) \leqq v_l$ & $\Vert v_l^* \Vert_2$ & Iter. & $\Vert v_l^* \Vert_2$ & Iter. & $\Vert v_l^* \Vert_2$ & Iter. & $\Vert v_l^* \Vert_2$ & Iter.\\
			\midrule
			1 & $\vert q\vert \leq \overline{q}$, $\vert\tau\vert\leq \overline{\tau}$, $\vert\tau^*\vert\leq0$ (only $\mathcal{N}$\hspace{-2pt}ADM$_2$ and $\mathcal{N}$\hspace{-1pt}IPM$_2$), $\gamma_z\geq 0$  & $4.8\cdot 10^{-4}$ & 1 & $1.4\cdot 10^{-7}$ & 1 & $0$ & 1 & $4.7\cdot 10^{-5}$ & 33 \\
			2 &$f_{\text{dyn}}(x) = v_2$ & $1.2\cdot 10^{-8}$ & 5 & $2.8\cdot 10^{-6}$ & 2 & $4.8\cdot 10^{-9}$& 55 & $3.1\cdot 10^{-7}$ & 1 \\
			3 & $\sqrt{\gamma_x^2 + \gamma_y^2}\leq \mu \gamma_z$  & $1.0\cdot 10^{-5}$ & 4 & 0 & 28 & 0 & 30 & $1.4\cdot 10^{-8}$ & 1\\
			4 & ${f_{\text{ef}}}(q) = v_{3,1}$ & $4.9\cdot 10^{-3}$ & 6 & $5.2\cdot 10^{-4}$ & 12 & $2.5\cdot 10^{-4}$  & 30 & $1.8\cdot 10^{-2}$ & 3 \\
			& $10^{-3}\cdot q_{act} = v_{3,2}$ & $2.1\cdot 10^{-3}$ & & $4.4\cdot 10^{-4}$ & & $1.7\cdot 10^{-3}$ & & $2.4\cdot 10^{-3}$\\
			5 & $\dot{h}(x) = v_4$ & 83.0 & 35 & 130.5 & 14 & 176.7 & 51 & 89.5 & 3 \\
			6 & $\BIN q^T&\dot{q}^T\BOUT^T=v_5$ & 49.0 & 1 & 36.8 & 4 & 87.3 & 5 & 38.6 & 1 \\
			7 & $\BIN\tau^T & \gamma^T\BOUT=v_6$ & 36.1 & 1 & 108.1 & 12 & 315.5 & 7 & 19.0 & 1 \\
			\midrule
			$\Sigma$ & & & 54 & & 74 & & 180 & & 44\\
			\bottomrule
	\end{tabular}}
	\caption{Solo12 jump: optimal slacks $v^*$ and number of outer iterations (Iter.) per priority level for a~\ref{eq:nlhlsp} with $p=6$ and $n=900$ ($n^*=990$ for  {\hbox{$\mathcal{N}$\hspace{-2pt}ADM$_2$}} and $\mathcal{N}$\hspace{-1pt}IPM$_2$).}
	\label{tab:solo12}
\end{table*}

\begin{figure}[htp!]
	\includegraphics[width=0.7\columnwidth]{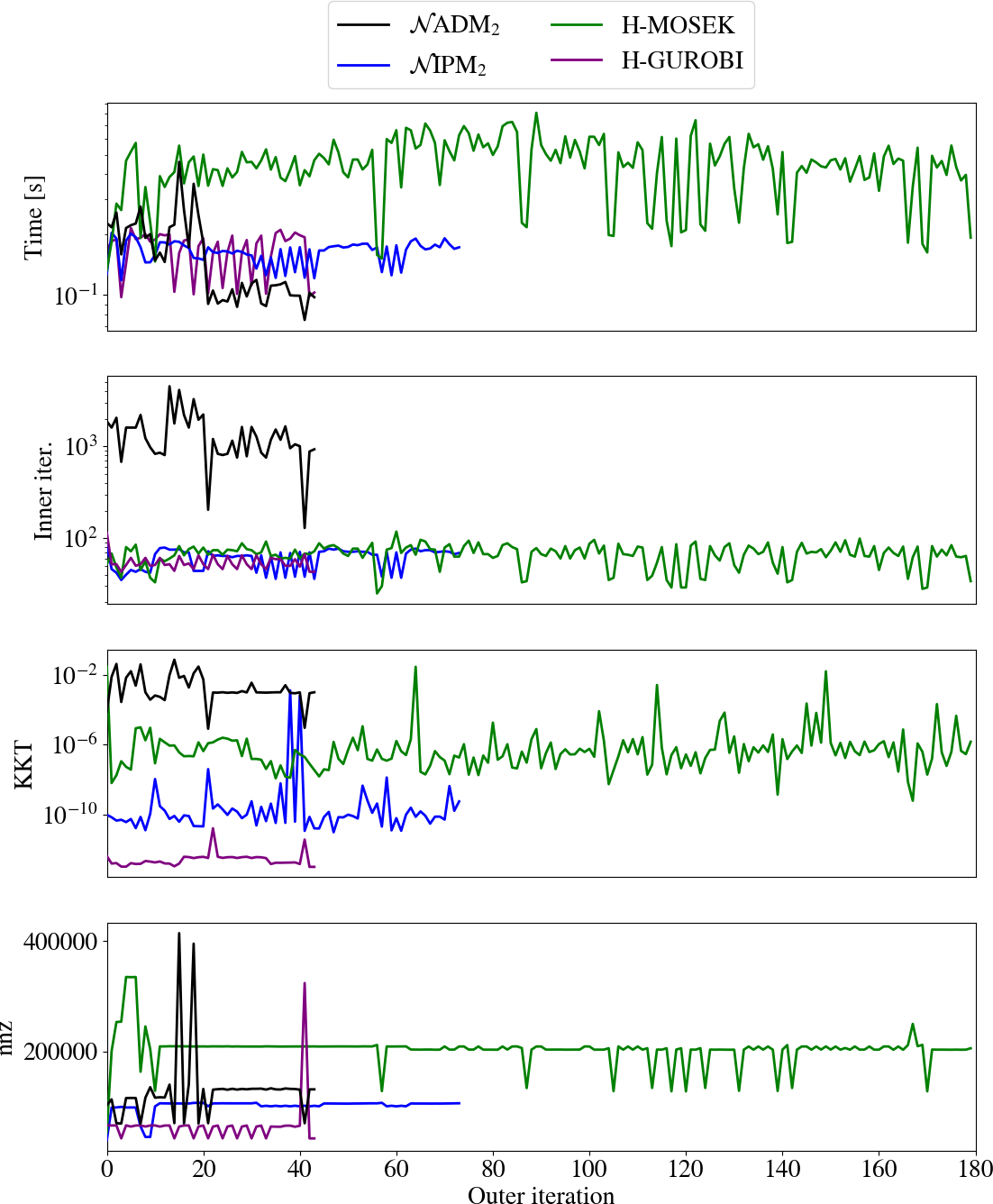}
	\centering
	\caption{Solo12 jump, data for the different HLSP sub-solvers over S-HLSP outer iteration: computation times per HLSP solve, number of inner iterations, KKT residuals and overall number of non-zeros handled throughout the whole hierarchy.}
	\label{fig:solo12data}
\end{figure}

\begin{figure}[htp!]
	\includegraphics[width=0.4\columnwidth]{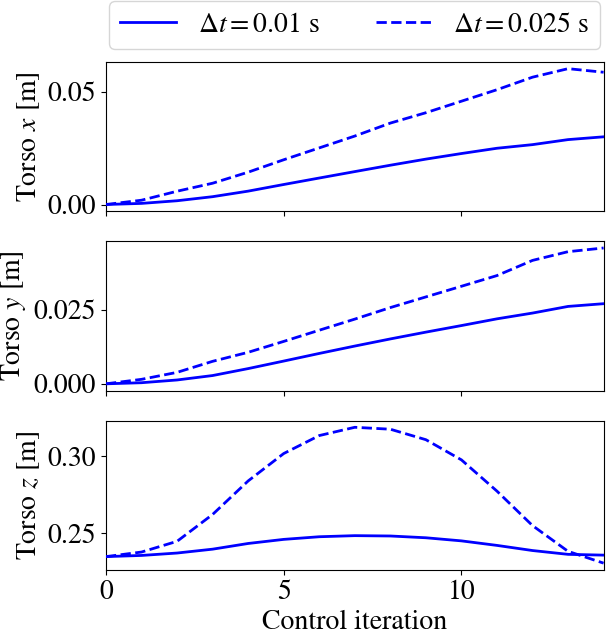}
	\centering
	\caption{Solo12 jump, $\mathcal{N}$\hspace{-1pt}IPM$_2$, torso trajectory, dashed for $\Delta t = 0.025$~s.}
	\label{fig:solo12x}
\end{figure}

\begin{figure}[htp!]
	\includegraphics[width=0.5\columnwidth]{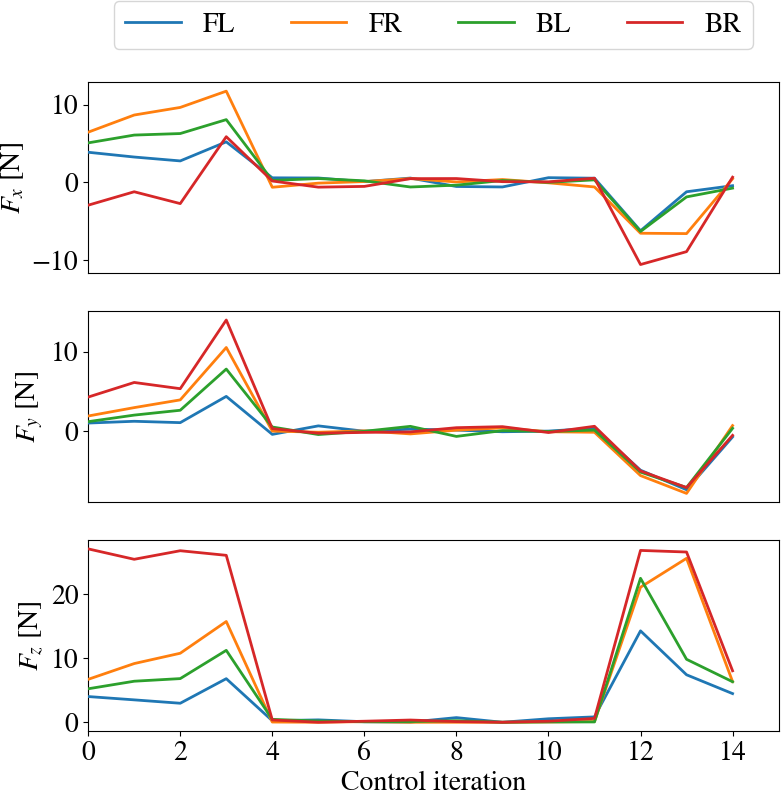}
	\centering
	\caption{Solo12 jump, $\mathcal{N}$\hspace{-1pt}IPM$_2$, contact forces.}
	\label{fig:solo12f}
\end{figure}

In this example, we compute a whole-body jumping motion of the robot dog Solo12 over a horizon of $T=15$ with $\Delta t=0.01$~s. The base of the robot is not actuated ($n_{ua}=6$). We use the turnback algorithm for under-actuated systems as described in Sec.~\ref{sec:tbua}. This effectively increases the number of variables from $n=900$ to $n^*=990$.

The control hierarchy is given in Tab.~\ref{tab:solo12}. Joint, torque and contact friction cone ($\mu=1$) constraints enforce robot safety and physicality. The jumping motion is enforced by removing contacts from the dynamics equation $f_{\text{dyn}}$ in the time interval from $t=4$ to $t=13$. The contacts $f_{\text{ef}}$ after landing are shifted by $\BIN 0.05 & 0.05 & 0\BOUT$~m compared to the initial stance. On the same level, we add a posture regularization task with low weight on the actuated robot joints. Furthermore, the evolution of the angular momentum $\dot{h}$ is regularized to zero. This promotes a stable flight phase. 

The results are given in Fig.~\ref{fig:solo12data}. Our solver $\mathcal{N}$\hspace{-2pt}ADM$_2$ (7.0 s, 54 outer iterations) solves the NL-HLSP faster than  $\mathcal{N}$\hspace{-1pt}IPM$_2$ (12.2~s, 74 outer iterations), H-MOSEK (80.8~s, 180 outer iterations) and H-GUROBI (7.4 s, 44 outer iterations). However, the HLSP solve times fluctuate with SOI activations which introduce a large number of non-zeros due to the SOI of the dynamics equations (see nnz peaks in bottom graph of Fig.~\ref{fig:solo12data}).
This could for example be avoided by generating strictly dynamically feasible outer iterates, which has been proposed in a SQP trust-region method~\cite{Tenny2004}.
In contrast, SOI activations of the dynamics constraints do not occur for $\mathcal{N}$\hspace{-1pt}IPM$_2$, which can be contributed to the high-accuracy nature of the solver.  
Combined with the lower number of iterations ($\sim \hspace{-2pt}200$ times lower than the ones of $\mathcal{N}$\hspace{-2pt}ADM$_2$), the HLSP sub-problem solve times are limited to under 0.2~s. The HLSP times for H-GUROBI are competitive.. The solver is efficiently warm-started by the primal of the full-rank level 5, such that levels below converge with zero iterations. In comparison, both the solvers $\mathcal{N}$\hspace{-2pt}ADM$_2$ and $\mathcal{N}$\hspace{-1pt}IPM$_2$ exhibit significantly faster HLSP solve times than H-MOSEK, which resolves every level of the hierarchy. This emphasizes the advantage of projection based methods for active constraints elimination in order to resolve sparse problems with lower number of non-zeros. This also indicates the high sparsity introduced by the turnback nullspace for under-actuated systems with virtual controls as described above.

The high accuracy solver $\mathcal{N}$\hspace{-1pt}IPM$_2$ achieves the greatest error reduction consistently throughout the priority levels, see Tab.~\ref{tab:solo12}. The corresponding robot torso trajectory and contact forces are depicted in Fig.~\ref{fig:solo12x} and Fig.~\ref{fig:solo12f}, respectively. It can be observed that for a shorter time horizon of 0.14~s, the posture task is infeasible and the torso is only moved around 0.025~m into the desired direction. The robot feet however are translated to their desired position, as can be seen from the corresponding error reduction of $f_{\text{ef}}$ in Tab.~\ref{tab:solo12}. With a longer time horizon of 0.35~s with $\Delta t=0.025$~s, the robot manages a larger torso transfer and shifts its body by the desired amount of 0.05~m.

\section{Conclusion}

In this article, we proposed several tools for efficiently solving prioritized trajectory optimization problems in robot motion planning.
We designed a threshold adaptation strategy in order to appropriately activate or deactivate SOI. This promotes optimality of the NL-HLSP and numerical stability when solving its HLSP approximation.  
We proposed the ADMM solver {\hbox{$\mathcal{N}$\hspace{-2pt}ADM$_2$}} for efficiently solving HLSP's. It is based on a reduced Hessian formulation with nullspace basis projections of active constraints. We directed our attention to problems of block-diagonal structure arising from optimal control formulations. Accordingly, we designed a sparsity leveraging turnback nullspace basis of upper bounded bandwidth for dynamics discretized by Euler integration. 

The proposed HLSP solver's efficiency was demonstrated within the S-HLSP framework to solve NL-HLSP's. The reduced Hessian formulation significantly reduces the number of non-zeros handled throughout the hierarchy with a sufficient number of priority levels. With a limited number of inner iterations, this enables a fast search for an optimal point of lower accuracy. 
We showed how such a point can be used to warm-start S-HLSP to find a high accuracy solution. Applicability to \ref{eq:oc} for robot scenarios with multi-stage constraints was demonstrated for a fully-actuated and under-actuated robots. In the latter case, we showed how the high sparsity of the turnback nullspace for fully-actuated robots can be maintained in the case of under-actuation. The SOI threshold adaptation strategy was shown to adjust to constraint infeasbility even from a far off starting point. 

Our method sparsely and once and for all eliminates the dynamics constraints from the HLSP sub-problems. In future work, we aim to extend the turnback algorithm to higher-order integration methods. Furthermore, as a non-recursive method, our method is associated with a high memory footprint as all control time-steps need to be handled in one big solution system. We therefore aim to implement recursive formulations of our method to achieve higher solver maturity, for example based on DDP principles. While this is associated with low bandwidth and high degree of sparsity, special attention needs to be paid with respect to multi-stage constraints and high number of priority levels, as the recursions need to be computed for every level. 

\section*{Funding}
This work is partly supported by the Schaeffler Hub for Advanced Research at Nanyang
Technological University, under the ASTAR IAF-ICP Programme ICP1900093. This work is
partly supported by the Research Project I.AM. through the European Union H2020 program
(GA 871899).

\section*{Conflict of interest disclosure}
The authors have no relevant financial or non-financial interests to disclose.

\section*{Data availability}
Data generated by our algorithms S-HLSP and the HLSP sub-problem solvers are available from the corresponding author on request.


\bibliography{../bib.bib}

\end{document}